\theoremstyle{plain}
\newtheorem{theorem}{Theorem}[section]
\newtheorem{proposition}[theorem]{Proposition}
\newtheorem{lemma}[theorem]{Lemma}
\theoremstyle{definition}
\newtheorem{definition}[theorem]{Definition}
\theoremstyle{remark}
\crefname{equation}{Eq.}{Eqs.}
\definecolor{bleu}     {RGB}{ 49,140,231}
\definecolor{cardinal} {RGB}{196, 30, 58}
\definecolor{emerald}  {RGB}{ 80,200,120}
\definecolor{lightgrey}{RGB}{230,230,230}
\definecolor{grey}     {RGB}{180,180,180}
\newcommand{\framework}{\textsc{Rings}\xspace}
\newcommand{\ourdoi}{\href{https://doi.org/10.5281/zenodo.15547322}{10.5281/zenodo.15547322}\xspace}
\newcommand{\oururl}{\href{https://github.com/aidos-lab/rings}{https://github.com/aidos-lab/rings}\xspace}
\newcommand{\metricspace}{\ensuremath{D}\xspace}
\newcommand{\toyspace}{\ensuremath{Y}\xspace}
\newcommand{\reals}      {\mathds{R}\xspace}
\DeclareMathOperator{\comp}        {\mathcal{C}}
\DeclareMathOperator{\dist}        {d}
\newcommand{\features}{\ensuremath{X}\xspace}
\newcommand{\graph}{\ensuremath{G}\xspace}
\DeclareMathOperator{\complementarity}{\gamma}
\DeclareMathOperator{\Separability}{\xi^Z_{\TestMetric}\xspace}
\DeclareMathOperator{\onepointspace}{\{ x\}\xspace}
\DeclareMathOperator{\allones}{\mathbf{1}\xspace}
\DeclareMathOperator{\I}{\mathbf{I}\xspace}
\DeclareMathOperator{\allzeros}{\mathbf{0}\xspace}
\DeclareMathOperator{\PerturbGraph}{\varphi\xspace}
\DeclareMathOperator{\PerturbDataset}{\varphi \xspace}
\DeclareMathOperator{\id}{\PerturbGraph_{o}\xspace}
\DeclareMathOperator{\ef}{\PerturbGraph_{ef}\xspace}
\DeclareMathOperator{\eg}{\PerturbGraph_{eg}\xspace}
\DeclareMathOperator{\alledges}{\scalebox{0.85}{$\left( \genfrac{}{}{0pt}{}{V}{2} \right)$}}
\DeclareMathOperator{\cf}{\PerturbGraph_{cf}\xspace}
\DeclareMathOperator{\cg}{\PerturbGraph_{cg}\xspace}
\DeclareMathOperator{\rf}{\PerturbGraph_{rf}\xspace}
\DeclareMathOperator{\rg}{\PerturbGraph_{rg}\xspace}
\DeclareMathOperator{\RandomizeFeatures}{\mathcal{R}_{F}\xspace}
\DeclareMathOperator{\RandomizeStructure}{\mathcal{R}_{S}\xspace}
\DeclareMathOperator{\TunedModel}{\mathcal{M}\xspace}
\DeclareMathOperator{\PerturbModel}{\TunedModel_{\PerturbGraph}\xspace}
\DeclareMathOperator{\Architecture}{\mathcal{A}\xspace}
\DeclareMathOperator{\TrainData}{\mathcal{D}\xspace}
\DeclareMathOperator{\params}{\theta\xspace}
\DeclareMathOperator{\TestMetric}{f\xspace}
\DeclareMathOperator{\TrainConditions}{\zeta\xspace}
\DeclareMathOperator{\ModePerformance}{\TestMetric(\PerturbModel; \TrainConditions)\xspace}
\DeclareMathOperator{\CompareDist}{\mathbb{D}\xspace}
\DeclareMathOperator{\Lift}{\mathcal{L}\xspace}
\DeclareMathOperator{\StructureDist}{\dist_{S}\xspace}
\DeclareMathOperator{\FeatureDist}{\dist_{F}\xspace}
\DeclareMathOperator{\DiffusionDist}{\dist_{S_t}\xspace}
\DeclareMathOperator{\DiffusionMap}{\Psi_t\xspace}
\DeclareMathOperator{\DiffusionDim}{s\xspace}
\makeatletter \tikzstyle{notestyleraw}=[
\icmltitlerunning{No Metric to Rule them All: Toward Principled Evaluations of Graph-Learning Datasets}
\begin{document}

\twocolumn[
\icmltitle{No Metric to Rule Them All: \\
Toward Principled Evaluations of Graph-Learning Datasets}
\icmlsetsymbol{first}{*}

\begin{icmlauthorlist}
\icmlauthor{Corinna Coupette}{first,aalto,mpi}
\icmlauthor{Jeremy Wayland}{first,helmholtz,tum}
\icmlauthor{Emily Simons}{helmholtz,tum}
\icmlauthor{Bastian Rieck}{helmholtz,tum,fri}
\end{icmlauthorlist}

\icmlaffiliation{helmholtz}{Helmholtz Munich, Germany}
\icmlaffiliation{tum}{TU Munich, Germany}
\icmlaffiliation{fri}{University of Fribourg, Switzerland}
\icmlaffiliation{aalto}{Aalto University, Finland}
\icmlaffiliation{mpi}{Max Planck Institute for Informatics, Germany}

\icmlcorrespondingauthor{Corinna Coupette}{corinna.coupette@aalto.fi}

\icmlkeywords{Machine Learning, ICML}

\vskip 0.3in
]

 \printAffiliationsAndNotice{\icmlEqualContribution} 

\begin{abstract}
	Benchmark datasets have proved pivotal to the success of graph learning, and \emph{good} benchmark datasets are crucial to guide the development of the field. 
Recent research has highlighted problems with graph-learning datasets and benchmarking practices—revealing, for example, that methods which ignore the graph structure can outperform graph-based approaches. 
Such findings raise two questions: (1) What makes a good graph-learning dataset, and (2) how can we evaluate dataset quality in graph learning?
Our work addresses these questions. 
As the classic evaluation setup uses datasets to evaluate models, 
it does not apply to dataset evaluation.
Hence, we start from first principles. 
Observing that graph-learning datasets uniquely combine two modes—graph structure and node features—, we introduce \textsc{Rings}, a flexible and extensible \emph{mode-perturbation framework} to assess the quality of graph-learning datasets based on \emph{dataset ablations}—i.e., 
quantifying differences between the original dataset and its perturbed representations. 
Within this framework, we propose two measures—\emph{performance separability} and \emph{mode complementarity}—as evaluation tools, each assessing the capacity of a graph dataset to benchmark the power and efficacy of graph-learning methods from a distinct angle. 
We demonstrate the utility of our framework for dataset evaluation via extensive experiments on graph-level tasks and derive actionable recommendations for improving the evaluation of graph-learning methods. 
Our work opens new research directions in data-centric graph learning, and it constitutes a step toward the systematic \emph{evaluation of evaluations}. 
 \end{abstract}

\section{Introduction}
\label{sec:introduction}

Over the past decade, graph learning has established itself as a prominent approach to making predictions from relational data, 
with remarkable success in areas from small molecules \cite{stokes2020deep,fang2022geometry} to large social networks \cite{ying2018graph,sharma2024recommenders}.
Despite significant progress on the theory of graph neural networks \cite{morris2024position}, however, 
many empirical intricacies of graph-learning tasks, models, and datasets remain poorly understood. 
For example, 
recent research has revealed that
\begin{inparaenum}[(1)]
	\item purported performance gaps disappear with proper hyperparameter tuning \cite{Tonshoff2023}, 
	\item popular graph-learning datasets occupy a very peculiar part of the space of all possible graphs \cite{palowitch2022graphworld}, 
	\item some graph-learning tasks can be solved without using the graph structure \cite{Errica20a}, and
	\item graph-learning models struggle to ignore the graph structure when the features alone are sufficiently informative for the task at hand \cite{Bechler-Speicher23a}. 
\end{inparaenum}
These findings suggest a need for better infrastructure to assess graph-learning methods, 
supporting rigorous evaluations that paint a realistic picture of the progress made by the community. 

\textbf{Necessity and Challenges of Dataset Evaluation.} Benchmark datasets play a key role in the evaluation of graph-learning methods, 
but the results cited above highlight that not all (collections of) graphs are equally suitable for that purpose. 
This motivates us to \emph{flip the script} on graph-learning evaluation, 
asking how well graph-learning datasets can characterize the capabilities of graph-learning methods, 
rather than how well these methods can solve tasks on graph-learning datasets. 
Our work is guided by two questions:
\begin{enumerate}[nosep,label=\bfseries Q\arabic*]
	\item What characterizes a good graph-learning dataset?
	\item How can we evaluate dataset quality in graph learning?
\end{enumerate}
Addressing these questions is not straightforward. 
First, the classic evaluation setup, which compares performance across models while \emph{holding the dataset constant}, cannot be used to evaluate datasets. 
Second, comparing performance levels across datasets while \emph{holding the model constant} yields measurements that are confounded by model capabilities. 
Third, while performance levels indicate the \emph{difficulty} of a dataset for existing methods, 
these levels provide little information about dataset \emph{quality}:  
A difficult dataset of high quality may guide the field toward methodological innovation, 
but a difficult dataset of low quality may detract from real progress. 
Hence, our work starts from first principles. 

\textbf{Desirable Properties of Graph-Learning Datasets.} 
We observe that attributed graphs combine two types of information,
the \emph{graph structure} and the \emph{node features}. 
Graph-learning methods leverage both of these \emph{modes} to tackle a given learning task.\footnote{Notably, to be amenable to graph learning, even non-attributed graphs need to be assigned node features (e.g., one-hot encodings).
}
This suggests the following \emph{desirable property} for a dataset to reveal powerful insights into the capabilities of graph-learning methods, given a specific task:
\begin{enumerate}[nosep,label=\bfseries P0]
	\item The graph structure and the node features should contain \emph{complementary} \emph{task-relevant} information. \label{item:p0}
\end{enumerate}
Assessing whether this property is present poses theoretical and practical challenges---not only due to the limitations of existing graph-learning methods but also because the relationship between task relevance and complementarity is potentially complicated. 
However, we can identify the following \emph{necessary conditions} for \ref{item:p0} to be satisfied:
\begin{enumerate}[nosep,label=\bfseries P\arabic*]
	\item The graph structure and the node features should both contain \emph{task-relevant} information. \label{item:p1}
	\item The graph structure and the node features should contain \emph{complementary} information. \label{item:p2}
\end{enumerate}
Notably, while \ref{item:p1} is \emph{task-dependent}, 
\ref{item:p2} is \emph{task-independent}.

\textbf{Principled Evaluations via Mode Perturbations.} 
Both \ref{item:p1} and \ref{item:p2} address the \emph{relationship between the different modes} of a graph-learning dataset. 
Therefore, 
to test datasets for these properties, 
we propose \framework (Relevant Information in Node features and Graph Structure), 
a dataset-evaluation framework based on the concept of \emph{mode perturbation}. 
As illustrated in \cref{fig:mode-overview}, a mode perturbation maps an attributed graph $(G,X)$ to an attributed graph $(G',X')$, 
replacing the original edge set or feature set with a modified version 
according to a given transformation (e.g., randomization). 
This allows us to make measurements on both $(G,X)$ and $(G',X')$. 
Given appropriate measures, 
the difference between the resulting measurements can then provide insights into \ref{item:p1} and \ref{item:p2}. 
In analogy to model ablations in the evaluation of graph-learning methods, 
mode perturbations can also be thought of as \emph{dataset ablations}.

\begin{figure}[t]
	\centering
	\includegraphics[width=\linewidth]{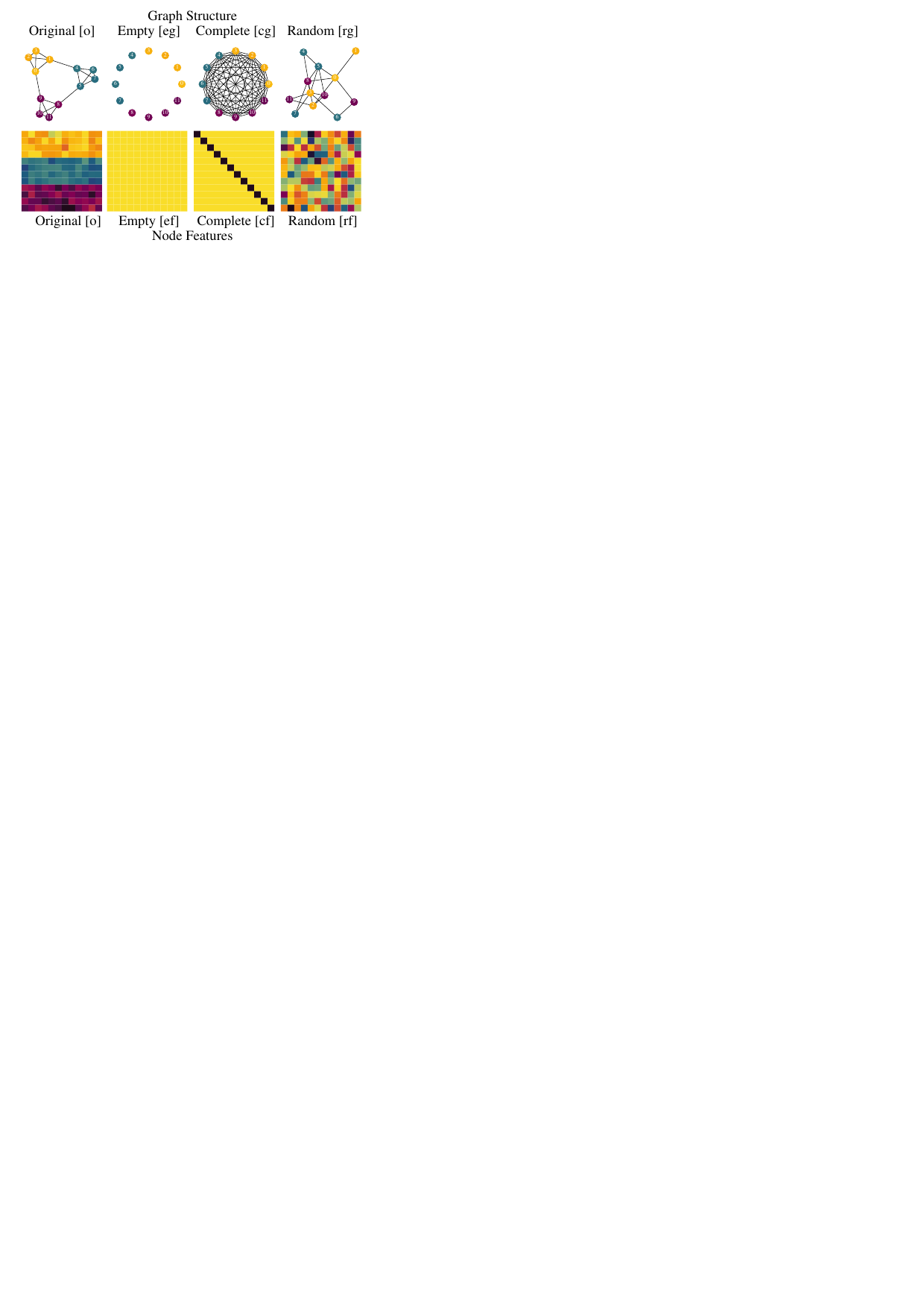}
	\caption{\textbf{Overview of our main mode perturbations.}
		Given a ring of cliques on $12$ nodes (top left panel) with $12$-dimensional node features (bottom left panel), we show our input data for the original and perturbed states of the graph structure (top row) and the features (bottom row). 
	}\label{fig:mode-overview}
\end{figure}

\textbf{Our Contributions.}
We make four main contributions:
\begin{enumerate}[nosep,label=\bfseries C\arabic*]
	\item \textbf{Framework.} We develop \framework, a flexible and extensible framework to assess the quality of graph-learning datasets by quantifying differences between the original dataset and its perturbed representations.  \label{item:c1}
	\item \textbf{Measures.} As part of \framework, 
	we introduce two measures for evaluating graph datasets based on mode perturbations: 
	\emph{performance separability}, addressing \ref{item:p1}, and \emph{mode complementarity}, addressing \ref{item:p2}.\label{item:c2}
	\item \textbf{Experiments.} We demonstrate the value of our framework and measures through extensive experiments, focusing on real-world datasets with graph-level tasks.\label{item:c3}
	\item \textbf{Recommendations.} Based on  \ref{item:c1}--\ref{item:c3}, we derive concrete recommendations for improving benchmarking and evaluation practices in graph learning. \label{item:c4}
\end{enumerate}
Our work opens new research directions in data-centric graph learning, 
and it constitutes a first step toward our long-term vision of enabling \emph{evaluations of evaluations}:
systematic assessments of the quality of evidence provided for the performance of new graph-learning methods.

\textbf{Structure.}
In \cref{sec:methods}, 
we formally introduce \framework, our mode-perturbation framework for evaluating graph-learning datasets, 
along with our proposed dataset quality measures. 
Having reviewed related work in \cref{sec:related}, 
we demonstrate the practical utility of our framework through extensive experiments in \cref{sec:experiments}.
We discuss conclusions, limitations, and avenues for future work in \cref{sec:discussion}.
Detailed supplementaries are provided in \cref{apx:theory,apx:experiments,apx:datasets,apx:related-work,apx:methods}.

\section{The \framework Framework}
\label{sec:methods}

After establishing our notation,
we develop \framework, 
our framework for evaluating graph-learning datasets. 
We do so in three steps, 
intuitively introducing and formally defining \emph{mode perturbations} (\ref{sec:mode-perturbations}), 
\emph{performance separability} (\ref{sec:performance-separability}), 
and \emph{mode complementarity} (\ref{sec:mode-complementarity}). 

\textbf{Preliminaries.}
We work with attributed graphs $(G,X)$, 
where $G=(V,E)$ is a graph with $n = |V|$ nodes and $m = |E|$ edges, 
$\features\subset\reals^{k}$ is the space of $k$-dimensional node features, 
and we assume w.l.o.g.\ that ${|\features| = n}$. 
For graph-level tasks, we have datasets ${\TrainData = \{G_1,\dots,G_N\}}$, 
where $N$ is the total number of graphs. 
Given a set $S$, $2^S$ is its power set, and $\binom{S}{\ell}$ is the set of all $\ell$-element subsets of $S$. 
Multisets are denoted by $\{\{\cdot\}\}$, and the set of positive integers no greater than $\ell$ is written as ${[\ell] = \{i\in\mathbb{Z}\mid 0 < i \leq \ell\}}$. 

\subsection{Mode Perturbations}\label{sec:mode-perturbations}

Given an attributed graph $(G,X)$, 
both data modes---i.e., the graph structure and the node features---are naturally associated with metric spaces that encode pairwise distances between nodes (i.e., distance matrices\footnote{
Deferring further details to \cref{apx:theory}, 
we note that for finite metric spaces, we have $(\toyspace,\dist) \cong \metricspace \in \reals^{n \times n}$, 
i.e., a finite matrix encoding pairwise distances between elements in $\toyspace$. 
We use $D$ in contexts that rely on matrix operations, and $(Y,\dist)$ to emphasize both the original space and the associated metric.
}). 
In our \framework framework, 
we modify these metric spaces to reveal information about the quality of graph-learning datasets. 
This idea is formalized in the notion of \emph{mode perturbation}.

\begin{restatable}[Mode Perturbation]{definition}{ModePerturbationDef}
A mode perturbation $\PerturbGraph$ is a map between attributed graphs
such that $\PerturbGraph\colon (\graph, \features) \mapsto (\graph',\features')$.
\end{restatable}

This definition is very general, allowing $\PerturbGraph$, inter alia, 
to act on \emph{both} the graph structure \emph{and} the node features.
For the purposes of understanding the connection between graph structure and node features in graph-learning datasets, 
however, 
we focus on mode perturbations that modify \emph{either} the graph structure \emph{or} the node features, 
as illustrated in \cref{fig:mode-overview}. 
We start by formalizing our \emph{feature perturbations}.

\begin{restatable}[Feature Perturbations]{definition}{FeaturePerturbsDef} \label{def:feature-perturbations}
Given an attributed graph $(G,X)$ on $n$ nodes with features $X\subset \reals^k$, we define:
	\begin{align*}
		\ef &: (G,\features) \mapsto (G,\allzeros_n)&\text{[empty features]}\\
		\cf &: (G,\features) \mapsto (G, \I_n)&\text{[complete features]}\\
		\rf &: (G,\features) \mapsto (G,\RandomizeFeatures(\features))&\text{[random features]}
	\end{align*}
Here, $\RandomizeFeatures:\reals^{n\times k} \to \reals^{n\times k' }$ randomizes the features $X$.
\end{restatable}

We can define a matching set of \emph{structural perturbations} by modifying our original edge set.
\begin{restatable}[Structural Perturbations]{definition}{StructurePerturbsDef} \label{def:structure-perturbations}
	Given an attributed graph $(G,X)$ with edge set $E$, we define:
	\begin{align*}
		\eg &: (G,X) \mapsto \left((V,\emptyset),X\right)&\text{[empty graph]}\\
		\cg &: (G,X) \mapsto \left(\left(V,\alledges\right),X\right)&\text{[complete graph]} \\
		\rg &: (G,X) \mapsto \left((V,\RandomizeStructure(E)),X\right)&\text{[random graph]}
	\end{align*}
Here, $\RandomizeStructure:2^{\binom{V}{2}} \to 2^{\binom{V}{2}}$ randomizes the edge set $E$. 
\end{restatable}

For consistency, we also define the \emph{original perturbation} $\varphi_{\text{o}}: (G,X)\mapsto (G,X)$ [original].

To modify entire \emph{collections} of graphs, 
we apply mode perturbations element-wise to all graphs in a given collection.

\begin{restatable}[Dataset Perturbation]{definition}{DatasetPerturbDef}\label{def:dataset-perturbation}
	Given a dataset $\TrainData$ of attributed graphs and a mode perturbation $\varphi$, a \emph{dataset perturbation} is given by 
	${\PerturbDataset(\TrainData) := \{ \PerturbGraph(G,X) : (G,X) \in \TrainData\}}$.
\end{restatable}

\subsection{Performance Separability} \label{sec:performance-separability}

By systematically applying mode perturbations to a dataset $\TrainData$, 
we create a \emph{set of datasets} that contains several different versions of $\TrainData$, 
capturing potentially interesting variation. 
This variation can be leveraged to investigate the properties derived in \cref{sec:introduction}. 
First addressing \ref{item:p1}, 
we now introduce \emph{performance separability} as a measure to assess the extent to which both the graph structure and the node features of an attributed graph contain task-relevant information. 

Intuitively, given two perturbations $\PerturbDataset$, $\PerturbDataset'$ of a dataset $\TrainData$ as well as a task to be solved on $\TrainData$, 
performance separability measures the \emph{distance between performance distributions} associated with $\PerturbDataset(\TrainData)$ and $\PerturbDataset'(\TrainData)$. 
For a formal definition, 
we need notation describing these distributions. 

\begin{restatable}[Tuned Model]{definition}{TunedModelDef} \label{def:tuned-model}
A tuned model $\TunedModel$ is a triple
	$\TunedModel\coloneqq (\TrainData,\Architecture,\params)$, 
where $\TrainData$ represents the dataset and associated task, $\Architecture$ is the architecture used during training, and $\params$ denotes the tuned parameters for the specific architecture. 
\end{restatable}

To elucidate the relationship between the graph structure and the node features as it pertains to performance, 
within \framework, 
we tune models not only on datasets $\TrainData$ but also on perturbed datasets $\PerturbDataset(\TrainData)$.  

\begin{restatable}[Tuned Perturbed Model]{definition}{PerturbedModelDef} \label{def:perturbed-model}
For a mode perturbation $\PerturbGraph$, 
$\PerturbModel \coloneqq (\PerturbDataset(\TrainData),\Architecture,\params)$
denotes a model tuned to solve the task associated with $\TrainData$ under mode perturbation~$\PerturbGraph$.	
\end{restatable}

The performance distributions underlying our notion of performance separability can then be defined as follows.

\begin{restatable}[Empirical Performance Distribution]{definition}{PerformanceDistDef} \label{def:evaluating-tuned-models}
For a tuned (perturbed) model $\PerturbModel$, 
given an evaluation metric $\TestMetric$ and a set of initialization conditions $Z$,
the \emph{empirical performance distribution} $P_{\PerturbGraph}^{\TestMetric,Z}$ of $\PerturbModel$ is the distribution associated with performance measurements
\begin{equation}
	\{\{\TestMetric(\PerturbModel; \TrainConditions) \mid \TrainConditions\in Z\}\}\;.
\end{equation} 
\end{restatable}

With \emph{performance separability}, 
we now enable pairwise comparisons between the performance distributions of models 
trained and evaluated on distinct perturbations of $\TrainData$. 

\begin{restatable}[Performance Separability]{definition}{SeparabilityDef} \label{def:performance-separability}
	Fix a dataset $\TrainData$, an evaluation metric $\TestMetric$,
	and initialization conditions $Z$,  
	and let $\PerturbGraph,\PerturbGraph'$ be mode perturbations. 
	We define the \emph{performance separability} $\Separability$ of $\PerturbGraph$ and $\PerturbGraph'$ as
\begin{equation}
	\Separability (\PerturbGraph,\PerturbGraph') \coloneqq \CompareDist(P_{\PerturbGraph}^{\TestMetric,Z},P_{\PerturbGraph'}^{\TestMetric,Z})\;,
\end{equation}
 where $\CompareDist$ is a method comparing distributions.
\end{restatable} 

In our experiments (\cref{experiments:separability}), 
we use the Kolmogorov-Smirnov (KS) statistic with permutation testing to instantiate $\Separability$.  
This allows us to assess whether the performance distributions associated with $\PerturbModel$ and $\mathcal{M}_{\PerturbGraph'}$ are \emph{significantly different}. 
To evaluate \ref{item:p1}, 
we can then interpret a lack of (statistical) performance separability between a model trained on the original data and a model trained on a mode perturbation as evidence 
that the perturbed mode does not contain (non-redundant) task-relevant information.

\subsection{Mode Complementarity} \label{sec:mode-complementarity} 

While performance separability allows us to assess dataset quality in a setting similar to traditional model evaluation, 
it has three main limitations.
First, it is task-dependent due to its focus on \ref{item:p1}, 
and thus risks underestimating datasets whose tasks are simply misaligned with the information contained in them.
Second, it is measured based on, and hence still to some extent confounded by, model capabilities.
And third, it is resource-intensive to compute. 
To address \ref{item:p2} and forego these limitations, 
we propose \emph{mode complementarity}. 

Intuitively, as illustrated in \cref{fig:metric-space-overview}, 
for an attributed graph $(G,X)$, 
mode complementarity measures the distance between a metric space constructed from the graph structure and a metric space constructed from the node features. 
To formalize this,
we define a process for constructing metric spaces from both modes in  $(G,X)$ using lift functions. 

\begin{restatable}[Metric-Space Construction]{definition}{LiftDef} \label{def:metric-space-construction}
For attributed graph $(G,X)$ and metric $\dist$, 
we construct metric spaces as 
\begin{equation}
	\Lift_{\dist} : (G,X) \mapsto (V,\dist)\,,
\end{equation}
i.e., lifts that take in either \emph{structure-based distances} arising from $G$ or \emph{feature-based distances} arising from $X$ and produce a metric space over the node set $V$.
\end{restatable}

\vspace*{-0.5em}In \framework, we combine these lifts with mode perturbations.

\begin{restatable}[Perturbed Metric Space]{definition}{PerturbMetricSpaceDef} \label{def:perturbed-metric-space}
Given an attributed graph $(G,X)$ and an associated metric $\dist$, the $\PerturbGraph$-perturbed metric space of $(G,X)$ under $\dist$ is
\begin{equation}
	\metricspace^{\PerturbGraph}_{\dist} \coloneqq \Lift_{\dist} \circ\PerturbGraph~(G,X)\;,
\end{equation}
which we construe as a distance matrix. 
\end{restatable}

\begin{figure}[t]
	\centering
	\includegraphics[width=\linewidth]{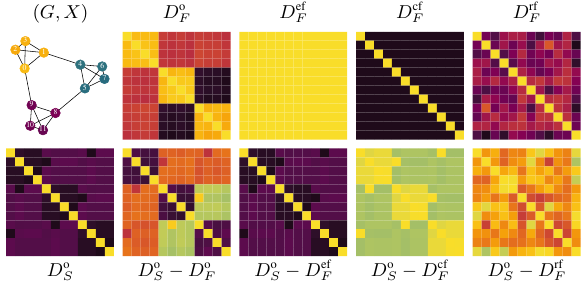}
	\caption{\textbf{Setup for mode-complementarity assessment.}
		For $(G,X)$ from \cref{fig:mode-overview}, 
		we show the metric spaces arising from original graph structure ($D_S^{\text{o}}$), original node features ($D_F^{\text{o}}$), and $3$ feature perturbations ($D_F^{\text{ef}}$, $D_F^{\text{cf}}$, $D_F^{\text{rf}}$), 
		along with the differences between $D_S^{\text{o}}$ and the feature spaces that underlie our mode-complementarity computations. 
		For structural perturbations, we swap the roles of $D^{\ast}_S$ and $D^{\ast}_{F}$. 
		Note that the spaces arising from complete graph and complete (one-hot) features are equivalent, 
		as are the (degenerate) spaces arising from empty graph and empty features. 
	}\label{fig:metric-space-overview}
\end{figure}

This allows us to assess differences between mode perturbations by comparing metric spaces. 

\begin{restatable}[Metric-Space Comparison]{definition}{ComparatorDef} \label{def:metric-space-comparison}
For a fixed set of $n$ points and two metrics $\dist$ and $\dist'$, 
we compare the metric spaces that arise from $\dist$ and $\dist'$ by computing the $L_{p,q}$ norm of the difference of their  $n\times n$ matrix representations, i.e.,
\begin{align} \label{eq:metric-space-comparison}
	\comp_{p,q}(\metricspace_{\dist},\metricspace_{\dist'}) &:= \frac{\| \metricspace_{\dist} - \metricspace_{\dist'} \|_{p,q}}{\sqrt[q]{n^2 - n}}\;.
\end{align}
\end{restatable}

With \emph{mode complementarity}, 
we then measure the distance between the \emph{normalized} metric spaces arising from the graph structure and the node features.

\begin{restatable}[Mode Complementarity]{definition}{ComplementarityDef} \label{def:mode-complementarity}
Given an attributed graph $(G, X)$ 
with structural metric $\dist_S$, derived from $G$, 
and feature metric $\dist_F$, derived from $X$, 
we define the \emph{mode complementarity} of $(G,X)$ as
\begin{equation}
\complementarity^{p,q}(G, X) \coloneqq \comp_{p,q}(\metricspace_S, \metricspace_F) \;,
\end{equation}
where $\comp$ is a comparator from \Cref{def:metric-space-comparison}, 
 \begin{align}
	\metricspace_{S} \coloneqq \metricspace_{\overline{S}} / \text{diam}(\metricspace_{\overline{S}} )&\text{~~for~~} \metricspace_{\overline{S}}\coloneqq \Lift_{\dist_S}(G, X)\;, \\
 	\metricspace_{F} \coloneqq \metricspace_{\overline{F}} / \text{diam}(\metricspace_{\overline{F}} )&\text{~~for~~}\metricspace_{\overline{F}} \coloneqq \Lift_{\dist_F}(G, X)\,,
 \end{align}
 using the lifts from \Cref{def:metric-space-construction}, 
and we leave (degenerate) zero-diameter spaces unchanged. 
\end{restatable}

Note that since mode complementarity takes an $L_{p,q}$ norm of normalized metric spaces, 
we have ${\complementarity^{p,q}(G,X) \in [0,1]}$ by construction. 
To extend mode complementarity to mode perturbations, 
we again leverage function composition.

\begin{restatable}[Perturbed Mode Complementarity]{definition}{PerturbedComplimentarityDef} \label{def:mode-complementarity-under-perturbation}
	Given an attributed graph $(G,X)$ and a mode perturbation $\PerturbGraph$, the mode complementarity of $\PerturbGraph(G,X)$ is
	\begin{equation}
	\complementarity^{p,q} \circ \PerturbGraph~(G,X) = \comp_{p,q}(\metricspace^{\PerturbGraph}_S,\metricspace^{\PerturbGraph}_F)\;.
	\end{equation}
\end{restatable}
For notational clarity, we use a subscript convention to denote the mode complementarity of $(G,X)$ under specific mode perturbations, 
i.e.,  $\complementarity^{p,q}_{\ast} (G,X) \coloneqq \complementarity^{p,q} \circ \PerturbGraph_\ast~(G,X))$.

To assess \ref{item:p2},
we can interpret high \emph{levels} of mode complementarity under a given mode perturbation as evidence that, 
in the metric spaces associated with that perturbation, 
the graph structure and the features contain \emph{complementary} information. 
We can also gain further insights by assessing the \emph{differences} between mode-complementarity levels across mode perturbations. 
Depending on the mode perturbations compared, 
these differences reveal information about the nature of the connection between the graph structure and the node features, 
or about the diversity present in $G$ and $X$.

In  \cref{prop:perturbation-duality} (\cref{apx:theory}), 
we formalize the relationship between empty mode perturbations $\PerturbGraph_{e*}$ and what we call \emph{self-complementarity}, 
showing that $\complementarity^{p,q}_{e\ast}(G,X)$ measures the geometric structure of the unperturbed mode. 
Based on both limiting behaviors of $\complementarity^{p,q}_{e\ast}(G,X)$, which correspond to \emph{uninformative} metric-space structures, we can then define a notion of \emph{mode diversity}.

\begin{restatable}[Mode Diversity]{definition}{DiversityDef}
	Given an attributed graph $(G,X)$,  
	the \emph{mode diversity} of $(G,X)$ for  $\ast\in\{f,g\}$ is 
	\begin{equation}
		\Delta^{p,q}_\ast(G,X) \coloneqq 1 - |1 - 2\complementarity^{p,q}_{e\ast}(G,X)| \in [0,1]\;.
	\end{equation}
\end{restatable}
Intuitively, the mode diversity $\Delta^{p,q}_\ast$ scores the ability of $\dist_\ast$ to produce non-trivial geometric structure. 
Note that $\Delta^{p,q}_\ast(G,X) \to 0$ implies that $\Lift_{\dist_*} \to \allzeros_n$ or $\Lift_{\dist_*} \to \allones_n - \I_n$, our canonical uninformative metric spaces.

In our experiments,
we instantiate $\dist_F$ with the Euclidean distance between node features and $\dist_S$ with a diffusion distance based on the graph structure (explained in \cref{apx:methods}) 
to approximate the computations underlying graph-learning methods. 
We further choose the $L_{1,1}$ norm as our comparator, which yields a favorable duality, 
proved in \cref{apx:theory}, 
between empty and complete perturbations.

\begin{restatable}[Perturbation Duality]{theorem}{PerturbationDualityThm} \label{thm:perturbation-duality}
	Fix an attributed graph $(G,X)$ and corresponding distances $\dist_S$,$\dist_F$ for lifting each mode into a metric space.  
	For $\ast\in\{\text{f},\text{g}\}$, \cref{def:mode-complementarity} of $\complementarity$ yields the equivalence
	\begin{equation}
		\complementarity^{1,1}_{c\ast} (G,X) = 1 - \complementarity^{1,1}_{e\ast} (G,X)\;.
	\end{equation}
\end{restatable}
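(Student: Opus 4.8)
The plan is to reduce both sides of the claimed identity to explicit $L_{1,1}$ norms of the normalized metric space of the \emph{unperturbed} mode, and then verify the equality by an elementary entrywise computation.

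First I would pin down the two canonical metric spaces produced by the complete and empty perturbations. Under $\cf$ the node features become the orthonormal rows of $\I_n$, so every pair of distinct nodes is at Euclidean distance $\sqrt{2}$ and the diameter equals $\sqrt{2}$; after dividing by the diameter, the normalized feature space is $\metricspace_F^{\text{cf}} = \allones_n - \I_n$. As observed in \cref{fig:metric-space-overview}, the complete graph yields the \emph{same} normalized space under $\dist_S$, so $\metricspace_S^{\text{cg}} = \allones_n - \I_n$ as well. For the empty perturbations all pairwise distances vanish, producing a zero-diameter (degenerate) space that is left unchanged by \cref{def:mode-complementarity}; hence $\metricspace_F^{\text{ef}} = \metricspace_S^{\text{eg}} = \allzeros_n$.

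Next I would exploit the fact that a feature perturbation fixes $G$ while a structural perturbation fixes $X$. For $\ast = \text{f}$, both $\cf$ and $\ef$ leave the graph untouched, so the structural side is the \emph{same} normalized space $\metricspace_S$ in both complementarities. Using the canonical forms above, this gives $\complementarity^{1,1}_{cf}(G,X) = \comp_{1,1}(\metricspace_S, \allones_n - \I_n)$ and $\complementarity^{1,1}_{ef}(G,X) = \comp_{1,1}(\metricspace_S, \allzeros_n)$. The case $\ast = \text{g}$ is perfectly symmetric, with the fixed feature space $\metricspace_F$ in place of $\metricspace_S$.

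The remaining step is the $L_{1,1}$ arithmetic. Because $\metricspace_S$ is a normalized metric space, its diagonal is zero and its off-diagonal entries lie in $[0,1]$, so each entry of $(\allones_n - \I_n) - \metricspace_S$ is nonnegative and $\| \metricspace_S - (\allones_n - \I_n) \|_{1,1} = \sum_{i \neq j} (1 - (\metricspace_S)_{ij}) = (n^2 - n) - \| \metricspace_S \|_{1,1}$. Dividing through by $\sqrt[1]{n^2 - n} = n^2 - n$ and recognizing $\| \metricspace_S \|_{1,1} / (n^2 - n) = \complementarity^{1,1}_{ef}(G,X)$ yields $\complementarity^{1,1}_{cf}(G,X) = 1 - \complementarity^{1,1}_{ef}(G,X)$, and the identical computation with $\metricspace_F$ settles $\ast = \text{g}$. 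I expect the only real obstacle to be the first step: verifying that the complete-feature and complete-graph perturbations both normalize to exactly $\allones_n - \I_n$ under the chosen Euclidean and diffusion distances (and therefore coincide across modes), and that the empty perturbations fall under the degenerate-space convention. Once these canonical forms are established, the duality follows from nothing more than the entries lying in $[0,1]$ and the special cancellation afforded by the choice $p = q = 1$.
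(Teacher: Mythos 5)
Your proposal is correct and follows essentially the same route as the paper: establish that the complete perturbation lifts (after normalization) to the discrete space $\allones_n - \I_n$ and the empty perturbation to the trivial space $\allzeros_n$ (the paper isolates these as \cref{lemma:complete-to-discrete,lemma:empty-to-trivial}), note that the unperturbed mode's normalized space is common to both sides, and then use the nonnegativity of $1 - d(i,j)$ for a unit-diameter space to turn the $L_{1,1}$ sum of $|1 - d(i,j)|$ into $(n^2-n) - \sum_{i\neq j} d(i,j)$. The one point you flag as a potential obstacle—that the complete-graph perturbation also normalizes to $\allones_n - \I_n$ under a structural metric—is exactly what the paper's \cref{lemma:complete-to-discrete} handles via a symmetry/invariance argument, so your outline matches the paper's proof in both structure and substance.
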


\section{Related Work}
\label{sec:related}

Here, we briefly contextualize our contributions, deferring a deeper discussion of related work to \cref{apx:related-work}.

Relating to \emph{mode perturbations} and \emph{performance separability}, 
\citet{Errica20a} create GNN experiments to improve reproducibility, and \citet{Bechler-Speicher23a} show that GNNs use the graph structure even when it is not conducive to a task (i.e., $\eg$ separably outperforms $\id$). 
\framework is inspired by, and goes beyond, these works, 
providing a general framework for perturbation-based dataset evaluation. 

Connecting with \emph{mode complementarity}, 
researchers have been particularly interested in the effects of \emph{homo- and heterophily}
on node-classification performance  \citep{lim2021large,platonov2023characterizing,platonov2023critical,luan2023when}. 
While homophily characterizes the \emph{task-dependent} relationship between graph structure and \emph{node labels}, 
mode complementarity assesses the \emph{task-independent} relationship between graph structure and \emph{node features}. 
For node classification, 
\citet{dong2023understanding} propose the edge signal-to-noise ratio (ESNR), 
\citet{quian2022quantifying} develop a subspace alignment measure (SAM), 
and \citet{THANG202246} analyze relations between \emph{node features} and \emph{graph structure}~(FvS).
However, with mode complementarity, we craft a score that 
\begin{inparaenum}[(1)]
	\item treats graph structure and node features as equal (unlike ESNR), 
  \item works on graphs without node labels and does not make assumptions about the spaces arising from edge connectivity and node features (unlike SAM, FvS), and 
	\item specifically informs graph-level learning tasks (unlike all of these works).
\end{inparaenum}
 
\begin{figure*}[!t]
	\centering
	\includegraphics[width=\linewidth]{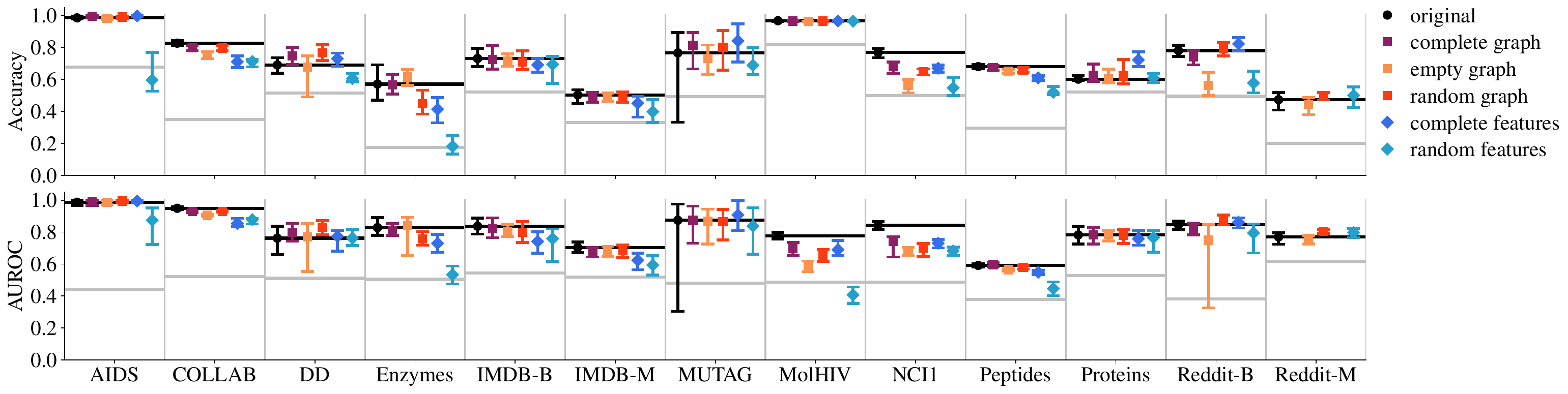}
	\caption{\textbf{Comparing \emph{GNN performance} across different versions of the same dataset.} 
		We show the mean (dot) and $95$th percentile interval (bars) of accuracy and AUROC across $100$ runs of the best (as measured by the respective performance mean) among our tuned GAT, GCN, and GIN models,
		for the original version and $5$ perturbations of $13$ graph-learning datasets.
		Black resp. silver horizontal lines show best mean trained resp. untrained performance on the original data. 
		For Reddit-M, the complete-graph and complete-features perturbations failed to train due to memory problems, 
		but the existing results already allow us to conclude that this dataset lacks performance separability. 
		Note that AUROC is the recommended evaluation metric for MolHIV; we include accuracy here only for completeness.
		\label{fig:performance-separability}}
\end{figure*}

\begin{table*}[!t]
	\centering \small
	\begin{tabular}{llllll}
\toprule
Dataset & Accuracy & AUROC & Structure & Features & Evaluation \\
\midrule
AIDS & cf $>$ cg $>$ rg $>$ o $>$ eg $>$ rf & cf/cg/rg $>$ eg/o $>$ rf & uninformative & uninformative & \texttt{--} \\
COLLAB & o $>$ cg/rg $>$ eg $>$ cf/rf & o $>$ cg/rg $>$ eg $>$ rf $>$ cf & informative & informative & \texttt{++} \\
DD & rg $>$ cg $>$ cf $>$ eg/o $>$ rf & rg $>$ cf/cg/eg/o/rf & uninformative & uninformative & \texttt{--} \\
Enzymes & eg $>$ cg/o $>$ rg $>$ cf $>$ rf & eg $>$ o $>$ cg $>$ rg $>$ cf $>$ rf & uninformative & informative & \texttt{-} \\
IMDB-B & cf/cg/eg/o/rf/rg & o $>$ cg $>$ eg/rg $>$ rf $>$ cf & (un)informative & (un)informative & $\circ$ \\
IMDB-M & cg/eg/o/rg $>$ cf $>$ rf & o $>$ cg/eg/rg $>$ cf $>$ rf & (un)informative & informative & \texttt{+} \\
MUTAG & cf/cg/o/rg $>$ eg $>$ rf & cf/o $>$ cg/eg/rf/rg & (un)informative & uninformative & \texttt{-} \\
MolHIV & o $>$ cf/cg/rg $>$ rf $>$ eg & o $>$ cg $>$ cf $>$ rg $>$ eg $>$ rf & informative & informative & \texttt{++} \\
NCI1 & o $>$ cg $>$ cf $>$ rg $>$ eg $>$ rf & o $>$ cg $>$ cf $>$ rg $>$ eg/rf & informative & informative & \texttt{++} \\
Peptides & o $>$ cg $>$ rg $>$ eg $>$ cf $>$ rf & cg $>$ o $>$ rg $>$ eg $>$ cf $>$ rf & (un)informative & informative & \texttt{+} \\
Proteins & cf $>$ cg/rf $>$ eg/o/rg & cf/cg/eg/o/rf/rg & uninformative & uninformative & \texttt{--} \\
Reddit-B & cf $>$ rg $>$ o $>$ cg $>$ eg/rf & rg $>$ cf $>$ o $>$ cg/eg/rf & uninformative & uninformative & \texttt{--} \\
Reddit-M & rf $>$ rg $>$ o $>$ eg & rg $>$ rf $>$ o $>$ eg & uninformative & uninformative & \texttt{--} \\
\bottomrule
\end{tabular}
 	\caption{\textbf{Measuring \emph{performance separability} between different versions of the same dataset.}
		To quantify the conclusions from \cref{fig:performance-separability} and further account for performance \emph{distributions} (\cref{def:evaluating-tuned-models}), 
		we use permutation tests with $10\,000$ random permutations and the Kolmogorov-Smirnov (KS) statistic as our test statistic at an $\alpha$-level of $0.01$, Bonferroni-corrected for multiple hypothesis testing within each individual dataset. 
		Here, for sets $S_1$ and $S_2$, $S_1 > S_2$ denotes that the elements in the set $S_1$ separably outperform the elements in the set $S_2$ 
		(i.e., the pairwise distances between $s_1$ and $s_2$ are statistically significantly different for all $s_1\in S_1$ and $s_2\in S_2$), 
		and we represent the sets in condensed notation, concatenating elements with ``/
		''. 
		We see that original datasets are often separably outperformed by their perturbed variants. 
	}\label{tab:performance-separability-ks}
\end{table*}

\section{Experiments}
\label{sec:experiments}

We now demonstrate how to use \framework 
to evaluate the quality of graph-learning datasets,  
guided by the principles developed in \cref{sec:introduction}. 
Leveraging our mode-perturbation framework, 
we explore \ref{item:p1} via performance separability and 
\ref{item:p2} via mode complementarity,
before distilling our observations into an actionable taxonomy of recommendations.
For further details 
and additional results, 
including regression tasks, transformer architectures, 
and graph-level performance comparisons between models, 
see \cref{apx:experiments}.\footnote{Our reproducibility package is publicly available via Zenodo at \ourdoi. 
	The code is maintained on GitHub at \oururl.
} 

\textbf{Evaluated Datasets.}~In our
main experiments, 
we evaluate $13$ popular graph-classification datasets:
From the \emph{life sciences}, 
we select AIDS, ogbg-molhiv (MolHIV), MUTAG, and NCI1 (small molecules), 
as well as DD, Enzymes, Peptides-func (Peptides), and PROTEINS-full (Proteins) (larger chemical structures). 
From the \emph{social sciences}, 
we take COLLAB, IMDB-B, and IMDB-M (collaboration ego-networks),  
as well as REDDIT-B and REDDIT-M (online interactions). 
To approximate the standard evaluation setup as closely as possible and ensure compatibility with a wide range of models, 
for all datasets, 
we use the node features as encoded by PyTorch-geometric and disregard the edge features. 
See \cref{apx:datasets} for details and references.

\subsection{Performance Separability (\ref{item:p1})}\label{experiments:separability}
\textbf{Expectations.}~
\begin{inparaenum}[(1)]
	\item For a \emph{structure- and feature-based task}, 
	the original dataset should separably outperform all other mode perturbations. 
	\item For a \emph{structure-based task}, 
	the original dataset should separably outperform all structural perturbations. 
	\item For a \emph{feature-based task}, 
	the original dataset should separably outperform all feature perturbations.
	\item If the original dataset is separably outperformed by a structural (feature) perturbation, 
	the structural (feature) mode of the dataset is poorly aligned with the task. 
\end{inparaenum}

\textbf{Measurement.}~To evaluate performance separability, 
we train $3$ standard GNN architectures (GAT, GCN, GIN)
in the \framework framework 
on the original and $5$ perturbations of our $13$ main datasets: 
\{empty, complete, random\} graph, and \{complete, random\} features.\footnote{GNNs cannot train with $\ef$ due to uninformative gradients.
} 
With the setup described in Appendix \cref{tab:gnn_training_overview}, we tune and evaluate a total of $273$ models. 
Although any GNN architecture could be used to evaluate performance separability within \framework, 
using standard architectures allows us to focus on \emph{dataset} evaluation. 

For each version of each dataset, 
 we identify the model with the best performance under evaluation metric $\TestMetric \in$ \{accuracy, AUROC\} as $(\Architecture^\star, \params^\star) \coloneqq \arg\max_{(\Architecture, \params)} \mathbb{E}\left[\ModePerformance \right]$,
using the performance distribution of the top model,  $\TunedModel^{\star}_{\PerturbGraph} \coloneqq (\PerturbDataset(\TrainData),\Architecture^{\star},\params^{\star})$, 
to assess performance separability. 

To compare performance distributions, 
we use permutation tests with the Kolmogorov-Smirnov (KS) statistic, 
Bonferroni correcting for multiple hypothesis testing within each individual dataset and testing differences for significance at an $\alpha$-level of $0.01$ 
(see Appendix \cref{tab:performance-separability-wilcoxon,tab:performance-separability-ks-005} for substantively identical results using different setups).

\begin{figure*}[!t]
	\centering
	\includegraphics[width=\linewidth]{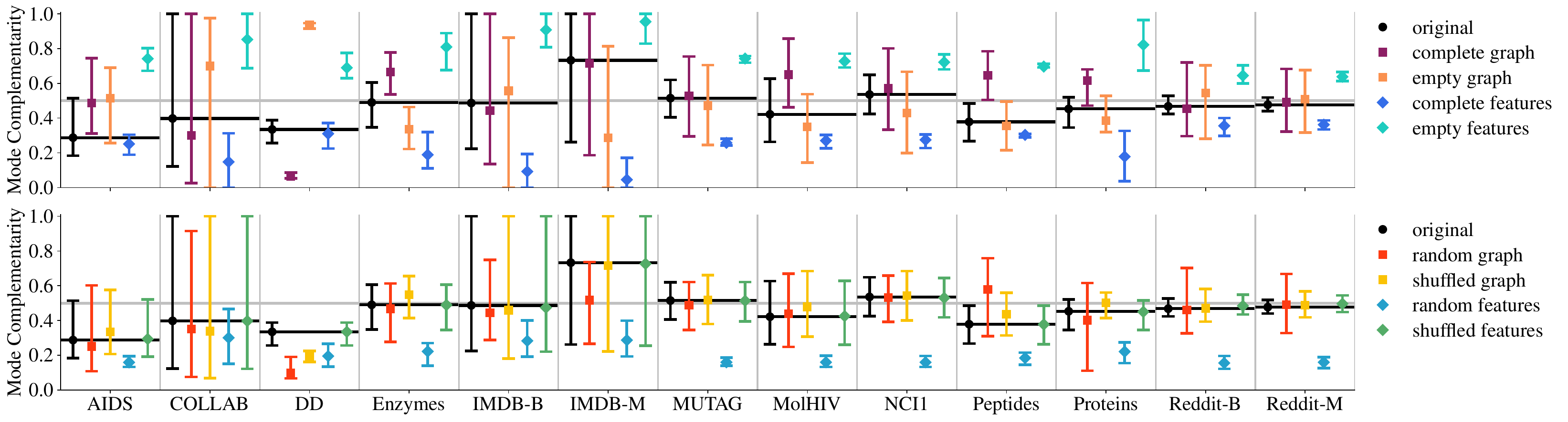}
	\caption{\textbf{Comparing \emph{levels of mode complementarity} across different versions of the same dataset.} We show the mean (dot) and $95$th percentile intervals (bars) of complementarity scores for the original version as well as $4$ deterministic perturbations (top) and $4$ randomized perturbations (bottom) of $13$ graph-learning datasets, computed with $t=1$ diffusion steps.
		Black horizontal lines indicate mean mode complementarities of the original dataset, and the silver horizontal line marks the $0.5$ threshold relevant for assessing mode diversity. 
		Note that $\complementarity_\text{eg} = 1 - \complementarity_\text{cg}$ and  $\complementarity_\text{ef} = 1 - \complementarity_\text{cf}$ by \cref{thm:perturbation-duality}.
	}\label{fig:mode-complementarity}
\end{figure*}

\textbf{Results.}~
We show the performance of the best GNN models in \cref{fig:performance-separability}, for the original and $5$ mode perturbations over our $13$ main datasets and $2$ evaluation statistics,
and summarize the associated statistical performance-separability results in \cref{tab:performance-separability-ks}. 
We find that only $3$ datasets---COLLAB, MolHIV, and NCI1---satisfy the performance-separability relations expected from structure- and feature-based tasks (with Peptides coming close), 
whereas $5$ datasets---AIDS, DD, Proteins, Reddit-B, and Reddit-M---do not satisfy \emph{any} separability requirements, 
highlighting their low quality from the perspective of \ref{item:p1}.
Among our striking results, 
the original dataset is separably outperformed by 
\begin{inparaenum}[(1)]
	\item the random-graph perturbation on DD, 
	\item the empty-graph perturbation on Enzymes, and 
	\item the complete-features and random-graph perturbations on Reddit-B, 
\end{inparaenum}
indicating that the affected modes are poorly aligned with their current task.  

\subsection{Mode Complementarity (\ref{item:p2})}\label{experiments:complementarity}

\textbf{Expectations.}~\begin{inparaenum}[(1)]
	\item For a \emph{structure- and feature-based task}, 
	the original dataset should have high mode complementarity, 
	and both modes should have high mode diversity (indicated by high mode complementarity of the empty and complete perturbations of their dual mode).
	\item For a \emph{structure-based task}, 
	the structural mode should have high mode diversity. 
	\item For a \emph{feature-based task}, 
	the feature mode should have high mode diversity. 
	\item If the original dataset has low mode complementarity, 
	the information contained in structure and features is redundant. 
	\item If both modes have low mode diversity, 
	the dataset contains little insightful variation. 
\end{inparaenum}

\textbf{Measurement.}~We evaluate mode complementarity for the original as well as $4$ fixed and $4$ randomized mode perturbations: 
\{complete, empty, random, shuffled\} $\times$ \{graph, features\} (see \cref{apx:experiments} for detailed descriptions). 
To measure mode complementarity (and the mode diversity derived from it), 
for each graph in a given dataset perturbation, 
we instantiate \cref{def:mode-complementarity} using the Euclidean distance as our feature distance, 
the diffusion distance (see \cref{apx:methods}) for a number of diffusion steps $t\in [10]$ as our graph distance, 
and the $L_{1,1}$ norm as our comparator 
(see \cref{apx:methods} for examples of other choices). 

\begin{table}[!t]
	\centering \small
	\begin{tabular}{lr@{\hspace*{0.8em}}r@{\hspace*{0.8em}}r@{\hspace*{0.8em}}rc@{\hspace*{1em}}c@{\hspace*{1em}}c@{\hspace*{1em}}c}
\toprule
 & \multicolumn{2}{r}{$\Delta_S$} & \multicolumn{2}{r}{$\Delta_F$} & \multicolumn{2}{r}{Structure} & \multicolumn{2}{r}{Features} \\
 Dataset & $\mu$ & $\sigma$ & $\mu$ & $\sigma$ & $\mu$ & $\sigma$ & $\mu$ & $\sigma$ \\
\midrule
AIDS & 0.52 & 0.07 & 0.81 & 0.14 & $\circ$ & \texttt{-} & \texttt{++} & $\circ$ \\
COLLAB & 0.30 & 0.20 & 0.27 & 0.24 & \texttt{-} & \texttt{++} & \texttt{-} & \texttt{++} \\
DD & 0.62 & 0.09 & 0.13 & 0.03 & \texttt{++} & \texttt{-} & \texttt{--} & \texttt{--} \\
Enzymes & 0.38 & 0.10 & 0.66 & 0.13 & \texttt{-} & \texttt{-} & \texttt{++} & $\circ$ \\
IMDB-B & 0.18 & 0.11 & 0.55 & 0.29 & \texttt{--} & $\circ$ & $\circ$ & \texttt{++} \\
IMDB-M & 0.09 & 0.11 & 0.30 & 0.34 & \texttt{--} & $\circ$ & \texttt{-} & \texttt{++} \\
MUTAG & 0.51 & 0.02 & 0.76 & 0.14 & $\circ$ & \texttt{--} & \texttt{++} & $\circ$ \\
MolHIV & 0.55 & 0.05 & 0.69 & 0.21 & $\circ$ & \texttt{--} & \texttt{++} & \texttt{++} \\
NCI1 & 0.56 & 0.05 & 0.78 & 0.16 & $\circ$ & \texttt{--} & \texttt{++} & \texttt{++} \\
Peptides & 0.61 & 0.01 & 0.71 & 0.23 & \texttt{++} & \texttt{--} & \texttt{++} & \texttt{++} \\
Proteins & 0.36 & 0.12 & 0.76 & 0.08 & \texttt{-} & $\circ$ & \texttt{++} & \texttt{-} \\
Reddit-B & 0.71 & 0.05 & 0.80 & 0.12 & \texttt{++} & \texttt{-} & \texttt{++} & $\circ$ \\
Reddit-M & 0.72 & 0.03 & 0.85 & 0.11 & \texttt{++} & \texttt{--} & \texttt{++} & $\circ$ \\
\bottomrule
\end{tabular}
 	\caption{\textbf{Evaluating mode diversity.}
		We assess the mean ($\mu$) and standard deviation ($\sigma$) of structural diversity ($\Delta_S$) and feature diversity ($\Delta_F$) for $13$ graph-learning datasets.
	}\label{tab:mode-diversity}
\end{table}

\textbf{Results.}~In \cref{fig:mode-complementarity}, we show the mean and $95$th percentile intervals of the mode-complementarity distributions at $t=1$, 
for the original and the $8$ other selected mode perturbations, 
for each of our $13$ main datasets. 
We observe interesting differences between mode-complementarity profiles across datasets. 
For example, 
\begin{inparaenum}[(1)]
	\item COLLAB, IMDB-B, and IMDB-M 
	stand out for their large ranges (likely due to their nature as ego-networks); 
	\item DD attains extreme values with its complete-graph and empty-graph perturbations (likely due to the absence of features in the original dataset, translated into complete-type features by PyTorch-geometric); and 
	\item Peptides has mode complementarities in the random-graph and the complete-graph perturbations that are noticeably higher than that of the original dataset (possibly related to the presence of long-range connections). 
\end{inparaenum}

Supplementing \cref{fig:mode-complementarity} with mode-diversity scores in \cref{tab:mode-diversity},
we find that feature diversity is more common than structural diversity, and variation in structural diversity is almost always low. 
Notably,
\begin{inparaenum}[(1)]
	\item COLLAB, IMDB-B, and IMDB-M, judged favorably or neutral by performance separability, score poorly on structural and feature diversity, suggesting that their tasks may not reveal much information about the power of graph-learning methods;
	\item conversely, among the datasets judged problematic by performance separability, 
	DD exhibits high levels of structural diversity, 
	indicating its potential for adequately aligned structural tasks; and
	\item Reddit-B and Reddit-M show high levels of structural and feature diversity, 
	indicating untapped potential for graph-learning tasks that are based on both structure \emph{and} features, as well as high quality from the perspective of \ref{item:p2}. 
\end{inparaenum}

To conclude, we explore the relationship between mean mode complementarity and mean (AUROC) performance across all mode perturbations in \cref{fig:mode-complementarity-vs-performance} (see Appendix \cref{fig:mode-complementarity-vs-performance-accuracy}  
and \cref{tab:mode-complementarity-vs-performance-cor} 
for extended results). 
We find that higher mode complementarity is generally associated with higher performance, 
with exceptions for some tasks we previously identified as misaligned (DD and Reddit-M). 
Since evaluating performance separability is comparatively costly, 
this further underscores the value of mode complementarity as a task-independent diagnostic for graph-learning datasets.

\subsection{Dataset Taxonomy}\label{experiments:taxonomy} 

From our observations on performance separability (\ref{item:p1}) and mode complementarity (\ref{item:p2}), 
we can distill the following actionable dataset taxonomy, where we note evidence from performance separability ($\dagger$) and mode complementarity ($\ddag$):

\begin{table}[!h]
	\centering
	\begin{tabular}{@{\hspace*{0.05em}}l@{\hspace*{0.3em}}l@{\hspace*{0.05em}}}
		\toprule
		Action&Datasets\\
		\midrule
		 Keep ($\dagger|\ddag$)&MolHIV, NCI1, Peptides\\
		 Realign ($\ddag$)&AIDS, DD, MUTAG, Reddit-B, Reddit-M\\
		 Deprecate ($\ddag$)&COLLAB, IMDB-B, IMDB-M\\ 
		 Deprecate ($\dagger|\ddag$)&Enzymes, Proteins\\ 
		\bottomrule
	\end{tabular}
\end{table}

\begin{figure}[t]
	\centering
	\includegraphics[width=\linewidth]{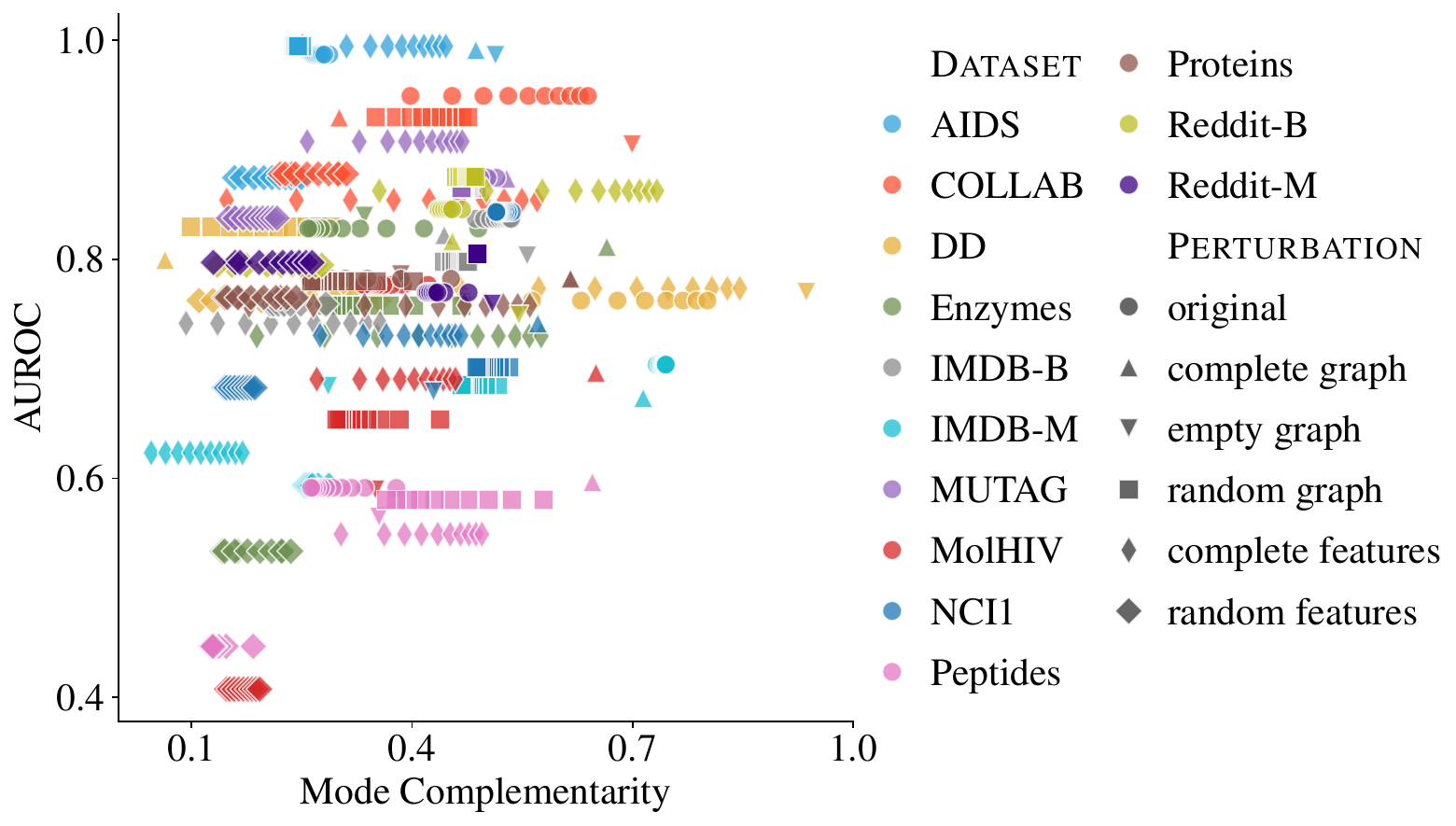}
	\caption{\textbf{Mode complementarity and performance.} 
		We show mean AUROC (y) as a function of mean mode complementarity (x), 
		for the original version and $5$ perturbations of our $13$ main datasets, 
		based on our best-on-average models (as in \cref{fig:performance-separability}). 
		Each marker represents a (dataset, perturbation, $t$) tuple, where $t \in[10]$ is the number of diffusion steps in the diffusion distance. 
		Higher mean mode complementarity is associated with higher AUROC, 
		and datasets differ in the range of their mode-complementarity shifts.
	}\label{fig:mode-complementarity-vs-performance}
\end{figure}

\textbf{Taxonomy, categories, and interpretation.}~Our taxonomy is built on our measures of performance separability and structural diversity (derived from mode complementarity). 
Here, we emphasize structural diversity over feature diversity because the structural mode is characteristic of the graph-learning setting, 
acknowledging that datasets with high feature diversity could still be useful as benchmarks in settings outside of graph learning. 
Here, we call the value of a measure \emph{high} if it is judged as $\circ$, $+$, or $++$ (see \cref{tab:performance-separability-ks,tab:mode-diversity}), and \emph{low} otherwise. 
In the following, we elaborate on the definition and interpretation of each action category. 

\textit{Keep ($\dagger|\ddag$).}~This category contains datasets with \emph{high} performance separability and \emph{high} structural diversity. 
These datasets constitute valuable graph-learning benchmarks. 

\textit{Realign ($\ddag$).}~We call a dataset \emph{misaligned} if it exhibits \emph{low} performance separability but \emph{high} structural diversity. 
\emph{Realignment}, then, collectively denotes several potential operations, 
including changing the prediction targets (e.g., using different categories to classify discussion threads in Reddit-B or Reddit-M) or changing the prediction task (e.g., moving from a graph-level to a node-level task). 

\textit{Deprecate ($\ddag$).}~This category contains datasets with \emph{high} performance separability but \emph{low} structural diversity. 
Although both structure and features may be needed to achieve state-of-the-art performance, these datasets do not contain interesting structural variation, and as such, they do little to probe the capabilities of graph-learning models. 

\textit{Deprecate ($\dagger|\ddag$).}~This category contains datasets that exhibit \emph{low} structural diversity and \emph{low} performance separability. 
These datasets do not currently require the capabilities of graph-learning models to integrate information from structure and features, 
and they also do not hold sufficient structural diversity to change this via realignment.  

\textbf{Separability, complementarity,~and~misalignment.} 
While our experiments show that higher mode complementarity is associated with higher performance (see \cref{fig:mode-complementarity}), 
we make no claims about the relationship between mode diversity and performance separability. 
Since performance separability assesses the task-specific performance gap between a dataset and its perturbations, and mode diversity measures the task-agnostic variation contained in the modes of an individual dataset, we would not expect high mode diversity to be consistently associated with high performance separability. 
For example, a dataset could have high structural and feature diversity but low mode complementarity, which could lead to \emph{complete graph} and \emph{complete features} performing on par with the \emph{original} mode, eliminating separability. 

Misalignment occurs when there is interesting variation in the data that models could leverage in their predictions (high mode diversity), but the existing relationship between this variation and the prediction target does not provide a significant performance advantage over settings in which this relationship is deliberately destroyed (lack of performance separability). 
While the lack of performance separability could also be due to limitations of the models employed (i.e., they may not be expressive enough), given our comprehensive measurement setup, we deem it more likely that the relationship between the variation in the data and the prediction target is strained, prompting the need for realignment. 
However, more work is needed to disentangle the model-related and the data-related factors contributing to a lack of performance separability. 
\section{Discussion}
\label{sec:discussion}

\textbf{Conclusions.}
We introduced \framework, 
a principled framework for evaluating graph-learning datasets based on \emph{mode perturbations}---i.e., controlled changes to their graph structure and node features.
In \framework, we developed \emph{performance separability} as a task-dependent, 
and \emph{mode complementarity} as a task-independent measure of dataset quality. 
Using our framework to categorize $13$ popular graph-classification datasets,
we identified several benchmarks that require realignment (e.g., by better data modeling) or complete deprecation. 

While some datasets have been observed to be problematic in prior work, 
with \framework, we offer a full perturbation-based evaluation pipeline. We designed our initial set of perturbations from a data-centric perspective, 
but additional, cleverly crafted perturbations could also isolate and confirm the performance separability of new model contributions. 
Thus, we hope that \framework will raise the standard of proof for model-centric evaluation in graph learning as well. 

\textbf{Limitations.}
Given that message passing is the predominant paradigm in graph learning, our \emph{performance-separability} experiments mostly targeted message-passing GNNs, at the exclusion of other architectures.
While \framework is a general framework, our current 
norm-based comparison of metric spaces assumes that nodes and node features are \emph{paired}.
Thus, we presently do not measure aspects like the \emph{metric distortion} arising from different mappings.
Moreover, our notion of \emph{mode complementarity} is task-independent and model-agnostic. 
This allows us to gain insights into \emph{datasets}, but the inclusion of additional information about models would enable claims about (\emph{model}, \emph{dataset}, \emph{task}) triples. 

\textbf{Future Work.}
We envision \framework to support the design of better datasets and more challenging tasks for graph learning, 
and we hope that the community will use our measures to further scrutinize its evaluation infrastructure. 
Additionally, we highlight four concrete avenues for future work. 
\begin{enumerate}[nosep,label=\bfseries F\arabic*]
	\item \textbf{Theoretical analysis.}~While\thinspace we\thinspace provided\thinspace initial\thinspace theoret\-ical results on the behavior of perturbed mode~complementarity, 
	a comprehensive theoretical analysis~of its properties is yet outstanding. 
	Adopting an~information-theoretic perspective could yield valuable insights. 
	\item \textbf{Task coverage.}~Our work focused on graph-level tasks, 
	but node-level tasks and edge-level tasks, too, merit closer inspection. 
	While performance separability immediately applies to such tasks, 
	extending mode complementarity requires further conceptual advances. 
	These advances might also naturally give rise to a \emph{localized} notion of mode complementarity. 
	\item \textbf{Feature coverage and graph coverage.}~\framework naturally accounts for node features, but some datasets also contain edge features or graph-level features that could be incorporated into our framework. 
	Likewise, defining mode complementarity for more expressive types of graphs---from temporal graphs to hypergraphs---would constitute a valuable extension of \framework.
	\item \textbf{Model interpretability.}~
	While \framework was designed for dataset evaluation, 
	it can also enhance the interpretability of graph-learning models. 
	For example, one could study how mode complementarity changes during training, or how mode-complemen\-tar\-ity distributions impact train-eval-test splits. 
	This could yield interesting insights~into the relationship between mode complementarity and performance, 
	and it could help elucidate the inner workings of individual models. 
\end{enumerate}

Finally, \framework could support the design of better graph-learning models. 
As highlighted by our dataset taxonomy, 
by identifying datasets that do not require the specific capabilities of graph-learning models, 
performance separability helps direct model-development resources to where they are most needed. 
Moreover, mode complementarity and mode diversity could guide architectural choices for a specific task on a specific dataset (e.g., whether a model should focus on leveraging the graph structure or the node features).  
These measures could also inform model designs that incorporate data-centric components to enhance performance adaptively (both at the level of individual observations and at the level of entire datasets)---e.g., by preprocessing the data to increase mode complementarity. 
However, more research is needed to realize the potential of \framework in these domains.  
\clearpage

\section*{Acknowledgments}
This work has received funding from the Swiss State Secretariat for Education, Research, and Innovation (SERI). 
We acknowledge the use of the High Performance Computing cluster at Helmholtz Munich, maintained by the Scientific Computing and DigIT teams, as well as the computational resources provided by the Aalto Science-IT project.

\section*{Impact Statement}

With \framework, we introduce a principled data-centric framework for evaluating the quality of graph-learning datasets. 
We expect our framework to help improve data and evaluation practices in graph learning,
enabling the community to focus on datasets that truly require resource-intensive graph-learning approaches via mode complementarity and encouraging greater experimental rigor via performance separability. 
Thus, we hope to contribute to reducing the environmental impact of the machine-learning community, and to the development of scientifically sound, responsible AI methods.

\bibliography{main}

\begin{thebibliography}{87}
\providecommand{\natexlab}[1]{#1}
\providecommand{\url}[1]{\texttt{#1}}
\expandafter\ifx\csname urlstyle\endcsname\relax
  \providecommand{\doi}[1]{doi: #1}\else
  \providecommand{\doi}{doi: \begingroup \urlstyle{rm}\Url}\fi

\bibitem[Agarwal et~al.(2023)Agarwal, Queen, Lakkaraju, and
  Zitnik]{agarwal_evaluating_2023}
Agarwal, C., Queen, O., Lakkaraju, H., and Zitnik, M.
\newblock Evaluating explainability for graph neural networks.
\newblock \emph{Scientific Data}, 10\penalty0 (1):\penalty0 144, 2023.

\bibitem[Akbiyik et~al.(2023)Akbiyik, Gr{\"o}tschla, Egressy, and
  Wattenhofer]{Akbiyik2023}
Akbiyik, E., Gr{\"o}tschla, F., Egressy, B., and Wattenhofer, R.
\newblock Graphtester: Exploring theoretical boundaries of gnns on graph
  datasets.
\newblock In \emph{Data-centric Machine Learning Research (DMLR) Workshop at
  ICML}, 2023.

\bibitem[Alon \& Yahav(2021)Alon and Yahav]{alon_bottleneck_2021}
Alon, U. and Yahav, E.
\newblock On the bottleneck of {Graph} {Neural} {Networks} and its practical
  implications.
\newblock In \emph{International Conference on Learning Representations}, 2021.

\bibitem[Bai \& Hancock(2004)Bai and Hancock]{Bai2004HeatKM}
Bai, X. and Hancock, E.~R.
\newblock Heat kernels, manifolds and graph embedding.
\newblock In \emph{Structural, Syntactic, and Statistical Pattern Recognition:
  Joint IAPR International Workshops}, pp.\  198--206. Springer, 2004.

\bibitem[Barab{\'a}si(2016)]{barabasi2016network}
Barab{\'a}si, A.-L.
\newblock \emph{Network science}.
\newblock Cambridge University Press, 2016.

\bibitem[Bechler-Speicher et~al.(2024)Bechler-Speicher, Amos, Gilad-Bachrach,
  and Globerson]{Bechler-Speicher23a}
Bechler-Speicher, M., Amos, I., Gilad-Bachrach, R., and Globerson, A.
\newblock Graph neural networks use graphs when they shouldn't.
\newblock In \emph{International Conference on Learning Representations}, 2024.

\bibitem[Biderman \& Scheirer(2020)Biderman and Scheirer]{biderman2020}
Biderman, S. and Scheirer, W.~J.
\newblock Pitfalls in machine learning research: Reexamining the development
  cycle.
\newblock In \emph{Proceedings on "I Can't Believe It's Not Better!" at NeurIPS
  Workshops}, volume 137 of \emph{Proceedings of Machine Learning Research},
  pp.\  106--117. PMLR, 2020.

\bibitem[Bonabi~Mobaraki \& Khan(2023)Bonabi~Mobaraki and
  Khan]{bonabi_mobaraki_demonstration_2023}
Bonabi~Mobaraki, E. and Khan, A.
\newblock A demonstration of interpretability methods for {Graph} {Neural}
  {Networks}.
\newblock In \emph{Proceedings of the 6th {Joint} {Workshop} on {Graph} {Data}
  {Management} {Experiences} \& {Systems} ({GRADES}) and {Network} {Data}
  {Analytics} ({NDA})}, pp.\  1--5, 2023.

\bibitem[Borgwardt et~al.(2020)Borgwardt, Ghisu, Llinares-L{\'o}pez, O'Bray,
  and Rieck]{Borgwardt20}
Borgwardt, K., Ghisu, E., Llinares-L{\'o}pez, F., O'Bray, L., and Rieck, B.
\newblock Graph kernels: State-of-the-art and future challenges.
\newblock \emph{Foundations and Trends in Machine Learning}, 13\penalty0
  (5--6):\penalty0 531--712, 2020.

\bibitem[Borgwardt et~al.(2005)Borgwardt, Ong, Sch\"{o}nauer, Vishwanathan,
  Smola, and Kriegel]{borgwardt2005}
Borgwardt, K.~M., Ong, C.~S., Sch\"{o}nauer, S., Vishwanathan, S. V.~N., Smola,
  A.~J., and Kriegel, H.-P.
\newblock Protein function prediction via graph kernels.
\newblock \emph{Bioinformatics}, 21\penalty0 (1):\penalty0 47–56, 2005.

\bibitem[Cai \& Wang(2018)Cai and Wang]{Cai18a}
Cai, C. and Wang, Y.
\newblock A simple yet effective baseline for non-attributed graph
  classification.
\newblock In \emph{Representation Learning on Graphs and Manifolds Workshop at
  ICLR}, 2018.

\bibitem[Chen et~al.(2022)Chen, Chen, Bai, Gao, Zhang, and
  Pu]{chen_sa-mlp_2022}
Chen, J., Chen, S., Bai, M., Gao, J., Zhang, J., and Pu, J.
\newblock {SA}-{MLP}: {Distilling} {Graph} {Knowledge} from {GNNs} into
  {Structure}-{Aware} {MLP}, 2022.
\newblock arXiv:2210.09609 [cs].

\bibitem[Chen et~al.(2020)Chen, Bian, and Sun]{Chen2019}
Chen, T., Bian, S., and Sun, Y.
\newblock Are powerful graph neural nets necessary? {A} dissection on graph
  classification, 2020.
\newblock arXiv:1905.04579 [cs].

\bibitem[Chuang \& Jegelka(2022)Chuang and Jegelka]{chuang_tree_2022}
Chuang, C.-Y. and Jegelka, S.
\newblock Tree mover's distance: Bridging graph metrics and stability of graph
  neural networks.
\newblock In \emph{Advances in Neural Information Processing Systems}, pp.\
  2944--2957, 2022.

\bibitem[Coifman \& Lafon(2006)Coifman and Lafon]{COIFMAN20065}
Coifman, R.~R. and Lafon, S.
\newblock Diffusion maps.
\newblock \emph{Applied and Computational Harmonic Analysis}, 21\penalty0
  (1):\penalty0 5--30, 2006.
\newblock Special Issue: Diffusion Maps and Wavelets.

\bibitem[Debnath et~al.(1991)Debnath, Lopez~de Compadre, Debnath, Shusterman,
  and Hansch]{MUTAG}
Debnath, A.~K., Lopez~de Compadre, R.~L., Debnath, G., Shusterman, A.~J., and
  Hansch, C.
\newblock Structure-activity relationship of mutagenic aromatic and
  heteroaromatic nitro compounds. correlation with molecular orbital energies
  and hydrophobicity.
\newblock \emph{Journal of Medicinal Chemistry}, 34\penalty0 (2):\penalty0
  786--797, 1991.

\bibitem[Di~Giovanni et~al.(2023)Di~Giovanni, Giusti, Barbero, Luise, Lio, and
  Bronstein]{di_giovanni_over-squashing_2023}
Di~Giovanni, F., Giusti, L., Barbero, F., Luise, G., Lio, P., and Bronstein,
  M.~M.
\newblock On over-squashing in message passing neural networks: {The} impact of
  width, depth, and topology.
\newblock In \emph{International Conference on Machine Learning}, pp.\
  7865--7885, 2023.

\bibitem[Ding \& Li(2018)Ding and Li]{Ding2018}
Ding, J. and Li, X.
\newblock An approach for validating quality of datasets for machine learning.
\newblock In \emph{IEEE International Conference on Big Data}, pp.\
  2795--2803, 2018.

\bibitem[Dobson \& Doig(2003)Dobson and Doig]{dobson2003dd}
Dobson, P.~D. and Doig, A.~J.
\newblock Distinguishing enzyme structures from non-enzymes without alignments.
\newblock \emph{Journal of Molecular Biology}, 330\penalty0 (4):\penalty0
  771--783, 2003.

\bibitem[Dong \& Kluger(2023)Dong and Kluger]{dong2023understanding}
Dong, M. and Kluger, Y.
\newblock Towards understanding and reducing graph structural noise for gnns.
\newblock In \emph{International Conference on Machine Learning}, pp.\
  8202--8226, 2023.

\bibitem[Dwivedi et~al.(2022)Dwivedi, Ramp{\'a}{\v{s}}ek, Galkin, Parviz, Wolf,
  Luu, and Beaini]{Dwivedi2022}
Dwivedi, V.~P., Ramp{\'a}{\v{s}}ek, L., Galkin, M., Parviz, A., Wolf, G., Luu,
  A.~T., and Beaini, D.
\newblock Long range graph benchmark.
\newblock In \emph{Advances in Neural Information Processing Systems}, pp.\
  22326--22340, 2022.

\bibitem[Dwivedi et~al.(2023)Dwivedi, Joshi, Luu, Laurent, Bengio, and
  Bresson]{zinc2023}
Dwivedi, V.~P., Joshi, C.~K., Luu, A.~T., Laurent, T., Bengio, Y., and Bresson,
  X.
\newblock Benchmarking graph neural networks.
\newblock \emph{Journal of Machine Learning Research}, 24\penalty0
  (43):\penalty0 1--48, 2023.
\newblock URL \url{http://jmlr.org/papers/v24/22-0567.html}.

\bibitem[Errica et~al.(2020)Errica, Podda, Bacciu, and Micheli]{Errica20a}
Errica, F., Podda, M., Bacciu, D., and Micheli, A.
\newblock A fair comparison of graph neural networks for graph classification.
\newblock In \emph{International Conference on Learning Representations}, 2020.

\bibitem[Faber et~al.(2021)Faber, K.~Moghaddam, and
  Wattenhofer]{faber_when_2021}
Faber, L., K.~Moghaddam, A., and Wattenhofer, R.
\newblock When comparing to ground truth is wrong: {On} evaluating {GNN}
  explanation methods.
\newblock In \emph{Proceedings of the {ACM} {SIGKDD} {Conference} on
  {Knowledge} {Discovery} \& {Data} {Mining}}, pp.\  332--341, 2021.

\bibitem[Fan et~al.(2020)Fan, Gong, Xie, Jiang, and Li]{fan_structured_2020}
Fan, X., Gong, M., Xie, Y., Jiang, F., and Li, H.
\newblock Structured self-attention architecture for graph-level representation
  learning.
\newblock \emph{Pattern Recognition}, 100:\penalty0 107084, 2020.

\bibitem[Fang et~al.(2022)Fang, Liu, Lei, He, Zhang, Zhou, Wang, Wu, and
  Wang]{fang2022geometry}
Fang, X., Liu, L., Lei, J., He, D., Zhang, S., Zhou, J., Wang, F., Wu, H., and
  Wang, H.
\newblock Geometry-enhanced molecular representation learning for property
  prediction.
\newblock \emph{Nature Machine Intelligence}, 4\penalty0 (2):\penalty0
  127--134, 2022.

\bibitem[Fenza et~al.(2021)Fenza, Gallo, Loia, Orciuoli, and
  Herrera-Viedma]{fenza_data_2021}
Fenza, G., Gallo, M., Loia, V., Orciuoli, F., and Herrera-Viedma, E.
\newblock Data set quality in {Machine} {Learning}: {Consistency} measure based
  on group decision making.
\newblock \emph{Applied Soft Computing}, 106:\penalty0 107366, 2021.

\bibitem[Fey \& Lenssen(2019)Fey and Lenssen]{fey2019pytorch_geometric}
Fey, M. and Lenssen, J.~E.
\newblock Fast graph representation learning with pytorch geometric.
\newblock In \emph{Representation Learning on Graphs and Manifolds Workshop at
  ICLR}, 2019.

\bibitem[Gao et~al.(2010)Gao, Xiao, Tao, and Li]{Gao10a}
Gao, X., Xiao, B., Tao, D., and Li, X.
\newblock A survey of graph edit distance.
\newblock \emph{Pattern Analysis and Applications}, 13\penalty0 (1):\penalty0
  113--129, 2010.

\bibitem[Germani et~al.(2023)Germani, Fromont, Maurel, and Maumet]{Germani2023}
Germani, E., Fromont, E., Maurel, P., and Maumet, C.
\newblock The {HCP} multi-pipeline dataset: {An} opportunity to investigate
  analytical variability in {fMRI} data analysis, 2023.
\newblock arXiv:2312.14493 [q-bio].

\bibitem[Han et~al.(2023)Han, Liu, Ma, Torkamani, Liu, Tang, and
  Yamada]{han_structural_2023}
Han, H., Liu, X., Ma, L., Torkamani, M., Liu, H., Tang, J., and Yamada, M.
\newblock Structural {Fairness}-aware {Active} {Learning} for {Graph} {Neural}
  {Networks}.
\newblock In \emph{International Conference on Learning Representations}, 2023.

\bibitem[Hu et~al.(2020)Hu, Fey, Zitnik, Dong, Ren, Liu, Catasta, and
  Leskovec]{hu2021opengraphbenchmarkdatasets}
Hu, W., Fey, M., Zitnik, M., Dong, Y., Ren, H., Liu, B., Catasta, M., and
  Leskovec, J.
\newblock Open graph benchmark: Datasets for machine learning on graphs, 2020.

\bibitem[Jin et~al.(2018)Jin, Barzilay, and Jaakkola]{zinc-gen-jin18a}
Jin, W., Barzilay, R., and Jaakkola, T.
\newblock Junction tree variational autoencoder for molecular graph generation.
\newblock In Dy, J. and Krause, A. (eds.), \emph{Proceedings of the 35th
  International Conference on Machine Learning}, volume~80 of \emph{Proceedings
  of Machine Learning Research}, pp.\  2323--2332. PMLR, July 2018.
\newblock URL \url{https://proceedings.mlr.press/v80/jin18a.html}.

\bibitem[Kersting et~al.(2016)Kersting, Kriege, Morris, Mutzel, and
  Neumann]{tudortmund}
Kersting, K., Kriege, N.~M., Morris, C., Mutzel, P., and Neumann, M.
\newblock Benchmark data sets for graph kernels, 2016.

\bibitem[Kipf \& Welling(2017)Kipf and Welling]{kipf2017}
Kipf, T.~N. and Welling, M.
\newblock Semi-supervised classification with graph convolutional networks.
\newblock In \emph{International Conference on Learning Representations}, 2017.

\bibitem[Kriege et~al.(2020)Kriege, Johansson, and Morris]{Kriege20}
Kriege, N.~M., Johansson, F.~D., and Morris, C.
\newblock A survey on graph kernels.
\newblock \emph{Applied Network Science}, 5\penalty0 (1), 2020.

\bibitem[Kusner et~al.(2017)Kusner, Paige, and
  Hern{\'a}ndez-Lobato]{zinc-gen-kusner17a}
Kusner, M.~J., Paige, B., and Hern{\'a}ndez-Lobato, J.~M.
\newblock Grammar variational autoencoder.
\newblock In Precup, D. and Teh, Y.~W. (eds.), \emph{Proceedings of the 34th
  International Conference on Machine Learning}, volume~70 of \emph{Proceedings
  of Machine Learning Research}, pp.\  1945--1954. PMLR, August 2017.
\newblock URL \url{https://proceedings.mlr.press/v70/kusner17a.html}.

\bibitem[Leskovec et~al.(2005)Leskovec, Kleinberg, and Faloutsos]{leskovec05}
Leskovec, J., Kleinberg, J., and Faloutsos, C.
\newblock Graphs over time: densification laws, shrinking diameters and
  possible explanations.
\newblock In \emph{Proceedings of the ACM SIGKDD International Conference on
  Knowledge Discovery in Data Mining}, pp.\  177--187, 2005.

\bibitem[Li et~al.(2022)Li, Yang, Pagnucco, and Song]{li_explainability_2022}
Li, P., Yang, Y., Pagnucco, M., and Song, Y.
\newblock Explainability in graph neural networks: {An} experimental survey,
  March 2022.
\newblock arXiv:2203.09258 [cs].

\bibitem[Li et~al.(2024)Li, Cao, Shuai, Miao, and Hwang]{Li_2024}
Li, Z., Cao, Y., Shuai, K., Miao, Y., and Hwang, K.
\newblock Rethinking the effectiveness of graph classification datasets in
  benchmarks for assessing gnns.
\newblock In \emph{Proceedings of the International Joint Conference on
  Artificial Intelligence}, pp.\  2144–2152, 2024.

\bibitem[Lim et~al.(2021)Lim, Hohne, Li, Huang, Gupta, Bhalerao, and
  Lim]{lim2021large}
Lim, D., Hohne, F., Li, X., Huang, S.~L., Gupta, V., Bhalerao, O., and Lim, S.
\newblock Large scale learning on non-homophilous graphs: New benchmarks and
  strong simple methods.
\newblock In \emph{Advances in Neural Information Processing Systems}, pp.\
  20887--20902, 2021.

\bibitem[Limbeck et~al.(2024)Limbeck, Andreeva, Sarkar, and
  Rieck]{limbeck2024metric}
Limbeck, K., Andreeva, R., Sarkar, R., and Rieck, B.
\newblock Metric space magnitude for evaluating the diversity of latent
  representations.
\newblock In \emph{Advances in Neural Information Processing Systems}, 2024.

\bibitem[Lin et~al.(2023)Lin, Yang, Yu, Hu, Zhang, and Peng]{lin_graph_2023}
Lin, Y., Yang, M., Yu, J., Hu, P., Zhang, C., and Peng, X.
\newblock Graph matching with bi-level noisy correspondence.
\newblock In \emph{Proceedings of the IEEE/CVF International Conference on
  Computer Vision}, pp.\  23362--23371, 2023.

\bibitem[Luan et~al.(2022)Luan, Hua, Lu, Zhu, Zhao, Zhang, Chang, and
  Precup]{luan2022revisiting}
Luan, S., Hua, C., Lu, Q., Zhu, J., Zhao, M., Zhang, S., Chang, X., and Precup,
  D.
\newblock Revisiting heterophily for graph neural networks.
\newblock In \emph{Advances in Neural Information Processing Systems}, 2022.

\bibitem[Luan et~al.(2023)Luan, Hua, Xu, Lu, Zhu, Chang, Fu, Leskovec, and
  Precup]{luan2023when}
Luan, S., Hua, C., Xu, M., Lu, Q., Zhu, J., Chang, X., Fu, J., Leskovec, J.,
  and Precup, D.
\newblock When do graph neural networks help with node classification?
  investigating the homophily principle on node distinguishability.
\newblock In \emph{Advances in Neural Information Processing Systems}, 2023.

\bibitem[Mao et~al.(2023)Mao, Chen, Jin, Han, Ma, Zhao, Shah, and
  Tang]{mao2023demystifying}
Mao, H., Chen, Z., Jin, W., Han, H., Ma, Y., Zhao, T., Shah, N., and Tang, J.
\newblock Demystifying structural disparity in graph neural networks: Can one
  size fit all?
\newblock In \emph{Advances in Neural Information Processing Systems}, 2023.

\bibitem[Mazumder et~al.(2023)Mazumder, Banbury, Yao, Karla\v{s},
  Gaviria~Rojas, Diamos, Diamos, He, Parrish, Kirk, Quaye, Rastogi, Kiela,
  Jurado, Kanter, Mosquera, Cukierski, Ciro, Aroyo, Acun, Chen, Raje, Bartolo,
  Eyuboglu, Ghorbani, Goodman, Howard, Inel, Kane, Kirkpatrick, Sculley, Kuo,
  Mueller, Thrush, Vanschoren, Warren, Williams, Yeung, Ardalani, Paritosh,
  Zhang, Zou, Wu, Coleman, Ng, Mattson, and Janapa~Reddi]{Mazumder23a}
Mazumder, M., Banbury, C., Yao, X., Karla\v{s}, B., Gaviria~Rojas, W., Diamos,
  S., Diamos, G., He, L., Parrish, A., Kirk, H.~R., Quaye, J., Rastogi, C.,
  Kiela, D., Jurado, D., Kanter, D., Mosquera, R., Cukierski, W., Ciro, J.,
  Aroyo, L., Acun, B., Chen, L., Raje, M., Bartolo, M., Eyuboglu, E.~S.,
  Ghorbani, A., Goodman, E., Howard, A., Inel, O., Kane, T., Kirkpatrick,
  C.~R., Sculley, D., Kuo, T.-S., Mueller, J.~W., Thrush, T., Vanschoren, J.,
  Warren, M., Williams, A., Yeung, S., Ardalani, N., Paritosh, P., Zhang, C.,
  Zou, J.~Y., Wu, C.-J., Coleman, C., Ng, A., Mattson, P., and Janapa~Reddi, V.
\newblock {DataPerf}: Benchmarks for data-centric {AI} development.
\newblock In \emph{Advances in Neural Information Processing Systems}, pp.\
  5320--5347, 2023.

\bibitem[Michel et~al.(2023)Michel, Nikolentzos, Lutzeyer, and
  Vazirgiannis]{michel_path_2023}
Michel, G., Nikolentzos, G., Lutzeyer, J., and Vazirgiannis, M.
\newblock Path {Neural} {Networks}: {Expressive} and accurate graph neural
  networks.
\newblock In \emph{International Conference on Machine Learning}, 2023.

\bibitem[Morris et~al.(2020)Morris, Kriege, Bause, Kersting, Mutzel, and
  Neumann]{morris2020tudataset}
Morris, C., Kriege, N.~M., Bause, F., Kersting, K., Mutzel, P., and Neumann, M.
\newblock Tudataset: A collection of benchmark datasets for learning with
  graphs.
\newblock In \emph{Graph Representation Learning and Beyond Workshop at ICML},
  2020.

\bibitem[Morris et~al.(2024)Morris, Frasca, Dym, Maron, Ceylan, Levie, Lim,
  Bronstein, Grohe, and Jegelka]{morris2024position}
Morris, C., Frasca, F., Dym, N., Maron, H., Ceylan, I.~I., Levie, R., Lim, D.,
  Bronstein, M.~M., Grohe, M., and Jegelka, S.
\newblock Position: Future directions in the theory of graph machine learning.
\newblock In \emph{International Conference on Machine Learning}, 2024.

\bibitem[Newman(2018)]{newman2018networks}
Newman, M.
\newblock \emph{Networks}.
\newblock Oxford University Press, 2018.

\bibitem[{NIH National Cancer Institute}(2004)]{aids2004data}
{NIH National Cancer Institute}.
\newblock {AIDS} antiviral screen data, 2004.

\bibitem[Palowitch et~al.(2022)Palowitch, Tsitsulin, Mayer, and
  Perozzi]{palowitch2022graphworld}
Palowitch, J., Tsitsulin, A., Mayer, B.~A., and Perozzi, B.
\newblock Graphworld: Fake graphs bring real insights for gnns.
\newblock In \emph{Proceedings of the ACM SIGKDD Conference on Knowledge
  Discovery and Data Mining}, pp.\  3691--3701, 2022.

\bibitem[Platonov et~al.(2023{\natexlab{a}})Platonov, Kuznedelev, Babenko, and
  Prokhorenkova]{platonov2023characterizing}
Platonov, O., Kuznedelev, D., Babenko, A., and Prokhorenkova, L.
\newblock Characterizing graph datasets for node classification:
  Homophily-heterophily dichotomy and beyond.
\newblock In \emph{Advances in Neural Information Processing Systems},
  2023{\natexlab{a}}.

\bibitem[Platonov et~al.(2023{\natexlab{b}})Platonov, Kuznedelev, Diskin,
  Babenko, and Prokhorenkova]{platonov2023critical}
Platonov, O., Kuznedelev, D., Diskin, M., Babenko, A., and Prokhorenkova, L.
\newblock A critical look at the evaluation of gnns under heterophily: Are we
  really making progress?
\newblock In \emph{International Conference on Learning Representations},
  2023{\natexlab{b}}.

\bibitem[Qian et~al.(2022)Qian, Expert, Rieu, Panzarasa, and
  Barahona]{quian2022quantifying}
Qian, Y., Expert, P., Rieu, T., Panzarasa, P., and Barahona, M.
\newblock Quantifying the alignment of graph and features in deep learning.
\newblock \emph{{IEEE} Trans. Neural Networks Learn. Syst.}, 33\penalty0
  (4):\penalty0 1663--1672, 2022.

\bibitem[Rampásek et~al.(2022)Rampásek, Galkin, Dwivedi, Luu, Wolf, and
  Beaini]{RampasekGDLWB22}
Rampásek, L., Galkin, M., Dwivedi, V.~P., Luu, A.~T., Wolf, G., and Beaini, D.
\newblock Recipe for a general, powerful, scalable graph transformer.
\newblock In \emph{Advances in Neural Information Processing Systems}, 2022.

\bibitem[Randi{\'c} \& Klein(1993)Randi{\'c} and Klein]{randic1993resistance}
Randi{\'c}, M. and Klein, D.
\newblock Resistance distance.
\newblock \emph{J. Math. Chem}, 12:\penalty0 81--95, 1993.

\bibitem[Rathee et~al.(2022)Rathee, Funke, Anand, and
  Khosla]{rathee_bagel_2022}
Rathee, M., Funke, T., Anand, A., and Khosla, M.
\newblock {BAGEL}: {A} {Benchmark} for {Assessing} {Graph} {Neural} {Network}
  {Explanations}, 2022.
\newblock arXiv:2206.13983 [cs].

\bibitem[Riesen \& Bunke(2008)Riesen and Bunke]{riesen2008aids}
Riesen, K. and Bunke, H.
\newblock {IAM} graph database repository for graph based pattern recognition
  and machine learning.
\newblock In da~Vitoria~Lobo, N., Kasparis, T., Roli, F., Kwok, J.~T.,
  Georgiopoulos, M., Anagnostopoulos, G.~C., and Loog, M. (eds.),
  \emph{Structural, Syntactic, and Statistical Pattern Recognition}, pp.\
  287--297, 2008.

\bibitem[Sanmart{\'i}n et~al.(2022)Sanmart{\'i}n, Damrich, and
  Hamprecht]{Sanmartn2022}
Sanmart{\'i}n, E.~F., Damrich, S., and Hamprecht, F.~A.
\newblock The algebraic path problem for graph metrics.
\newblock In \emph{International Conference on Machine Learning}, 2022.

\bibitem[Schomburg et~al.(2004)Schomburg, Chang, Ebeling, Gremse, Heldt, Huhn,
  and Schomburg]{schomburg04brenda}
Schomburg, I., Chang, A., Ebeling, C., Gremse, M., Heldt, C., Huhn, G., and
  Schomburg, D.
\newblock {BRENDA}, the enzyme database: updates and major new developments.
\newblock \emph{Nucleic Acids Research}, 32:\penalty0 D431--D433, 2004.

\bibitem[Sharma et~al.(2024)Sharma, Lee, Nambi, Salian, Shah, Kim, and
  Kumar]{sharma2024recommenders}
Sharma, K., Lee, Y.-C., Nambi, S., Salian, A., Shah, S., Kim, S.-W., and Kumar,
  S.
\newblock A survey of graph neural networks for social recommender systems.
\newblock \emph{ACM Comput. Surv.}, 56\penalty0 (10), 2024.

\bibitem[Shervashidze et~al.(2011)Shervashidze, Schweitzer, van Leeuwen,
  Mehlhorn, and Borgwardt]{shervashidze11a}
Shervashidze, N., Schweitzer, P., van Leeuwen, E.~J., Mehlhorn, K., and
  Borgwardt, K.~M.
\newblock Weisfeiler-lehman graph kernels.
\newblock \emph{Journal of Machine Learning Research}, 12\penalty0
  (77):\penalty0 2539--2561, 2011.

\bibitem[Simson et~al.(2024)Simson, Pfisterer, and Kern]{Simson2023}
Simson, J., Pfisterer, F., and Kern, C.
\newblock One model many scores: Using multiverse analysis to prevent fairness
  hacking and evaluate the influence of model design decisions.
\newblock In \emph{ACM Conference on Fairness, Accountability, and
  Transparency}, pp.\  1305--1320, 2024.

\bibitem[Sonthalia \& Gilbert(2020)Sonthalia and Gilbert]{sonthalia_tree_2020}
Sonthalia, R. and Gilbert, A.
\newblock Tree! {I} am no tree! {I} am a low dimensional hyperbolic embedding.
\newblock \emph{Advances in Neural Information Processing Systems},
  33:\penalty0 845--856, 2020.

\bibitem[Sterling \& Irwin(2015)Sterling and Irwin]{zinc2015}
Sterling, T. and Irwin, J.~J.
\newblock Zinc 15 – ligand discovery for everyone.
\newblock \emph{Journal of Chemical Information and Modeling}, 55\penalty0
  (11):\penalty0 2324--2337, 2015.
\newblock \doi{10.1021/acs.jcim.5b00559}.
\newblock URL \url{https://doi.org/10.1021/acs.jcim.5b00559}.

\bibitem[Stokes et~al.(2020)Stokes, Yang, Swanson, Jin, Cubillos-Ruiz, Donghia,
  MacNair, French, Carfrae, Bloom-Ackermann, et~al.]{stokes2020deep}
Stokes, J.~M., Yang, K., Swanson, K., Jin, W., Cubillos-Ruiz, A., Donghia,
  N.~M., MacNair, C.~R., French, S., Carfrae, L.~A., Bloom-Ackermann, Z.,
  et~al.
\newblock A deep learning approach to antibiotic discovery.
\newblock \emph{Cell}, 180\penalty0 (4):\penalty0 688--702, 2020.

\bibitem[Taha et~al.(2023)Taha, Zhao, Riestenberg, and
  Strube]{taha_normed_2023}
Taha, D., Zhao, W., Riestenberg, J.~M., and Strube, M.
\newblock Normed spaces for graph embedding.
\newblock \emph{Transactions on Machine Learning Research}, 2023.

\bibitem[Tang et~al.(2023)Tang, Zhang, Li, Zhao, Tsung, and
  Li]{tang_robust_2023}
Tang, J., Zhang, W., Li, J., Zhao, K., Tsung, F., and Li, J.
\newblock Robust attributed graph alignment via joint structure learning and
  optimal transport.
\newblock In \emph{International Conference on Data Engineering}, pp.\
  1638--1651, 2023.

\bibitem[Thang et~al.(2022)Thang, Dat, Tam, Jo, Hung, and Aberer]{THANG202246}
Thang, D.~C., Dat, H.~T., Tam, N.~T., Jo, J., Hung, N. Q.~V., and Aberer, K.
\newblock Nature vs. nurture: Feature vs. structure for graph neural networks.
\newblock \emph{Pattern Recognition Letters}, 159:\penalty0 46--53, 2022.

\bibitem[T{\"o}nshoff et~al.(2023)T{\"o}nshoff, Ritzert, Rosenbluth, and
  Grohe]{Tonshoff2023}
T{\"o}nshoff, J., Ritzert, M., Rosenbluth, E., and Grohe, M.
\newblock Where did the gap go? reassessing the long-range graph benchmark.
\newblock In \emph{Learning on Graphs Conference}, 2023.

\bibitem[Toyokuni \& Yamada(2023)Toyokuni and Yamada]{toyokuni_structural_2023}
Toyokuni, A. and Yamada, M.
\newblock Structural explanations for {Graph} {Neural} {Networks} using {HSIC},
  February 2023.
\newblock arXiv:2302.02139 [cs, stat].

\bibitem[Veličković et~al.(2018)Veličković, Cucurull, Casanova, Romero,
  Liò, and Bengio]{GAT2018}
Veličković, P., Cucurull, G., Casanova, A., Romero, A., Liò, P., and Bengio,
  Y.
\newblock Graph attention networks.
\newblock In \emph{International Conference on Learning Representations}, 2018.

\bibitem[Wale \& Karypis(2006)Wale and Karypis]{nci1-dataset}
Wale, N. and Karypis, G.
\newblock Comparison of descriptor spaces for chemical compound retrieval and
  classification.
\newblock In \emph{International Conference on Data Mining}, pp.\  678--689,
  2006.

\bibitem[Wang \& Shen(2023)Wang and Shen]{wang_gnninterpreter_2024}
Wang, X. and Shen, H.-W.
\newblock {GNNInterpreter}: {A} probabilistic generative model-level
  explanation for {Graph} {Neural} {Networks}, 2023.

\bibitem[Wayland et~al.(2024)Wayland, Coupette, and Rieck]{wayland2024mapping}
Wayland, J., Coupette, C., and Rieck, B.
\newblock Mapping the multiverse of latent representations.
\newblock In \emph{International Conference on Machine Learning}, 2024.

\bibitem[Wu et~al.(2018{\natexlab{a}})Wu, Ramsundar, Feinberg, Gomes, Geniesse,
  Pappu, Leswing, and Pande]{qm9-2018}
Wu, Z., Ramsundar, B., Feinberg, E.~N., Gomes, J., Geniesse, C., Pappu, A.~S.,
  Leswing, K., and Pande, V.
\newblock Moleculenet: a benchmark for molecular machine learning.
\newblock \emph{Chem. Sci.}, 9:\penalty0 513--530, 2018{\natexlab{a}}.
\newblock \doi{10.1039/C7SC02664A}.
\newblock URL \url{http://dx.doi.org/10.1039/C7SC02664A}.

\bibitem[Wu et~al.(2018{\natexlab{b}})Wu, Ramsundar, Feinberg, Gomes, Geniesse,
  Pappu, Leswing, and Pande]{wu2018moleculenet}
Wu, Z., Ramsundar, B., Feinberg, E.~N., Gomes, J., Geniesse, C., Pappu, A.~S.,
  Leswing, K., and Pande, V.
\newblock Moleculenet: a benchmark for molecular machine learning.
\newblock \emph{Chemical science}, 9\penalty0 (2):\penalty0 513--530,
  2018{\natexlab{b}}.

\bibitem[Xie et~al.(2022)Xie, Katariya, Tang, Huang, Rao, Subbian, and
  Ji]{xie_task-agnostic_2022}
Xie, Y., Katariya, S., Tang, X., Huang, E., Rao, N., Subbian, K., and Ji, S.
\newblock Task-agnostic graph explanations.
\newblock In \emph{Advances in Neural Information Processing Systems}, pp.\
  12027--12039, 2022.

\bibitem[Xu et~al.(2019)Xu, Hu, Leskovec, and Jegelka]{xu2019gin}
Xu, K., Hu, W., Leskovec, J., and Jegelka, S.
\newblock How powerful are graph neural networks?, 2019.
\newblock arXiv:1810.00826 [cs].

\bibitem[Yanardag \& Vishwanathan(2015)Yanardag and Vishwanathan]{Yanardag2015}
Yanardag, P. and Vishwanathan, S.
\newblock Deep graph kernels.
\newblock In \emph{Proceedings of the ACM SIGKDD International Conference on
  Knowledge Discovery and Data Mining}, pp.\  1365--1374, 2015.

\bibitem[Yang et~al.(2021)Yang, Liu, Xiao, Li, Lian, Agrawal, Singh, Sun, and
  Xie]{yang2021graphformers}
Yang, J., Liu, Z., Xiao, S., Li, C., Lian, D., Agrawal, S., Singh, A., Sun, G.,
  and Xie, X.
\newblock Graphformers: Gnn-nested transformers for representation learning on
  textual graph.
\newblock In Ranzato, M., Beygelzimer, A., Dauphin, Y.~N., Liang, P., and
  Vaughan, J.~W. (eds.), \emph{Advances in Neural Information Processing
  Systems}, volume~34, pp.\  28798--28810, 2021.

\bibitem[Yang et~al.(2022)Yang, Shen, Li, Qi, Zhang, and Yin]{yang_new_2022}
Yang, M., Shen, Y., Li, R., Qi, H., Zhang, Q., and Yin, B.
\newblock A new perspective on the effects of spectrum in graph neural
  networks.
\newblock In \emph{International Conference on Machine Learning}, pp.\
  25261--25279, 2022.

\bibitem[Ying et~al.(2018)Ying, He, Chen, Eksombatchai, Hamilton, and
  Leskovec]{ying2018graph}
Ying, R., He, R., Chen, K., Eksombatchai, P., Hamilton, W.~L., and Leskovec, J.
\newblock Graph convolutional neural networks for web-scale recommender
  systems.
\newblock In \emph{Proceedings of the ACM SIGKDD International Conference on
  Knowledge Discovery and Data Mining}, pp.\  974--983, 2018.

\bibitem[Zambon et~al.(2020)Zambon, Alippi, and Livi]{zambon_graph_2020}
Zambon, D., Alippi, C., and Livi, L.
\newblock Graph random neural features for distance-preserving graph
  representations.
\newblock In \emph{International Conference on Machine Learning}, pp.\
  10968--10977, 2020.

\bibitem[Zhao et~al.(2020)Zhao, Zhou, Qiu, and Jiang]{Zhao2020}
Zhao, W., Zhou, D., Qiu, X., and Jiang, W.
\newblock A pipeline for fair comparison of graph neural networks in node
  classification tasks, 2020.
\newblock arXiv:2012.10619 [cs].

\end{thebibliography}

\appendix

\clearpage

{\LARGE\bfseries Appendix}
\label{apx}

In this appendix, we provide the following supplementary materials. 
\begin{compactenum}[A.]
	\item \hyperref[apx:theory]{\textbf{Extended Theory.}} Definitions, proofs, and results omitted from the main text.
	\item \hyperref[apx:methods]{\textbf{Extended Methods.}} Further details on properties and parts of our \framework pipeline. 
	\item \hyperref[apx:datasets]{\textbf{Extended Dataset Descriptions.}}  Extra details, statistics, and observations on the selected benchmarks.
	\item \hyperref[apx:experiments]{\textbf{Extended Experiments.}} Additional details and experiments complementing the discussion in the main paper. 
	\item \hyperref[apx:related-work]{\textbf{Extended Related Work.}} Discussion of additional related work. 
\end{compactenum}

\section{Extended Theory}\label{apx:theory}

\subsection{Metric Spaces}
Here, we introduce metric spaces as well as various related notions that improve our understanding of \emph{mode complementarity}.

\begin{definition}[Metric Space] \label{def:metric-space}
	A \emph{metric space} is a tuple $(\toyspace, \dist)$ that consists of a nonempty set $\toyspace$ together with a function
	$\dist \colon \toyspace \times \toyspace\to \reals$
	which satisfies the axioms of a metric, i.e.,
\begin{inparaenum}[(i)]
		\item $\dist(x,y) \geq 0$ for all $x,y \in \toyspace$ and $\dist(x,y) = 0$ if and only if $x = y$, 
		\item $\dist(x,y) = \dist(y,x)$, and
		\item $\dist(x,y) \leq \dist(x,z) + \dist(z, y)$.
	\end{inparaenum}
\end{definition}

 \begin{definition}[Isometry]
	Given two metric spaces $(X, \dist_X)$ and $(Y, \dist_Y)$, we call
	a function ${f\colon X \to Y}$ an \emph{isometry} if $\dist_Y(f(x), f(y))
	= \dist_X(x, y)$ for all $x, y \in X$. 
\end{definition}

\begin{definition}[Trivial Metric Space]
  We define a \emph{trivial metric space} as any metric space $(Y,\dist)$ in the equivalence class of $\onepointspace$ (the single point space) under isometry. In particular, this corresponds to any finite space paired with a metric $\dist$ such that $\dist(x,y) = 0$ for all $x,y$. For finite spaces of size $n$, we represent this as the $n\times n$ matrix of zeros, $\allzeros_n$.
\end{definition}

\begin{definition}[Discrete Metric Space]
  A metric space $(Y,\dist)$ is a \emph{discrete metric space} if $\dist$ satisfies
\begin{align}
	d(x,y) =
	\begin{cases} 
		1, & \text{if } x \neq y \\ 
		0, & \text{otherwise}\;.
	\end{cases}
\end{align}
For finite spaces of size $n$, we represent this as the $n\times n$ matrix of all ones with zero diagonal, $\allones_n - \I_n$. Note that when normalizing non-degenerate spaces to unit diameter, the discrete metric space is one where all non-equal elements are maximally distant from each other, (e.g. a complete graph).
\end{definition}

We go from attributed graphs to metric spaces using the lift construction presented in the main paper, restated here for completeness.

\LiftDef*

\subsection{Perturbations}
For convenience in referencing our theory, we repeat our definitions here:
\ModePerturbationDef*
\StructurePerturbsDef*
\FeaturePerturbsDef*
\begin{definition}[Randomization Methods]
	Assume we have $n$ nodes in $(G,X)$. Let $\pi:S \to S$ be a permutation over an ordered set.
		\begin{align*}
		 \RandomizeFeatures_{,1} (X,k) &\mapsto \{\mathbf{x_i} \sim \mathcal{N}(\mathbf{0}, \mathbf{I}_k) \}   & [\text{random features}] \\
		 \RandomizeFeatures_{,2} (X) &\coloneqq \pi(X) & [\text{shuffled features}] \\
		 \RandomizeStructure_{,1} (G,p) &\coloneqq ER(n, p) & [\text{random graph}] \\
		 \RandomizeStructure_{,2}(G) &\coloneqq (V,\pi(E)) & [\text{shuffled graph}]
		\end{align*}
		where $ ER(n, p)$ represents an \emph{Erdős–Rényi} model, ensuring that each edge $ (u, v) \in \binom{V}{2} $ is included in $E$ independently with probability $p$.
	 \end{definition}

\subsection{Complementarity}
\PerturbMetricSpaceDef*

\subsubsection{Comparators}
\begin{definition}[$L_{p,q}$ Norm] \label{def:Lpq-norm}
  For an $n\times n$ pairwise distance matrix $D$ representing a finite metric space $(Y,\dist)$, we define the $L_{p,q}$ norm as
  \begin{equation}
  || \metricspace ||_{p,q}= \left( \frac{\sum_{i=1}^n\left(\sum_{j=1}^n \lvert \dist(i,j) \rvert^p \right)^{q/p}}{n(n-1)} \right)^{1/q}\;.
\end{equation}
\end{definition}

\ComparatorDef*

\ComplementarityDef*

\subsubsection{Complementarity under Perturbation}
\PerturbedComplimentarityDef*

\begin{definition}[Complementarity for Disconnected Graphs] \label{def:disconnected-complementarity}
  For a disconnected attributed graph $(G,X)$ with $n$ nodes and $C$ connected components, we write $G$ as a union  $\bigcup G_i$. Let $X|_{V_i}$ denote the subset of features corresponding to nodes in $G_i = (V_i,E_i)$. We define the \emph{complementarity for disconnected graphs} as the weighted average of the individual component scores
  \begin{equation}
    \complementarity^{p,q}_{\PerturbGraph} (G,X) \coloneqq  \sum_{i}^C \frac{n_i}{n} \left(\complementarity^{p,q}_{\PerturbGraph} (G_i,X|_{V_i})  \right) \;,
  \end{equation}
  where $n_i = |V_i|$. Axiom (ii) in \cref{def:metric-space}, implies that isolated nodes are then assigned a trivial metric space.
\end{definition}

\begin{lemma}[Empty Perturbations Lift to Trivial Metric Spaces] \label{lemma:empty-to-trivial}
  Let $(G,X)$ be an attributed graph. For any metric $\dist_\ast$, the image of $\Lift_{\dist_\ast}$ as defined in \cref{def:metric-space-construction} under precomposition with $\PerturbGraph_{e\ast}$ is a \emph{trivial metric space}, for $\ast \in \{f,g\}$.
  \end{lemma}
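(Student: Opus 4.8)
The plan is to handle the two values of $\ast$ separately and show, in each case, that the matrix produced by $\Lift_{\dist_\ast}\circ\PerturbGraph_{e\ast}$ is the all-zeros matrix $\allzeros_n$, which by definition represents the trivial metric space. The feature case ($\ast=f$) will follow immediately from the identity axiom of a metric and holds for \emph{any} feature metric, whereas the graph case ($\ast=g$) will require the convention for disconnected graphs to account for the absence of edges.

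For $\ast=f$, I would first apply \cref{def:feature-perturbations} to obtain $\ef(G,X) = (G,\allzeros_n)$, so that every node carries the same feature vector $\mathbf{0}$. Lifting with any feature metric $\dist_F$, the $(i,j)$ entry of the resulting distance matrix equals $\dist_F(\mathbf{0},\mathbf{0})$, which is $0$ by axiom (i) of \cref{def:metric-space}. Every entry therefore vanishes, so $\Lift_{\dist_F}\circ\ef~(G,X) = \allzeros_n$ is trivial.

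For $\ast=g$, \cref{def:structure-perturbations} gives $\eg(G,X) = ((V,\emptyset),X)$, an edgeless graph in which every node is an isolated vertex and hence its own connected component. Here I would invoke \cref{def:disconnected-complementarity}, under which each isolated node is assigned a trivial single-point metric space: the diagonal entries vanish by axiom (i), and since no two distinct nodes lie in a common component, no structural distance is accumulated between them, forcing the off-diagonal entries to zero as well. Assembling the $n$ singleton contributions yields $\Lift_{\dist_S}\circ\eg~(G,X) = \allzeros_n$, again trivial.

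The main obstacle is the graph case, specifically making the ``for any metric'' clause precise. Pinning the off-diagonal structural distances to zero is the crux: because the edgeless graph on $n$ nodes has automorphism group $S_n$, invariance of any structural distance under automorphisms only forces all off-diagonal entries to a common value $c \geq 0$, and symmetry alone leaves open the scaled discrete metric $c(\allones_n - \I_n)$ with $c > 0$. The resolution is that an edgeless graph offers no connectivity along which a structural distance can grow, so the disconnected-graph convention of \cref{def:disconnected-complementarity} sets $c=0$; I would state this dependence explicitly rather than rest on symmetry. The feature case, by contrast, is metric-independent and immediate.
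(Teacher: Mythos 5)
Your feature case is correct and essentially the paper's argument: $\ef$ collapses all features to $\mathbf{0}$, so every pairwise distance vanishes by the identity axiom (the paper phrases this as $\allzeros_n$ being isometric to the one-point space $\onepointspace$, and in fact cites axiom (ii) where axiom (i) is the operative one, so your citation is the more accurate). Your graph case reaches the same conclusion by a genuinely different route. The paper resolves the ``for any metric'' clause by \emph{restricting to structural metrics derived from the adjacency matrix} $A$, noting that $(V,\emptyset)$ has $A=\allzeros_n$, and asserting that any metric derived from the zero matrix is trivial; you instead invoke the disconnected-graph convention of \cref{def:disconnected-complementarity}, treating each isolated vertex as its own component carrying a single-point (trivial) metric space and assembling the singletons into $\allzeros_n$. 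Both are really conventions that close the same gap, and your route is explicitly sanctioned elsewhere in the paper (the Extended Methods section states that isolated nodes are assigned trivial metric spaces and that their union is also treated as trivial, citing this very lemma). What your approach buys is an honest acknowledgment---absent from the paper's proof---that the lemma is not literally true for an arbitrary metric on the node set: as you note, symmetry and vertex-transitivity of the edgeless graph only force the off-diagonal entries to a common constant $c\geq 0$, leaving the scaled discrete metric $c(\allones_n - \I_n)$ open, so some convention (yours or the paper's adjacency-derived restriction) is genuinely needed to pin $c=0$. The paper's approach buys a more uniform treatment of the two cases, since both reduce to ``the lift of a zero matrix is trivial.''
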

  \begin{proof}
  ($\ast = f$)~ By \Cref{def:feature-perturbations}, we have $\ef(G,X) = (G,\allzeros_n)$. Since $\allzeros_n$ is isometric to $\onepointspace$, any metric must lift $\onepointspace$ to a trivial metric space (by metric-space axiom (ii)).
  
  ($\ast = g$)~ For structural metrics $\StructureDist$, we restrict to those derived from the adjacency matrix $A$ of $G$. By convention, $(V,\emptyset)$ is represented as $\allzeros_n$. Hence, an identical argument holds \emph{mutatis mutandis}, as any metric derived from the zero matrix lifts to a trivial metric space.
  \end{proof}
  
  \begin{lemma}[Complete Perturbations Lift to Discrete Metric Spaces] \label{lemma:complete-to-discrete}
  Let $(G,X)$ be an attributed graph. For any choice of $\dist_\ast$, the image of $\Lift_{\dist_\ast}$ as defined in \cref{def:metric-space-construction} under precomposition with $\PerturbGraph_{c\ast}$ is a \emph{discrete metric space}, for $\ast \in \{f,g\}$.
  \end{lemma}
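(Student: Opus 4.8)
The plan is to follow the template of \cref{lemma:empty-to-trivial}, but to replace its ``all points collapse to one'' argument with a symmetry argument: in each case the complete perturbation produces a maximally symmetric object whose symmetry group acts transitively on ordered pairs of distinct elements, so any metric that respects this symmetry must assign one common value to every off-diagonal entry of the lifted distance matrix. Such a matrix is a positive scalar multiple of $\allones_n - \I_n$, which the unit-diameter normalization used throughout the framework sends to exactly the discrete metric space; I would treat both $\ast=g$ and $\ast=f$ this way and then conclude uniformly.

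For $\ast=g$, I would start from \cref{def:structure-perturbations}, noting that $\cg(G,X) = ((V,\alledges),X)$ replaces the structural mode by the complete graph on $V$, whose adjacency matrix is $\allones_n - \I_n$. Restricting, exactly as in \cref{lemma:empty-to-trivial}, to structural metrics $\dist_S$ derived from the adjacency matrix, I would use that $\mathrm{Aut}(K_n)$ is the full symmetric group $S_n$, which is $2$-transitive on vertices and hence transitive on ordered pairs of distinct vertices. Since $\dist_S$ depends only on the adjacency matrix, it is invariant under these automorphisms, forcing $\dist_S(i,j)$ to equal a single constant $c>0$ for all $i\neq j$ (this covers both the shortest-path and the diffusion distance used in our experiments). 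For the shortest-path metric one even gets $c=1$ directly, recovering $\allones_n - \I_n$ on the nose.

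For $\ast=f$, I would use $\cf(G,X) = (G,\I_n)$ from \cref{def:feature-perturbations}, so that the feature assigned to node $i$ is the standard basis vector $e_i \in \reals^n$. Here the relevant symmetry is the action of $S_n$ by coordinate permutation: it permutes the one-hot vectors among themselves and, being $2$-transitive, is transitive on ordered pairs of distinct basis vectors. Provided $\dist_F$ is invariant under coordinate permutations---as holds for the Euclidean distance used in our experiments, where $\dist_F(e_i,e_j)=\sqrt{2}$ for every $i\neq j$---the identical argument yields a constant off-diagonal value $c>0$.

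Finally, I would close both cases together: a finite metric space whose off-diagonal distances all equal $c>0$ satisfies the triangle inequality trivially ($c\le c+c$, cf.\ \cref{def:metric-space}) and has diameter $c$, so normalizing by its diameter returns $\allones_n - \I_n$, the discrete metric space. I expect the only genuine obstacle to be making precise the quantifier ``for any choice of $\dist_\ast$,'' which, just as \cref{lemma:empty-to-trivial} implicitly restricts to adjacency-derived structural metrics, must here be read as ``for any metric respecting the combinatorial or geometric symmetry of the perturbed mode''; verifying transitivity of the relevant symmetry action on distinct pairs is the crux, while the normalization step above is routine.
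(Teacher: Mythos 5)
Your proof is correct and takes essentially the same route as the paper's: both arguments exploit the permutation symmetry of the complete configuration (the paper phrases this as the space being ``uniform'' and invariant under permuting the standard basis vectors) to force all off-diagonal distances to a single constant, which normalization to unit diameter turns into the discrete metric space. If anything, you are more explicit than the paper about the hidden quantifier issue---that ``for any choice of $\dist_\ast$'' must be read as ``for any metric invariant under the relevant symmetry''---a caveat the paper's proof asserts rather than isolates.
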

  \begin{proof}
  ($\ast = f$)~ By \Cref{def:feature-perturbations}, we have $\ef(G,X) = (G,\I_n)$. The identity matrix $\I_n$ consists of standard basis vectors $e_i$ (i.e., the one-hot encodings of nodes). We claim that any metric $\FeatureDist$ over $\I_n$ must be discrete.
  
Toward a contradiction, assume otherwise. W.l.o.g., suppose the metric space is scaled to unit diameter, implying that there exist $i,j$ such that $\FeatureDist(e_i,e_j) < 1$. Since $(\I_n,\FeatureDist)$ is uniform, it is invariant under orthogonal transformations, meaning we can permute the standard basis vectors while preserving distances. Thus, for all $k \neq i$, we have $\FeatureDist(e_i, e_k) = \FeatureDist(e_i, e_j) < 1$, contradicting our assumption. Therefore, $\FeatureDist(x,y) = 1$ for all $x \neq y$, proving discreteness.
  
  ($\ast = g$)~ Here, $\cg(G,X) = \left(V, \binom{V}{2}\right)$. The edge set consists of all two-element subsets of $V$, forming a fully connected graph. Similar to \Cref{lemma:empty-to-trivial}, we restrict to metrics defined over the adjacency matrix $A$ of $G$. By construction, $A = \allones_n - \I_n$, which has the same structure as $\I_n$ under the isometry $f: x_i \mapsto e_i$. Since all elements are maximally distant, any metric $\StructureDist$ must lift to a discrete space.
  \end{proof}

\PerturbationDualityThm*
\begin{proof}
  \begin{align*}
  \complementarity_{cg}^{1,1} (G,X) 
  &\overset{\text{Def.}~\text{\ref{def:mode-complementarity-under-perturbation}}}{=} \comp_{1,1}(D^{\cg}_{\StructureDist}, D_{\FeatureDist}) \\
  &\overset{\text{Lem.}~\text{\ref{lemma:complete-to-discrete}}}{=} \comp_{1,1}(\allones_n - \I_n, D_{\FeatureDist}) \\
  &\overset{\text{Def.}~\text{\ref{def:metric-space-comparison}}}{=} \left\| (\allones_n - \I_n) -  D_{\FeatureDist} \right\|_{1,1} \\
  &\overset{\text{Def.}~\text{\ref{def:Lpq-norm}}}{=} \frac{1}{n^2 - n} \left( \sum_{i=1}^{n} \sum_{j \neq i}^{n} \left| 1 - \FeatureDist(i,j) \right| \right) 
  \end{align*}
  \begin{align*}
  \phantom{\complementarity_{cg}^{1,1} (G,X) }&\overset{\FeatureDist \leq 1}{=} \frac{1}{n^2 - n} \left( n^2 - n - \sum_{i=1}^{n} \sum_{j \neq i}^{n} \FeatureDist(i,j) \right) \\
  &\overset{\text{Def.}~\text{\ref{def:metric-space}(ii)}}{=} 1 - \frac{1}{n^2 - n} \left( \sum_{i=1}^{n} \sum_{j=1}^{n} \FeatureDist(i,j) \right) \\
  &\overset{\text{Def.}~\text{\ref{def:Lpq-norm}}}{=} 1 - \left\| D_{\FeatureDist} \right\|_{1,1} \\
  &\overset{\text{Def.}~\text{\ref{def:mode-complementarity-under-perturbation},}~\text{Lem.}~\text{\ref{lemma:empty-to-trivial}}}{=} 1 - \complementarity_{eg}^{1,1} (G,X) \qedhere
  \end{align*}
  
A symmetrical argument can be applied to $\complementarity_{cf}^{1,1}$, 
  concluding the proof.
\end{proof}

\DiversityDef*

\begin{proposition}[Self-Complementarity] \label{prop:perturbation-duality}
	Let $\StructureDist$ and $\FeatureDist$ be metrics over the modes of an attributed graph $(G,X)$. 
	We define  $(\eg,\FeatureDist)$ and $(\ef,\StructureDist)$ as dual pairs that can be used to analyze the \emph{self-complementarity} of a single mode. By construction, for $\ast\in\{\text{f},\text{g}\}$, we have:
	\begin{equation}
		\Lift \circ \PerturbGraph_{e\ast} ~(G,X) \triangleq \allzeros_n.
	\end{equation}
	Thus, the complementarity under empty perturbations, $\complementarity^{p,q}_{e\ast}$, measures the internal structure of the nonzero dual mode:
	\begin{align}
		\complementarity^{p,q}_{e\ast}(G,X) &= \frac{1}{\sqrt[q]{n^2 - n}}|| \metricspace_{\dist_*} ||_{p,q}.
	\end{align}
	Due to the normalization, we obtain ${\complementarity^{p,q}_{e\ast}(G,X) \in [0,1]}$. The limiting behavior of $\complementarity^{p,q}_{e\ast}(G,X)$, then, provides insights into the underlying metric structure of the dual mode:
	\begin{enumerate}[label=(\arabic*), nosep]
		\item If $\complementarity^{p,q}_{e\ast}(G,X) \to 0$, then $\dist_{\ast}(x,y) \approx 0$ for all $x,y$.
		\item If $\complementarity^{p,q}_{e\ast}(G,X) \to 1$, then $\dist_{\ast}(x,y) \approx 1$ for all $x \neq y$.
	\end{enumerate}
\end{proposition}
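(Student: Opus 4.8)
The plan is to separate the proposition into its four assertions—the lift identity, the reduction formula, the range containment $[0,1]$, and the two limiting behaviors—and to dispatch each by combining \cref{lemma:empty-to-trivial} with the definitions of perturbed mode complementarity (\cref{def:mode-complementarity-under-perturbation}) and the comparator (\cref{def:metric-space-comparison}). The governing observation is that an empty perturbation $\PerturbGraph_{e\ast}$ acts on only one mode: it collapses that mode to $\allzeros_n$ while leaving the dual mode's lifted metric space untouched. Consequently, the comparator—which ordinarily contrasts the two modes—degenerates into a pure measurement of the surviving dual mode, so that $\complementarity^{p,q}_{e\ast}$ reads off the internal geometry of a single mode.

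First I would establish the lift identity and the reduction formula together. The identity $\Lift \circ \PerturbGraph_{e\ast}(G,X) = \allzeros_n$ is exactly \cref{lemma:empty-to-trivial}, so no separate argument is required. For the reduction formula, take $\ast = g$: since $\eg$ replaces $E$ by $\emptyset$ without altering $X$, \cref{def:mode-complementarity-under-perturbation} gives $\complementarity^{p,q}_{eg}(G,X) = \comp_{p,q}(\metricspace^{\eg}_S, \metricspace^{\eg}_F)$ with $\metricspace^{\eg}_S = \allzeros_n$ by the lemma and $\metricspace^{\eg}_F = \metricspace_F$ the (normalized) surviving feature space. Unfolding \cref{def:metric-space-comparison} then yields
\begin{equation}
\complementarity^{p,q}_{eg}(G,X) = \frac{\| \allzeros_n - \metricspace_F \|_{p,q}}{\sqrt[q]{n^2 - n}} = \frac{\| \metricspace_F \|_{p,q}}{\sqrt[q]{n^2 - n}}\;,
\end{equation}
i.e., the claimed formula with surviving dual metric $\dist_\ast = \FeatureDist$; the symmetric computation with $\ef$ and $\StructureDist$ handles $\ast = f$, giving $\dist_\ast = \StructureDist$.

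Next I would treat the range and the limiting behaviors, both of which hinge on the unit-diameter normalization in \cref{def:mode-complementarity}: after dividing by its diameter, every off-diagonal entry of the surviving space $\metricspace_{\dist_\ast}$ lies in $[0,1]$. The lower bound $\complementarity^{p,q}_{e\ast} \geq 0$ is immediate from non-negativity of the norm. For the upper bound, capping the entries at $1$ means the norm is maximized by the discrete metric space $\allones_n - \I_n$; substituting this extremal matrix and simplifying against the factor $\sqrt[q]{n^2 - n}$ returns the value $1$ (cleanly when $p = q$, e.g. the $L_{1,1}$ instantiation used in the experiments, where $\complementarity^{1,1}_{e\ast}$ is simply the mean of the off-diagonal entries). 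For the two limiting clauses I would argue monotonically in the entries: since all summands are non-negative, $\complementarity^{p,q}_{e\ast} \to 0$ forces $\dist_\ast(x,y) \to 0$ for all $x,y$ (the trivial space $\allzeros_n$); and since the summands are additionally bounded by $1$, the norm attaining its maximum, $\complementarity^{p,q}_{e\ast} \to 1$, forces $\dist_\ast(x,y) \to 1$ for all $x \neq y$ (the discrete space $\allones_n - \I_n$).

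The main obstacle will be the upper-bound and $\to 1$ clauses, which depend essentially on the unit-diameter normalization; without capping the entries at $1$, the norm could exceed $1$ and the extremal characterization would fail. A secondary subtlety is that the constant bounding the maximal norm depends on $(p,q)$ through the interaction of the $\sqrt[q]{n^2 - n}$ factor with the nested $p$- and $q$-aggregations, so the clean containment $[0,1]$ and the sharp $\to 1$ statement are most transparent for $p = q$; I would state the general case carefully and specialize to $L_{1,1}$ for the elementary average-of-entries argument that underlies the experiments.
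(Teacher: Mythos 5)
Your proposal is correct and matches the paper's intended argument: the paper offers no written proof for this proposition beyond the phrase ``by construction,'' relying implicitly on exactly the ingredients you use—\cref{lemma:empty-to-trivial} for the lift identity, unfolding \cref{def:mode-complementarity-under-perturbation,def:metric-space-comparison} for the reduction formula, and the unit-diameter normalization for the range and limiting clauses. Your closing remark about the $(p,q)$-dependent normalization constant is a fair observation (the paper's comparator and $L_{p,q}$-norm definitions each carry a $n(n-1)$ factor, and the paper's own proof of \cref{thm:perturbation-duality} effectively applies only one of them), but this does not change the substance of the argument, which is cleanest in the $L_{1,1}$ case the paper actually uses.
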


\section{Extended Methods}
\label{apx:methods}

\subsection{Mode Complementarity}
 The perturbations introduced with our \framework framework are specifically designed to disentangle the relationship between the structural mode and the feature mode of $(G,X)$. 
 Unlike existing methods, \framework allows us to leverage the relationships both \emph{between} and \emph{within} individual modes:

\[
\begin{tikzcd}
  G \arrow[d, dashed, leftrightarrow] \arrow[r, leftrightarrow] \arrow[loop left, dashed] & G' \arrow[d, dashed, leftrightarrow] \arrow[loop right, dashed] \\
  X \arrow[r, leftrightarrow] \arrow[loop left, dashed]                                   & X' \arrow[loop right, dashed] \\
\end{tikzcd}
\]

 Here, all arrows indicate the calculation of potential similarity measures. Solid arrows describe settings for which similarity measures are already known, such as \emph{graph kernels} \cite{Borgwardt20, Kriege20}, \emph{graph edit distances} \cite{Gao10a}, or distance metrics on  $\reals^n$. The dashed arrows show our ability to study \emph{complementarity} and \emph{self-complementarity} using metric spaces and principled mode perturbations.

\subsection{Metric Choices} \label{sec:metric-choices}

As outlined in \Cref{sec:mode-complementarity}, we would like to understand, for a given attributed graph $(G,X)$ the complementarity between the information in the graph structure and the node features. Thus, we use $\complementarity(G,X)$ to score the difference in geometric information contained in the two modes. As per  \cref{def:mode-complementarity}, this requires a choice of a \emph{structural metric} ($\StructureDist$) and a \emph{feature metric} ($\FeatureDist$), which facilitate lifting the modes into a metric space. 

While our experimental results are based on diffusion distance as our structural distance metric ($\dist_S$) and Euclidean distance in $\reals^k$ as our feature-based distance metric ($\dist_F$), it is worth noting that the \framework framework generalizes to other distance metrics. Below, we define the diffusion distance and Euclidean distance, as well as several other distance metrics suitable for the \framework framework.
To build intuition, we additionally depict the behavior of different graph-distance candidates under edge addition on toy data in \cref{fig:synthetic-toy} and illustrate the interplay between metric choices for the real-world Peptides dataset in \cref{fig:peptides-exploration}.

\begin{figure*}[t]
	\centering
	\includegraphics[width=\linewidth]{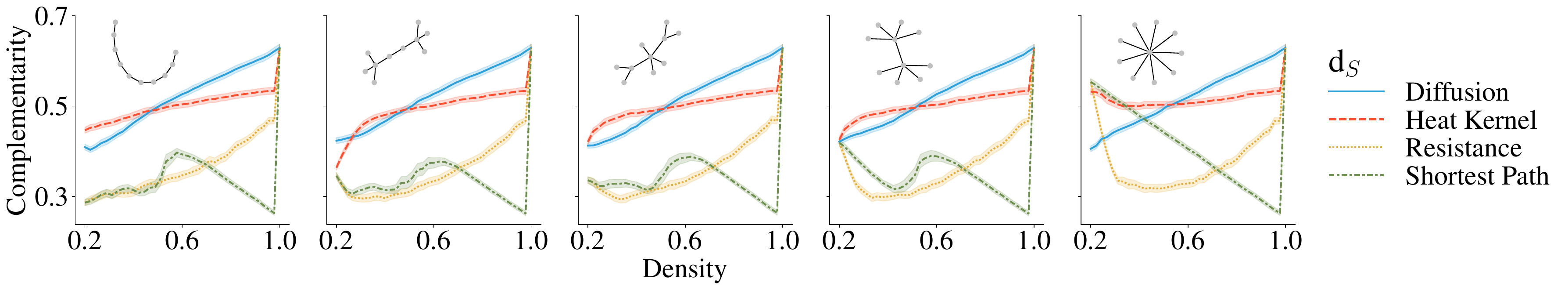}
	\caption{\textbf{Mode complementarity under different graph distances and varying density.} 
		We show the evolution of mode complementarity for $5$ graphs on $10$ nodes with normally distributed one-dimensional features 
		as we randomly add edges to increase the density from the minimum for a connected graph ($0.2$) to the maximum ($1.0$), 
		for different choices of graph distance~$\dist_S$. 
		For the diffusion distance and the heat-kernel distance,
		which depend on the parameter $t$, we show the complementarity at $t =1$. 
		The diffusion distance exhibits the smoothest behavior. 
	}\label{fig:synthetic-toy}
\end{figure*}

\subsubsection{Structural Metrics}

\paragraph{Disconnected Graphs}
Here we define the metrics we use to lift a graph structure into a metric space. Although common practice assigns an infinite distance between unconnected nodes, due to our treatment of disconnected graphs in \cref{def:disconnected-complementarity}, we only lift metric spaces within a connected component. Our treatment of isolated nodes assigns trivial metric spaces to each node, and their union we also treat as trivial as seen in \cref{lemma:empty-to-trivial}.

\begin{definition}[Laplacian Variants]
	We define the Laplacian of $G = (V,E)$ as
	\begin{equation} 
	L \coloneqq D - A \;,
	\end{equation}
	where $A$ is the $n\times n$ adjacency matrix specified by $E$, and $D$ is the degree matrix. We also define a \emph{normalized} Laplacian
	\begin{equation}
		\widehat{L} \coloneqq D^{-1/2} L D^{-1/2}\;,
	\end{equation}
	where $D^{-1/2} = \text{diag}\left(\frac{1}{\sqrt{D_{ii}}}\right)$.
\end{definition}

\begin{definition}[Diffusion Distance {[}$\DiffusionDist${]} \citep{COIFMAN20065}] \label{def:diffusion-distance}
	We define our distance over $G$ for $t$ \emph{diffusion steps} as
	\begin{equation}
		d_{S_t}(x,y) = \left\|\DiffusionMap(x) - \DiffusionMap(y)\right\|,
	\end{equation}
	
	where $\{\Psi_t\}$ is a family of diffusion maps computed from the spectrum of $\widehat{L}$. In particular,
	\[
	\DiffusionMap(x) \triangleq 
	\begin{pmatrix}
		\lambda_1^t \psi_1(x) \\
		\lambda_2^t \psi_2(x) \\
		\vdots \\
		\lambda_{\DiffusionDim}^t \psi_{\DiffusionDim}(x)
	\end{pmatrix}\;,
	\]
	where $\lambda_i,\psi_i$ are the $i^{th}$ eigenvalue and eigenvector of $\widehat{L}$. 
\end{definition}

\begin{figure}[t]
	\centering
	\includegraphics[width=0.8\linewidth]{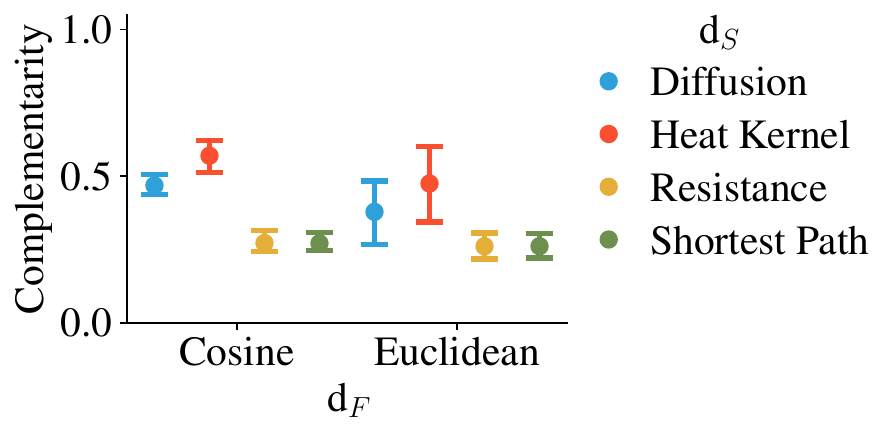}
	\caption{\textbf{Real-world mode complementarity under different graph distances $\dist_S$ and feature distances $\dist_F$.} 
		We show the $95$th percentile of the mode-complementarity distribution for the Peptides dataset, 
		for varying combinations of graph distance $\dist_S$ and feature distance $\dist_F$, 
		using $t = 1$ for the $t$-dependent distances (diffusion and heat kernel).
	}\label{fig:peptides-exploration}
\end{figure}

\begin{definition}[Heat-Kernel Distance \citep{Bai2004HeatKM}]
	To compute the heat-kernel distance between nodes in a graph $G$, we take the spectral decomposition of $\widehat{L}$ as
	\begin{equation}
		\widehat{L} = \Psi\Lambda\Psi^{T} = \sum_{i=1}^{n} \lambda_i \psi_i \psi_{i}^T\;,
	\end{equation}
	where $\Lambda$ is the diagonal matrix with ordered eigenvalues and $\Psi=(\psi_1, \psi_2, ..., \psi_n)$ is the matrix with columns corresponding to the ordered eigenvectors. The heat equation associated with the graph Laplacian, written in terms of the heat kernel $h_t$ and time $t$, is defined as 
	\begin{equation*}
		\frac{\partial h_t}{\partial t} = -\widehat{L}h_t\;.
	\end{equation*}
	Finally, the heat-kernel distance between nodes $u$ and $v$ at time $t$ is then computed as:
	\begin{equation}
		h_t(u, v) = \sum_{i=1}^{n} \exp{-\lambda_i t} \psi_i(u) \psi_i(v)\;.
	\end{equation}
\end{definition}

 \begin{definition}[Resistance Distance \citep{randic1993resistance}]
 	Given a graph $G$ with $n$ nodes, the resistance distance between nodes $u$ and $v$ is given as:
 	\begin{equation}
 		\Omega_{u,v} = \Lambda_{i,i} + \Lambda_{j,j} - \Lambda_{i,j} - \Lambda_{j,i}\;,
 	\end{equation}
 	where 
 	\begin{equation*}
 		\Lambda = \left(L + \frac{1}{n}\allones_n\right)^+\;,
 	\end{equation*}
 	with $+$ denoting the Moore-Penrose pseudoinverse.
 \end{definition}

\begin{definition}[Shortest-Path Distance]
	For a graph $G=(V,E)$, we define the shortest-path distance between two nodes $u,v \in V$ as the minimum number of edges in any path between $u$ and $v$.
\end{definition}

\subsubsection{Feature Metrics}

\begin{definition}[Euclidean Distance {[{$\dist_{F}$}]}] \label{def:euclidean-distance}
	Given our feature space $\features\subset\reals^{k}$, let $x,y \in \reals^k$ be two arbitrary feature vectors. Then the Euclidean distance (also known as the L2 norm) between $x$ and $y$ is defined as
	\begin{equation}
		\dist_F(x, y) = \sqrt{\sum_{i=1}^{k} (x_i - y_i)^2}\;.
	\end{equation}
\end{definition}

\begin{definition}[Cosine Distance]
	Given a feature space $\features\subset\reals^{k}$, let $x,y \in \reals^k$ be two arbitrary feature vectors. Then the cosine distance between $x$ and $y$ is defined as 
	\begin{equation}
		\dist(x, y) = 1 - \frac{x \cdot y}{\|x\| \|y\|}\;,
	\end{equation}
	where $x \cdot y$ is the dot product of $x$ and $y$, and $\|x\|$ is the L2 norm of $x$.
\end{definition}

\section{Extended Dataset Descriptions} \label{apx:datasets}

As we seek to evaluate the datasets in their capacity as graph-learning benchmarks, the formulation and properties of these datasets are worth detailed discussion. 
We summarize the basic statistics of these datasets in \cref{tab:dataset-statistics} and now describe the semantics and special characteristics of each dataset in turn. 

\begin{table*}[t]
	\small
	\begin{tabular}{lrrrrrrrrrrrrr}
\toprule
 & \multicolumn{3}{r}{$n$} & \multicolumn{2}{r}{$m$} & \multicolumn{2}{r}{$\delta$} & \multicolumn{2}{r}{$\rho$} & \multicolumn{2}{r}{$\gamma_1$} & \multicolumn{2}{r}{$\gamma_{10}$} \\
 & $N$ & $\mu$ & $\sigma$ & $\mu$ & $\sigma$ & $\mu$ & $\sigma$ & $\mu$ & $\sigma$ & $\mu$ & $\sigma$ & $\mu$ & $\sigma$ \\
Dataset &  &  &  &  &  &  &  &  &  &  &  &  &  \\
\midrule
AIDS & 2000 & 15.69 & 13.69 & 32.39 & 30.02 & 2.01 & 0.20 & 0.19 & 0.08 & 0.29 & 0.08 & 0.28 & 0.06 \\
COLLAB & 5000 & 74.49 & 62.31 & 4914.43 & 12879.12 & 37.37 & 43.97 & 0.51 & 0.30 & 0.40 & 0.27 & 0.64 & 0.21 \\
DD & 1178 & 284.32 & 272.12 & 1431.32 & 1388.40 & 4.98 & 0.59 & 0.03 & 0.02 & 0.33 & 0.04 & 0.80 & 0.06 \\
Enzymes & 600 & 32.63 & 15.29 & 124.27 & 51.04 & 3.86 & 0.49 & 0.16 & 0.11 & 0.49 & 0.07 & 0.26 & 0.05 \\
IMDB-B & 1000 & 19.77 & 10.06 & 193.06 & 211.31 & 8.89 & 5.05 & 0.52 & 0.24 & 0.49 & 0.23 & 0.53 & 0.20 \\
IMDB-M & 1500 & 13.00 & 8.53 & 131.87 & 221.63 & 8.10 & 4.82 & 0.77 & 0.26 & 0.73 & 0.30 & 0.75 & 0.28 \\
MUTAG & 188 & 17.93 & 4.59 & 39.59 & 11.40 & 2.19 & 0.11 & 0.14 & 0.04 & 0.51 & 0.07 & 0.48 & 0.02 \\
MolHIV & 41127 & 25.51 & 12.11 & 54.94 & 26.43 & 2.14 & 0.11 & 0.10 & 0.04 & 0.42 & 0.11 & 0.33 & 0.08 \\
NCI1 & 4110 & 29.87 & 13.57 & 64.60 & 29.87 & 2.16 & 0.11 & 0.09 & 0.04 & 0.54 & 0.05 & 0.51 & 0.03 \\
Peptides & 10873 & 151.54 & 84.10 & 308.54 & 171.99 & 2.03 & 0.04 & 0.02 & 0.02 & 0.38 & 0.09 & 0.26 & 0.04 \\
Proteins & 1113 & 39.06 & 45.78 & 145.63 & 169.27 & 3.73 & 0.42 & 0.21 & 0.20 & 0.45 & 0.04 & 0.26 & 0.06 \\
Reddit-B & 2000 & 429.63 & 554.20 & 995.51 & 1246.29 & 2.34 & 0.31 & 0.02 & 0.03 & 0.47 & 0.03 & 0.45 & 0.08 \\
Reddit-M & 4999 & 508.52 & 452.62 & 1189.75 & 1133.65 & 2.25 & 0.20 & 0.01 & 0.01 & 0.48 & 0.02 & 0.43 & 0.07 \\
\bottomrule
\end{tabular}
 	\caption{\textbf{Basic statistics of our $13$ evaluated graph-learning datasets.}
		We show the number of graphs ($N$) as well as the mean and standard deviation of the number of nodes ($n$), 
		the number of edges ($m$), 
		the degree ($\delta$), 
		the density ($\rho$), 
		and mode complementarity at $t\in \{1,10\}$ ($\gamma_1$, $\gamma_{10}$).
	}\label{tab:dataset-statistics}
\end{table*}

\subsection{Social Sciences}

\subsubsection{Professional Collaborations}
In professional collaboration networks, nodes represent individuals and edges connect those with a direct working relationship.

\emph{COLLAB}~\citep{Yanardag2015} is a dataset of physics collaboration networks, where nodes represent physics researchers and edges connect co-authors. Graphs are constructed as ego-networks around a central physics researcher, whose subfield (\emph{high energy physics}, \emph{condensed matter physics}, or \emph{astrophysics}) acts as the classification for the graph as a whole. The data to build these graphs is drawn from three public collaboration datasets based on the arXiv~\citep{leskovec05}.

Following a similar construction, the \emph{IMDB}~\citep{Yanardag2015} datasets capture collaboration in movies. Graphs represent the ego-networks of actors and actresses, and edges connect those who appear in the same movie. The graph is classified according to the genre of the movies from which these shared acting credits are pulled. The task is to predict this genre, a binary classification between \emph{Action} and \emph{Comedy} in \emph{IMDB-B}, and a multi-class classification between \emph{Comedy}, \emph{Romance}, and \emph{Sci-Fi} in \emph{IMDB-M}. (Note that genres are engineered to be mutually exclusive.) As one might expect, we note in \cref{fig:performance-separability} the lower GNN performance for \emph{IMDB-M} as compared to \emph{IMDB-B}, suggesting that the multi-class graph classification is a more challenging task.

\emph{COLLAB} and \emph{IMDB} datasets share two notable features that distinguish them from the remaining datasets. First, graphs are constructed as ego-networks. This means that they have a diameter of at most two, in stark contrast to more structurally complex graphs from other datasets. This lends insight into why \emph{COLLAB} and \emph{IMDB} are classified as having low structural diversity (see \cref{tab:mode-diversity}). Second, node features are constructed from the node degree, rather than providing new information independent of graph structure. Thus, the low diversity scores with high variability that we see in \cref{tab:mode-diversity} are simply a reflection of the node-degree distributions in these datasets. 

\subsubsection{Social Media}

We study two graph-learning datasets with origins in social media: \emph{REDDIT-B} and \emph{REDDIT-M}~\citep{Yanardag2015}.

In the these datasets, a graph models a thread lifted from a subreddit (i.e., a specific discussion community). Nodes represent Reddit users and edges connect users if at least one of them has replied to a comment made by the other (i.e., the users have had at least one interaction). Like \emph{COLLAB} and \emph{IMDB}, node features are constructed from the node degree. However, the \emph{REDDIT} datasets score better on both structural and feature diversity (see \cref{tab:mode-diversity}). \emph{REDDIT}'s higher structural diversity reflects the fact that its graphs are not ego-networks. Its more complex graph structures also account for the higher diversity in node features (i.e., node degrees).

The \emph{REDDIT-B} dataset draws from four popular subreddits, two of which follow a Q\&A structure (\emph{IAmA}, \emph{AskReddit}) and two of which are discussion-based (\emph{TrollXChromosomes}, \emph{atheism}). The task is to classify graphs based on these two styles of thread. \emph{REDDIT-M} graphs are drawn from five subreddits, which also act as the graph class (\emph{worldnews}, \emph{videos}, \emph{AdviceAnimals}, \emph{aww} and \emph{mildlyinteresting}).

Similar to the difference between the \emph{IMDB} binary and multi-class datasets, we observe better GNN performance on the \emph{REDDIT-B} benchmark as compared to \emph{REDDIT-M} (see \cref{fig:performance-separability}), despite the fact that \emph{REDDIT-M} boasts more than twice as many graphs (see \cref{tab:dataset-statistics}).

\subsection{Life Sciences}

\subsubsection{Molecules and Compounds}

Graphs provide an elegant way to model molecules, with nodes representing atoms and edges representing the chemical bonds between them.

The \emph{AIDS} dataset \citep{riesen2008aids}, for example, is built from a repository of molecules in the AIDS Antiviral Screen Database of Active Compounds \citep{aids2004data}. Nodes have four features \citep{morris2020tudataset}, most notably their unique atomic number. This is a binary-classification dataset in which molecules are classified by whether or not they demonstrate activity against the HIV virus. Among the 2\,000 graphs in this dataset, note that the two classes are imbalanced: there are four times as many inactive molecules (1\,600) as active (400). The \emph{MolHIV (ogbg-molhiv)} dataset, introduced by the Open Graph Benchmark \citep{hu2021opengraphbenchmarkdatasets} with over 40,000 molecules from MoleculeNet \citep{wu2018moleculenet}, proposes a similar task. Like \emph{AIDS}, \emph{MolHIV} classifies molecules into two classes based on their ability or inability to suppress the HIV virus. However, nodes in the \emph{MolHIV} dataset have $9$ features (compared to $4$ in \emph{AIDS}), including atomic number, chirality, and formal charge.

\emph{MUTAG} \citep{MUTAG} is a relatively small molecular dataset compared with its peers, having only $188$ graphs \citep{morris2020tudataset}. However, it does have a similar number of node features ($7$). Graphs represent aromatic or heteroaromatic nitro compounds, which are divided into two classes by their mutagenicity towards the bacterium \emph{S. typhimurium}, i.e. their ability to cause mutations its DNA. While small, the dataset contains an ``extremely broad range of molecular structures'' \citep{MUTAG}, which we see reflected in a decent structural-diversity score in \cref{tab:mode-diversity}.

When introducing the dataset, \citet{MUTAG} found that both specific sets of atoms and certain structural features (namely the presence of $3$ or more fused rings), tended to be fairly well correlated with increased mutagenicity. This biologically supports that both modes could be informative, which is generally encouraging for graph-learning datasets. Thus, although we do not observe the desired performance separability in \cref{fig:performance-separability}, we cannot rule out that \emph{MUTAG} could be a good benchmark, given a different task.

\emph{NCI1} \citep{nci1-dataset} is a binary-classification dataset that splits molecules into two classes based on whether or not they have an inhibitary effect on the growth of human lung-cancer cell lines. Nodes have $37$ features, a significant step up from other molecular datasets, which is reflected in the richness of \emph{NCI1}'s feature diversity (see \cref{tab:mode-diversity}).

\subsubsection{Larger Chemical Structures}

The \emph{Peptides} \emph{(Peptides-func)} dataset \citep{Dwivedi2022} classifies peptides by their function. Each graph represents a peptide (i.e., a 1D amino acid chain), with nodes representing heavy (non-hydrogen) atoms and edges representing the bonds between them. \emph{Peptides} is a multi-label-classification dataset, meaning that each peptide may belong to more than one of the $10$ function classes (\emph{Antibacterial}, \emph{Antiviral}, \emph{cell-cell communication}, etc.), with the average being 1.65 classes per peptide. Given the complexity of this task, it is perhaps unsurprising to notice that our tuned GNNs achieve the lowest AURUC on this dataset (see \cref{fig:performance-separability}).

\emph{DD}, \emph{ENZYMES}, and \emph{Proteins} \emph{(Proteins-full)} are all protein datasets.
Each dataset draws upon the framework introduced by \citep{borgwardt2005} to model 3D structures of folded proteins as graphs, with nodes representing amino acids and edges connecting amino acids within a given proximity measured in Angstroms. Both the \emph{Proteins-full} database introduced by \citep{borgwardt2005} and the \emph{DD} database used by \citep{shervashidze11a} are based on data from \citet{dobson2003dd}, which classifies $1\,178$ proteins as either enzymes ($59\%$) or non-enzymes ($41\%$). Thus, these two datasets are binary-classification datasets.

\emph{Proteins-full} only uses $1\,128$ of the $1\,178$ proteins from \citet{dobson2003dd} but encodes $29$ descriptive node features, whereas \emph{DD} encodes none \cite{tudortmund,morris2020tudataset}. This distinction is responsible for {Proteins-full}'s drastically higher feature diversity score and standard deviation, as seen in \cref{tab:mode-diversity}.

\emph{ENZYMES} \citep{borgwardt2005} introduces a multi-class classification task on enzymes. Graphs represent enzymes from the BRENDA database \citep{schomburg04brenda}, which are classified according to the $6$ Enzyme Commission top-level enzyme classes (EC classes). The dataset is roughly half the size of \emph{DD} and \emph{Proteins-full}, with only $600$ enzymes ($100$ from each class). There are $18$ node features, producing the high feature diversity observed in \cref{tab:mode-diversity}.

\subsection{Further Information}

Several efforts have been made to consolidate information about the aforementioned datasets and their peers. In particular, we direct the interested reader to the TUDataset documentation \citep{morris2020tudataset}, TU Dortmund's Benchmark Data Sets for Graph Kernels \citep{tudortmund}, and PyTorch Geometric \citep{fey2019pytorch_geometric}.
 
\section{Extended Experiments}
\label{apx:experiments}

\subsection{Extended Setup}

With the \framework framework, we introduce principled perturbations to graph learning datasets that allow us to evaluate the extent to which graph structure and node features are leveraged in a given task (\emph{performance separability}) and to what extent the information in these modes is complementary or redundant (\emph{mode complementarity}). We do so with a series of experiments, which we expand upon below.

\begin{table}[t]
	\centering \small
	\renewcommand{\arraystretch}{1.05} \setlength{\tabcolsep}{4pt}

\begin{tabular}{@{\hspace*{0.1em}}l@{\hspace*{0.3em}}l@{\hspace*{0.1em}}}
    \toprule
    \textbf{Aspect}                  & \textbf{Details} \\ 
    \midrule
    \textbf{Architectures}
        & $\{$GAT, GCN, GIN$\}$\\ 
    \textbf{Mode Perturbations}       
        & $\{\id, \eg, \cg, \cf, \rf, \rg\}$ \\ 
    \textbf{Tuning Strategy}         
        & 5-fold CV $\times$ 64 consistent $\params$\\ 
        &for each $(\PerturbDataset(D), \Architecture)$\\ 
    \textbf{Selection Metric}        
        & Validation AUROC \\ 
    \textbf{Evaluation Strategy}     
        & Tuned model $(\PerturbDataset(D), \Architecture, \params)$ re-trained \\
		& on distinct CV splits and random seeds, \\
		& then evaluated on $\PerturbDataset(D)$'s test set. \\
    \bottomrule
\end{tabular}
 	\caption{\textbf{GNN setup to evaluate performance separability.} Parameter combinations are specified in \cref{tab:gnn-tuning-params}. $\PerturbDataset_{rg}(\TrainData)$ perturbations were trained using a $\mathcal{R}_{F,1}(\cdot, 10)$ randomization. The top performing parameter combination $\params^\star$ was selected based on validation statistics: primarily based on AUROC, with ties broken by accuracy and then loss. }
	\label{tab:gnn_training_overview}
\end{table}

\begin{table}[t]
	\small
	\centering
	\renewcommand{\arraystretch}{1.2} \setlength{\tabcolsep}{10pt}

\begin{tabular}{ll}
    \toprule
    \textbf{Parameter} & \textbf{Values} \\
    \midrule
    Activation       & \texttt{ReLU} \\
    Batch Size       & \{64, 128\} \\
    Dropout          & \{0.1, 0.5\} \\
    Fold            & \{0, 1, 2, 3, 4\} \\
    Hidden Dim      & \{128, 256\} \\
    Learning Rate (LR) & \{0.01, 0.001\} \\
    Max Epochs      & 200 \\
    Normalization   & \texttt{Batch} \\
    Num Layers      & 3 \\
    Optimizer       & \texttt{Adam} \\
    Readout         & \texttt{Sum} \\
    Seed            & \{0, 42\} \\
    Weight Decay    & \{0.0005, 0.005\} \\
    \bottomrule
\end{tabular} 	\caption{\textbf{GNN tuning parameters.} For consistency, all $(\PerturbDataset(\TrainData),\Architecture)$ were tuned across a consistent hyperparameter grid search. 
	}
	\label{tab:gnn-tuning-params}
\end{table}

\subsection{Extended Performance Separability}

We begin with a note on the chosen GNN architectures, 
before examining and supplementing the results of our performance separability evaluations.

\subsubsection{A Note on GNN Architectures}

By far the most computationally expensive part of our experiments is tuning GNNs. While variation in the GNN models is desirable to ensure that our results are not overly dependent on a specific architecture, we were limited in the number of GNN models that we could train and evaluate over all of our datasets and their perturbations. As our goal is to evaluate the intrinsic quality of datasets as graph-learning benchmarks, we are more concerned with consistency across experiments than with using the newest methods, especially given finite computational resources.

In accordance with other ``evaluation of evaluations'' studies \citep{Bechler-Speicher23a,Li_2024}, we opted to use several of the most common GNN architectures, namely GCN~ \citep{kipf2017}, GAT~\citep{GAT2018}, and GIN~\citep{xu2019gin}. As noted by \citet{Li_2024}, GCN and GIN provide a good contrast in methodology, with GCN taking a spectral approach and GIN taking a spatial approach.

For each (dataset, perturbation, architecture) triple, we tune the GNN hyperparameters as outlined in \cref{tab:gnn_training_overview}, using the tuning parameters ($\params$) stated in \cref{tab:gnn-tuning-params}. 
The mean and standard deviation of accuracy and AUROC for our tuned models are tabulated in \cref{tab:performance-results-accuracy} and \cref{tab:performance-results-auroc}, respectively.

\begin{table*}[!t]
\centering \small
\begin{tabular}{llrrrrrrrrrrrr}
\toprule
 &  & $\mu$ & $\sigma$ & $\mu$ & $\sigma$ & $\mu$ & $\sigma$ & $\mu$ & $\sigma$ & $\mu$ & $\sigma$ & $\mu$ & $\sigma$ \\
 Dataset&  & o & o & cg & cg & eg & eg & rg & rg & cf & cf & rf & rf \\
\midrule
\multirow[t]{3}{*}{AIDS} & GAT & 0.980 & 0.006 & 0.968 & 0.017 & 0.977 & 0.007 & 0.968 & 0.011 & 0.998 & 0.002 & 0.320 & 0.075 \\
 & GCN & 0.982 & 0.005 & 0.950 & 0.016 & 0.983 & 0.004 & 0.837 & 0.058 & 0.997 & 0.003 & 0.340 & 0.051 \\
 & GIN & 0.986 & 0.004 & 0.996 & 0.003 & 0.981 & 0.005 & 0.991 & 0.004 & 0.998 & 0.003 & 0.597 & 0.052 \\
\cline{1-14}
\multirow[t]{3}{*}{COLLAB} & GAT & 0.821 & 0.009 & 0.798 & 0.008 & 0.753 & 0.013 & 0.797 & 0.009 & 0.675 & 0.019 & 0.647 & 0.013 \\
 & GCN & 0.827 & 0.009 & 0.798 & 0.008 & 0.753 & 0.014 & 0.790 & 0.009 & 0.705 & 0.018 & 0.712 & 0.012 \\
 & GIN & 0.797 & 0.021 & 0.768 & 0.028 & 0.743 & 0.015 & 0.769 & 0.022 & 0.712 & 0.018 & 0.696 & 0.021 \\
\cline{1-14}
\multirow[t]{3}{*}{DD} & GAT & 0.580 & 0.085 & 0.747 & 0.029 & 0.598 & 0.086 & 0.696 & 0.090 & 0.721 & 0.045 & 0.579 & 0.049 \\
 & GCN & 0.691 & 0.028 & 0.746 & 0.032 & 0.678 & 0.057 & 0.767 & 0.025 & 0.731 & 0.022 & 0.606 & 0.014 \\
 & GIN & 0.683 & 0.037 & 0.601 & 0.073 & 0.670 & 0.050 & 0.594 & 0.074 & 0.710 & 0.030 & 0.597 & 0.018 \\
\cline{1-14}
\multirow[t]{3}{*}{Enzymes} & GAT & 0.470 & 0.054 & 0.415 & 0.065 & 0.530 & 0.044 & 0.406 & 0.053 & 0.414 & 0.044 & 0.173 & 0.032 \\
 & GCN & 0.563 & 0.036 & 0.566 & 0.033 & 0.615 & 0.029 & 0.448 & 0.041 & 0.410 & 0.038 & 0.169 & 0.035 \\
 & GIN & 0.572 & 0.058 & 0.332 & 0.068 & 0.607 & 0.094 & 0.338 & 0.062 & 0.400 & 0.048 & 0.182 & 0.029 \\
\cline{1-14}
\multirow[t]{3}{*}{IMDB-B} & GAT & 0.719 & 0.027 & 0.725 & 0.042 & 0.712 & 0.024 & 0.699 & 0.029 & 0.649 & 0.035 & 0.540 & 0.032 \\
 & GCN & 0.732 & 0.033 & 0.719 & 0.039 & 0.716 & 0.022 & 0.704 & 0.025 & 0.692 & 0.024 & 0.606 & 0.034 \\
 & GIN & 0.724 & 0.034 & 0.706 & 0.034 & 0.704 & 0.025 & 0.710 & 0.029 & 0.688 & 0.034 & 0.695 & 0.045 \\
\cline{1-14}
\multirow[t]{3}{*}{IMDB-M} & GAT & 0.493 & 0.023 & 0.489 & 0.017 & 0.478 & 0.028 & 0.489 & 0.019 & 0.431 & 0.029 & 0.349 & 0.026 \\
 & GCN & 0.504 & 0.021 & 0.484 & 0.018 & 0.496 & 0.015 & 0.487 & 0.018 & 0.451 & 0.026 & 0.383 & 0.021 \\
 & GIN & 0.475 & 0.047 & 0.476 & 0.032 & 0.484 & 0.025 & 0.489 & 0.019 & 0.452 & 0.031 & 0.398 & 0.041 \\
\cline{1-14}
\multirow[t]{3}{*}{MUTAG} & GAT & 0.707 & 0.051 & 0.677 & 0.083 & 0.667 & 0.042 & 0.688 & 0.038 & 0.755 & 0.087 & 0.624 & 0.079 \\
 & GCN & 0.718 & 0.044 & 0.710 & 0.081 & 0.733 & 0.061 & 0.756 & 0.061 & 0.842 & 0.075 & 0.625 & 0.059 \\
 & GIN & 0.767 & 0.140 & 0.814 & 0.068 & 0.707 & 0.056 & 0.801 & 0.068 & 0.821 & 0.078 & 0.689 & 0.042 \\
\cline{1-14}
\multirow[t]{3}{*}{MolHIV} & GAT & 0.962 & 0.003 & 0.653 & 0.151 & 0.964 & 0.001 & 0.955 & 0.005 & 0.966 & 0.001 & 0.965 & 0.001 \\
 & GCN & 0.967 & 0.002 & 0.966 & 0.001 & 0.963 & 0.002 & 0.966 & 0.001 & 0.965 & 0.001 & 0.965 & 0.001 \\
 & GIN & 0.968 & 0.002 & 0.965 & 0.001 & 0.963 & 0.001 & 0.965 & 0.002 & 0.965 & 0.001 & 0.965 & 0.001 \\
\cline{1-14}
\multirow[t]{3}{*}{NCI1} & GAT & 0.562 & 0.056 & 0.580 & 0.037 & 0.506 & 0.004 & 0.575 & 0.031 & 0.666 & 0.014 & 0.500 & 0.001 \\
 & GCN & 0.720 & 0.019 & 0.664 & 0.012 & 0.563 & 0.022 & 0.639 & 0.014 & 0.670 & 0.013 & 0.501 & 0.002 \\
 & GIN & 0.769 & 0.016 & 0.681 & 0.030 & 0.561 & 0.019 & 0.648 & 0.011 & 0.667 & 0.015 & 0.548 & 0.039 \\
\cline{1-14}
\multirow[t]{3}{*}{Peptides} & GAT & 0.604 & 0.104 & 0.297 & 0.155 & 0.418 & 0.222 & 0.410 & 0.164 & 0.582 & 0.014 & 0.515 & 0.000 \\
 & GCN & 0.671 & 0.006 & 0.673 & 0.008 & 0.651 & 0.007 & 0.633 & 0.013 & 0.610 & 0.010 & 0.515 & 0.000 \\
 & GIN & 0.681 & 0.008 & 0.661 & 0.008 & 0.653 & 0.008 & 0.659 & 0.007 & 0.570 & 0.046 & 0.520 & 0.014 \\
\cline{1-14}
\multirow[t]{3}{*}{Proteins} & GAT & 0.600 & 0.015 & 0.620 & 0.038 & 0.599 & 0.007 & 0.614 & 0.034 & 0.723 & 0.024 & 0.610 & 0.015 \\
 & GCN & 0.597 & 0.006 & 0.627 & 0.033 & 0.603 & 0.023 & 0.595 & 0.006 & 0.714 & 0.024 & 0.607 & 0.011 \\
 & GIN & 0.603 & 0.011 & 0.601 & 0.049 & 0.601 & 0.012 & 0.622 & 0.046 & 0.701 & 0.034 & 0.606 & 0.011 \\
\cline{1-14}
\multirow[t]{3}{*}{Reddit-B} & GAT & 0.730 & 0.029 & 0.595 & 0.080 & 0.520 & 0.032 & 0.678 & 0.061 & 0.700 & 0.064 & 0.518 & 0.024 \\
 & GCN & 0.781 & 0.018 & 0.742 & 0.026 & 0.540 & 0.050 & 0.794 & 0.023 & 0.823 & 0.019 & 0.580 & 0.039 \\
 & GIN & 0.699 & 0.109 &  &  & 0.564 & 0.038 & 0.693 & 0.107 & 0.699 & 0.085 & 0.561 & 0.087 \\
\cline{1-14}
\multirow[t]{3}{*}{Reddit-M} & GAT & 0.422 & 0.021 &  &  & 0.331 & 0.060 & 0.307 & 0.097 &  &  & 0.284 & 0.039 \\
 & GCN & 0.472 & 0.017 &  &  & 0.446 & 0.026 & 0.497 & 0.013 &  &  & 0.454 & 0.018 \\
 & GIN & 0.474 & 0.027 &  &  & 0.420 & 0.041 & 0.422 & 0.051 &  &  & 0.501 & 0.042 \\
\bottomrule
\end{tabular}
 \caption{\textbf{Accuracy of tuned perturbed models underlying our performance-separability computations.} 
	For Reddit-M, the complete-graph and complete-features perturbations failed to train due to memory problems. For Reddit-B, GIN failed for similar reasons.}\label{tab:performance-results-accuracy}
\end{table*}

\begin{table*}[!t]
	\centering \small
	\begin{tabular}{llrrrrrrrrrrrr}
\toprule
 &  & $\mu$ & $\sigma$ & $\mu$ & $\sigma$ & $\mu$ & $\sigma$ & $\mu$ & $\sigma$ & $\mu$ & $\sigma$ & $\mu$ & $\sigma$ \\
 Dataset&  & o & o & cg & cg & eg & eg & rg & rg & cf & cf & rf & rf \\
\midrule
\multirow[t]{3}{*}{AIDS} & GAT & 0.984 & 0.012 & 0.985 & 0.011 & 0.981 & 0.010 & 0.984 & 0.010 & 0.994 & 0.008 & 0.829 & 0.141 \\
 & GCN & 0.986 & 0.010 & 0.954 & 0.020 & 0.986 & 0.009 & 0.982 & 0.011 & 0.995 & 0.006 & 0.874 & 0.055 \\
 & GIN & 0.987 & 0.011 & 0.991 & 0.011 & 0.984 & 0.010 & 0.994 & 0.006 & 0.994 & 0.008 & 0.813 & 0.160 \\
\cline{1-14}
\multirow[t]{3}{*}{COLLAB} & GAT & 0.945 & 0.005 & 0.929 & 0.004 & 0.905 & 0.003 & 0.929 & 0.003 & 0.828 & 0.015 & 0.824 & 0.010 \\
 & GCN & 0.949 & 0.004 & 0.929 & 0.003 & 0.905 & 0.003 & 0.926 & 0.003 & 0.847 & 0.013 & 0.877 & 0.008 \\
 & GIN & 0.926 & 0.022 & 0.906 & 0.034 & 0.897 & 0.007 & 0.908 & 0.023 & 0.855 & 0.013 & 0.858 & 0.017 \\
\cline{1-14}
\multirow[t]{3}{*}{DD} & GAT & 0.642 & 0.163 & 0.801 & 0.029 & 0.720 & 0.198 & 0.766 & 0.126 & 0.773 & 0.030 & 0.703 & 0.113 \\
 & GCN & 0.749 & 0.028 & 0.797 & 0.034 & 0.770 & 0.072 & 0.830 & 0.026 & 0.763 & 0.021 & 0.762 & 0.024 \\
 & GIN & 0.762 & 0.053 & 0.755 & 0.120 & 0.769 & 0.044 & 0.733 & 0.130 & 0.751 & 0.038 & 0.747 & 0.059 \\
\cline{1-14}
\multirow[t]{3}{*}{Enzymes} & GAT & 0.767 & 0.033 & 0.729 & 0.044 & 0.797 & 0.022 & 0.729 & 0.033 & 0.730 & 0.028 & 0.513 & 0.032 \\
 & GCN & 0.816 & 0.017 & 0.811 & 0.020 & 0.829 & 0.017 & 0.758 & 0.025 & 0.718 & 0.025 & 0.505 & 0.030 \\
 & GIN & 0.828 & 0.032 & 0.671 & 0.047 & 0.840 & 0.053 & 0.682 & 0.044 & 0.722 & 0.033 & 0.534 & 0.031 \\
\cline{1-14}
\multirow[t]{3}{*}{IMDB-B} & GAT & 0.811 & 0.030 & 0.822 & 0.036 & 0.803 & 0.025 & 0.791 & 0.026 & 0.694 & 0.035 & 0.561 & 0.037 \\
 & GCN & 0.837 & 0.028 & 0.821 & 0.037 & 0.803 & 0.023 & 0.793 & 0.026 & 0.741 & 0.037 & 0.644 & 0.042 \\
 & GIN & 0.811 & 0.040 & 0.795 & 0.039 & 0.789 & 0.026 & 0.798 & 0.030 & 0.742 & 0.036 & 0.759 & 0.053 \\
\cline{1-14}
\multirow[t]{3}{*}{IMDB-M} & GAT & 0.687 & 0.024 & 0.673 & 0.019 & 0.668 & 0.022 & 0.683 & 0.020 & 0.597 & 0.022 & 0.527 & 0.023 \\
 & GCN & 0.704 & 0.020 & 0.668 & 0.019 & 0.684 & 0.021 & 0.679 & 0.020 & 0.623 & 0.029 & 0.566 & 0.019 \\
 & GIN & 0.671 & 0.037 & 0.658 & 0.027 & 0.674 & 0.018 & 0.677 & 0.019 & 0.623 & 0.028 & 0.594 & 0.036 \\
\cline{1-14}
\multirow[t]{3}{*}{MUTAG} & GAT & 0.836 & 0.063 & 0.761 & 0.119 & 0.867 & 0.081 & 0.840 & 0.062 & 0.898 & 0.062 & 0.618 & 0.110 \\
 & GCN & 0.846 & 0.048 & 0.744 & 0.075 & 0.814 & 0.068 & 0.825 & 0.058 & 0.908 & 0.066 & 0.585 & 0.092 \\
 & GIN & 0.874 & 0.155 & 0.874 & 0.067 & 0.814 & 0.075 & 0.865 & 0.048 & 0.882 & 0.078 & 0.838 & 0.079 \\
\cline{1-14}
\multirow[t]{3}{*}{MolHIV} & GAT & 0.689 & 0.016 & 0.633 & 0.048 & 0.452 & 0.022 & 0.654 & 0.019 & 0.685 & 0.023 & 0.359 & 0.017 \\
 & GCN & 0.726 & 0.014 & 0.630 & 0.011 & 0.588 & 0.007 & 0.645 & 0.016 & 0.691 & 0.028 & 0.361 & 0.018 \\
 & GIN & 0.777 & 0.013 & 0.696 & 0.022 & 0.590 & 0.019 & 0.646 & 0.023 & 0.670 & 0.028 & 0.409 & 0.028 \\
\cline{1-14}
\multirow[t]{3}{*}{NCI1} & GAT & 0.572 & 0.135 & 0.680 & 0.051 & 0.679 & 0.017 & 0.664 & 0.065 & 0.727 & 0.016 & 0.683 & 0.016 \\
 & GCN & 0.797 & 0.017 & 0.735 & 0.013 & 0.647 & 0.029 & 0.685 & 0.017 & 0.731 & 0.017 & 0.648 & 0.022 \\
 & GIN & 0.843 & 0.014 & 0.741 & 0.031 & 0.640 & 0.034 & 0.701 & 0.022 & 0.722 & 0.016 & 0.656 & 0.024 \\
\cline{1-14}
\multirow[t]{3}{*}{Peptides} & GAT & 0.553 & 0.042 & 0.401 & 0.069 & 0.444 & 0.093 & 0.446 & 0.065 & 0.548 & 0.006 & 0.401 & 0.006 \\
 & GCN & 0.582 & 0.004 & 0.596 & 0.006 & 0.561 & 0.008 & 0.580 & 0.006 & 0.549 & 0.008 & 0.400 & 0.007 \\
 & GIN & 0.591 & 0.005 & 0.571 & 0.012 & 0.565 & 0.004 & 0.573 & 0.007 & 0.536 & 0.008 & 0.447 & 0.027 \\
\cline{1-14}
\multirow[t]{3}{*}{Proteins} & GAT & 0.775 & 0.018 & 0.752 & 0.042 & 0.786 & 0.016 & 0.764 & 0.035 & 0.758 & 0.022 & 0.720 & 0.030 \\
 & GCN & 0.782 & 0.021 & 0.782 & 0.027 & 0.773 & 0.028 & 0.780 & 0.024 & 0.758 & 0.027 & 0.730 & 0.031 \\
 & GIN & 0.782 & 0.028 & 0.735 & 0.092 & 0.736 & 0.061 & 0.751 & 0.046 & 0.742 & 0.034 & 0.765 & 0.032 \\
\cline{1-14}
\multirow[t]{3}{*}{Reddit-B} & GAT & 0.798 & 0.022 & 0.763 & 0.049 & 0.749 & 0.148 & 0.740 & 0.088 & 0.721 & 0.082 & 0.624 & 0.178 \\
 & GCN & 0.846 & 0.014 & 0.817 & 0.019 & 0.639 & 0.195 & 0.875 & 0.016 & 0.862 & 0.015 & 0.797 & 0.040 \\
 & GIN & 0.748 & 0.066 &  &  & 0.734 & 0.097 & 0.783 & 0.051 & 0.727 & 0.059 & 0.773 & 0.039 \\
\cline{1-14}
\multirow[t]{3}{*}{Reddit-M} & GAT & 0.717 & 0.014 &  &  & 0.733 & 0.053 & 0.644 & 0.120 &  &  & 0.717 & 0.036 \\
 & GCN & 0.766 & 0.009 &  &  & 0.759 & 0.013 & 0.805 & 0.006 &  &  & 0.777 & 0.009 \\
 & GIN & 0.770 & 0.018 &  &  & 0.742 & 0.035 & 0.746 & 0.031 &  &  & 0.797 & 0.016 \\
\bottomrule
\end{tabular}
 	\caption{\textbf{AUROC of tuned perturbed models underlying our performance-separability computations.}
	For Reddit-M, the complete-graph and complete-features perturbations failed to train due to memory problems. For Reddit-B, GIN failed for similar reasons.
	}\label{tab:performance-results-auroc}
\end{table*}

\begin{table*}[h]
	\small
	\centering\renewcommand{\arraystretch}{1.2} 
	\begin{tabular}{ll}
		\toprule
		\textbf{Component} & \textbf{Specifications} \\
		\midrule
		\emph{Available CPUs} & Intel Xeon (Haswell, Broadwell, Skylake, Cascade Lake, Sapphire Rapids, Emerald Rapids) \\
						   & Intel Xeon (6134, 6248R, 6142M, 6128, 6136, E5620) \\
						   & Intel Platinum (8280L, 8468, 8562Y+) \\
						   & AMD Opteron (6164 HE, 6234, 6376 (x2), 6272, 6128) \\
						   & AMD EPYC (7742, 7713, 7413, 7262) \\
		\emph{Available GPUs}  & NVIDIA Tesla (K80, P100, V100, A100, H100, H200) \\
							& NVIDIA Quadro (RTX 8000, RTX 6000) \\
							& AMD MI100 \\
		\bottomrule
	\end{tabular}
	\caption{\textbf{Summary of Compute Resources.} Especially when tuning our models used to generate the \emph{performance separability} results as seen in \cref{tab:performance-results-accuracy,tab:performance-results-auroc}, we relied heavily on high performance cluster units at multiple institutions that supported our work with the above hardware specifications.}
\end{table*}

\subsubsection{Performance-Separability Results}

Reasoning from first principles, we assert that the most instructive graph-learning datasets should have a separable original mode. In other words, we would like both the features and the graph structure to be informative and necessary to solve a given task. When a perturbed mode either outperforms or is non-separable from the original, then we can say that the perturbed mode is non-essential for the task at hand. 

Consider the \emph{Accuracy} and \emph{AUROC} columns of \cref{tab:performance-separability-ks}. (Overall we find \emph{AUROC} to be more expressive in determining separability, but we include both in our classifications.) Note that when training on $\eg$, we are removing any structural information. Comparing the performance of $\eg$ to that of the original, we can determine whether or not the structural information is being used at all. For example, for the \emph{ENZYMES} dataset, we see that $eg > o$, which implies that the graph structure is unnecessary for solving the current classification task. Conversely, we have some tasks for which only the structural mode is essential. \emph{MUTAG} is one example where $cf$ and $o$ are not separable, implying that node features are not essential and the task can be completed based on the graph structure alone. Given their dependency on a graph mode, we would consider this type of dataset interesting for structural tasks, but we would prefer benchmarks that combine structural \emph{and} feature information. 

Moving on to the \emph{Structure} and \emph{Feature} columns of \Cref{tab:performance-separability-ks}, we formalize the notion of whether or not each mode is informative for each dataset and associated task. Let us consider the structural perturbations \{$\Phi_{eg},\Phi_{rg},\Phi_{rg}$\} and the performances of their tuned models. If the perturbed model performances are separably lower than the original, then we can classify the \emph{structure} as informative (e.g., COLLAB). Similarly, turning to feature perturbations \{$\Phi_{cf},\Phi_{rf}$\} and their associated performances, if the perturbed model performances are separably lower than the original, then the \emph{features} are considered informative (e.g. ENZYMES).
In the case that AUROC and accuracy yield different outcomes for whether or not a mode is informative then this is denoted as \emph{(un)informative}. \vspace*{-1em}

\begin{table}[h!]\label{tab:complementarity-config}
	\centering
	\renewcommand{\arraystretch}{1.2} \setlength{\tabcolsep}{10pt} 
	\begin{tabular}{lr}
    \toprule
    \textbf{Parameter} & \textbf{Values} \\
    \midrule
    \text{Feature Metric} & \texttt{Euclidean} \\
    \text{Perturbations} & $\{ \id, \PerturbGraph_{e\ast}, \PerturbGraph_{c\ast}, \PerturbGraph_{r\ast} \}$ \\
    \text{Structural Metric} & \texttt{Diffusion}\\
    \text{Matrix Norm} & $L_{1,1}$ \\
    \text{Diffusion Steps} & $\{1, \cdots, 10 \}$ \\
    \text{Seed*} & $0, 2^1, 2^2, 2^3, 2^4$ \\
    \bottomrule
\end{tabular} 	\caption{\textbf{Complementarity-calculation setup for randomized mode perturbations.} 
		 We compute complementarity under perturbations $\complementarity^{1,1}_{\PerturbGraph}$ as specified in the above table, with $\ast \in \{f,g\}$. 
		 Without being subject to gradient descent in our calculations, we can also compute $\complementarity^{1,1}_{ef}$. For a definition of \texttt{Diffusion}, see \cref{def:diffusion-distance}  and a note our treatment of disconnected graphs in \cref{apx:methods}. Finally, seed is only varied over the random perturbations $\PerturbGraph_{r\ast}$ and our results are aggregated based on the average complementarity. 
	}
\end{table}

For the judgment depicted in the \emph{Evaluation} column of \cref{tab:performance-separability-ks}, 
we score datasets on a scale from $0$ to $5$ as 
$	1.5 \cdot S + 1 \cdot F$,
where $S,F\in\{0,1,2\}$, mapping ``uninformative'' to $0$, ``(un)informative'' to $1$, and informative to $2$, and giving the structural mode higher weight than the feature mode because we are dealing with \emph{graph}-learning datasets. 
The final evaluation results from binning the scores as
\vspace*{-0.4em}\begin{align*}
	[0,1] \mapsto~\texttt{--},~
		(1,2] \mapsto~\texttt{-},\\
			(2,3] \mapsto~\circ,~
				(3,4] \mapsto~\texttt{+},~
					(4,5] \mapsto~\texttt{++}.
\end{align*}
While we include this categorization to simplify the message conveyed by our results, 
we recommend considering the full, uncondensed separability results when assessing the quality of individual datasets.

\subsubsection{Performance-Separability Evaluation}

For each dataset, we evaluate the \emph{performance separability} between its perturbed versions using two different statistics in our permutation tests (Kolmogorov-Smirnov and Wilcoxon rank-sum) as well as two different $\alpha$-levels as significance cutoffs ($0.01$ and $0.005$). 
Notably, we see very few differences in the separability results and the associated classification of datasets,
confirming the robustness of our approach.

\begin{table*}[!t]
	\small
	\centering 
	\begin{tabular}{llllll}
\toprule
Dataset & Accuracy & AUROC & Structure & Features & Evaluation \\
\midrule
AIDS & cf $>$ cg $>$ rg $>$ o $>$ eg $>$ rf & cf/cg/eg/o/rg $>$ rf & uninformative & uninformative & \texttt{--} \\
COLLAB & o $>$ cg/rg $>$ eg $>$ cf/rf & o $>$ cg/rg $>$ eg $>$ rf $>$ cf & informative & informative & \texttt{++} \\
DD & rg $>$ cg $>$ cf $>$ eg/o $>$ rf & rg $>$ cf/cg/eg/o/rf & uninformative & uninformative & \texttt{--} \\
Enzymes & eg $>$ cg/o $>$ rg $>$ cf $>$ rf & eg/o $>$ cg $>$ rg $>$ cf $>$ rf & uninformative & informative & \texttt{-} \\
IMDB-B & cf/cg/eg/o/rf/rg & cg/o $>$ eg/rg $>$ cf/rf & uninformative & (un)informative & \texttt{--} \\
IMDB-M & cg/eg/o/rg $>$ cf $>$ rf & o $>$ eg/rg $>$ cg $>$ cf $>$ rf & (un)informative & informative & \texttt{+} \\
MUTAG & cf/cg/eg/o/rg $>$ rf & cf/cg/eg/o/rf/rg & uninformative & uninformative & \texttt{--} \\
MolHIV & o $>$ cf/cg/rg $>$ rf $>$ eg & o $>$ cf/cg $>$ rg $>$ eg $>$ rf & informative & informative & \texttt{++} \\
NCI1 & o $>$ cg $>$ cf $>$ rg $>$ eg/rf & o $>$ cg $>$ cf $>$ rg $>$ eg/rf & informative & informative & \texttt{++} \\
Peptides & o $>$ cg $>$ rg $>$ eg $>$ cf $>$ rf & cg $>$ o $>$ rg $>$ eg $>$ cf $>$ rf & (un)informative & informative & \texttt{+} \\
Proteins & cf $>$ cg/eg/o/rf/rg & cf/cg/eg/o/rf/rg & uninformative & uninformative & \texttt{--} \\
Reddit-B & cf $>$ rg $>$ o $>$ cg $>$ eg/rf & rg $>$ cf $>$ o $>$ cg/eg/rf & uninformative & uninformative & \texttt{--} \\
Reddit-M & rf/rg $>$ o $>$ eg & rg $>$ rf $>$ o $>$ eg & uninformative & uninformative & \texttt{--} \\
\bottomrule
\end{tabular}
 	\caption{\textbf{Measuring \emph{performance separability} between different versions of the same dataset: Wilcoxon rank-sum test.}
		Supplementing \cref{tab:performance-separability-ks}, 
		we use the Wilcoxon rank-sum as a test statistic to replace the KS statistic in our permutation tests, 
		with an otherwise identical setup ($10\,000$ random permutations; $\alpha$-level of $0.01$; Bonferroni correction within each dataset). 
		The results are substantively the same, with the exceptions that IMDB-B gets downgraded from fair to poor, and MUTAG gets downgraded from poor to very poor.
	}\label{tab:performance-separability-wilcoxon}
\end{table*}

\begin{table*}[!t]
	\small
	\centering 
	\begin{tabular}{llllll}
\toprule
Dataset & Accuracy & AUROC & Structure & Features & Evaluation \\
\midrule
AIDS & cf $>$ cg $>$ rg $>$ eg/o $>$ rf & cf/cg/rg $>$ eg/o $>$ rf & uninformative & uninformative & \texttt{--} \\
COLLAB & o $>$ cg/rg $>$ eg $>$ cf/rf & o $>$ cg/rg $>$ eg $>$ rf $>$ cf & informative & informative & \texttt{++} \\
DD & rg $>$ cg $>$ cf $>$ eg/o $>$ rf & rg $>$ cf/cg/eg/o/rf & uninformative & uninformative & \texttt{--} \\
Enzymes & eg $>$ cg/o $>$ rg $>$ cf $>$ rf & eg/o $>$ cg $>$ rg $>$ cf $>$ rf & uninformative & informative & \texttt{-} \\
IMDB-B & cf/cg/eg/o/rf/rg & o $>$ cg $>$ eg/rg $>$ cf/rf & (un)informative & (un)informative & $\circ$ \\
IMDB-M & cg/eg/o/rg $>$ cf $>$ rf & o $>$ cg/eg/rg $>$ cf $>$ rf & (un)informative & informative & \texttt{+} \\
MUTAG & cf/cg/o/rg $>$ eg $>$ rf & cf/o $>$ cg/eg/rf/rg & (un)informative & uninformative & \texttt{-} \\
MolHIV & o $>$ cf/cg/rg $>$ rf $>$ eg & o $>$ cf/cg $>$ rg $>$ eg $>$ rf & informative & informative & \texttt{++} \\
NCI1 & o $>$ cg $>$ cf $>$ rg $>$ eg $>$ rf & o $>$ cg $>$ cf $>$ rg $>$ eg/rf & informative & informative & \texttt{++} \\
Peptides & o $>$ cg $>$ rg $>$ eg $>$ cf $>$ rf & cg $>$ o $>$ rg $>$ eg $>$ cf $>$ rf & (un)informative & informative & \texttt{+} \\
Proteins & cf $>$ cg/eg/o/rf/rg & cf/cg/eg/o/rf/rg & uninformative & uninformative & \texttt{--} \\
Reddit-B & cf $>$ o/rg $>$ cg $>$ eg/rf & rg $>$ cf $>$ o $>$ cg/eg/rf & uninformative & uninformative & \texttt{--} \\
Reddit-M & rf/rg $>$ o $>$ eg & rg $>$ rf $>$ o $>$ eg & uninformative & uninformative & \texttt{--} \\
\bottomrule
\end{tabular}
 	\caption{\textbf{Measuring \emph{performance separability} between different versions of the same dataset: $\mathbf{\alpha} \mathbf{\leq} \mathbf{0.005}$.}
		Supplementing \cref{tab:performance-separability-ks}, 
		we test at an  $\alpha$-level of $0.005$ instead of $0.01$, using an otherwise identical setup ($10\,000$ random permutations; KS statistic; Bonferroni correction within each dataset). 
		The results are substantively identical to \cref{tab:performance-separability-ks}.
	}\label{tab:performance-separability-ks-005}
\end{table*}

\subsubsection{Supplementary Extensibility Experiments}

The \framework framework was developed from first principles to be a tool for the graph-learning community. In the spirit of inspiring and facilitating new and more detailed analyses of graph-learning datasets, \framework was designed to be modular and configurable.

Just as adaptability to other feature and structure metrics has been established in \cref{sec:metric-choices}, we continue to explore the extensibility of \framework in this section, demonstrating how it can accomodate other tasks (graph regression), architectures (transformer architectures), and a different granularity of results (graph-level performance results).

\paragraph{Extending to graph-level performance results.}
Inspired by reviewer feedback during the rebuttal phase, we present a preview of a deeper suite of analyses enabled by \framework—specifically, a graph-level investigation of performance that probes the separability of individual samples. Instead of asking how perturbations affect overall accuracy, we ask a more granular question: \emph{How similar are the sets of graphs that two given mode perturbations classify correctly?} To explore this, we compute the average similarity of correctly classified graph sets across multiple test splits and random seeds. \Cref{tab:graph-level-overview} outlines the configurations for two datasets from different categories in our dataset taxonomy: MUTAG and PROTEINS.

For each $(\PerturbDataset(\TrainData), \Architecture)$ pair, we extract the set of correctly classified graphs and compare them across mode perturbations. We report two similarity metrics: the \emph{Jaccard Similarity}, which measures the proportion of overlap between two sets, and an \emph{Asymmetric Overlap}, which quantifies the fraction of correctly classified samples under a perturbation that are also classified correctly in the original setting. Formally, we define 
\begin{align*}
\text{Jaccard}(A, A’) &= \frac{|A \cap A’|}{|A \cup A’|}, \
\text{Asymm}(A’, A) = \frac{|A \cap A’|}{|A’|}\;,
\end{align*}
where $A$ and $A'$ denote the sets of graphs correctly classified for by the two models being compared. 

Our findings are visualized in \Cref{fig:graph-level-similarity}. Across both datasets, most configuration pairs show strong agreement in the graphs they  classify correctly. However, in \Cref{fig:proteins_jaccard_asymmetric_heatmap}, the $\cf$ perturbation exhibits consistently lower similarity scores across all architectures, suggesting that it correctly classifies a distinct—and likely outlier—subset of graphs. Combined with the results in the main paper, which show high accuracy but low AUROC for Proteins under $\cf$, this suggests that these models are exploiting superficial structural cues to memorize the dominant class, rather than learning meaningful patterns.

By analyzing graph-level similarity across perturbation modes and architectures, we provide a more nuanced view of model behavior—one that moves beyond coarse aggregate metrics to capture variation at the level of individual samples. We hope that a deeper investigation of graph-level performance through \framework will uncover opportunities for dataset improvement and inform the design of robustness-enhancing interventions.

\begin{figure*}[t]
	\centering
	\begin{subfigure}{0.49\linewidth}
		\includegraphics[width=\linewidth]{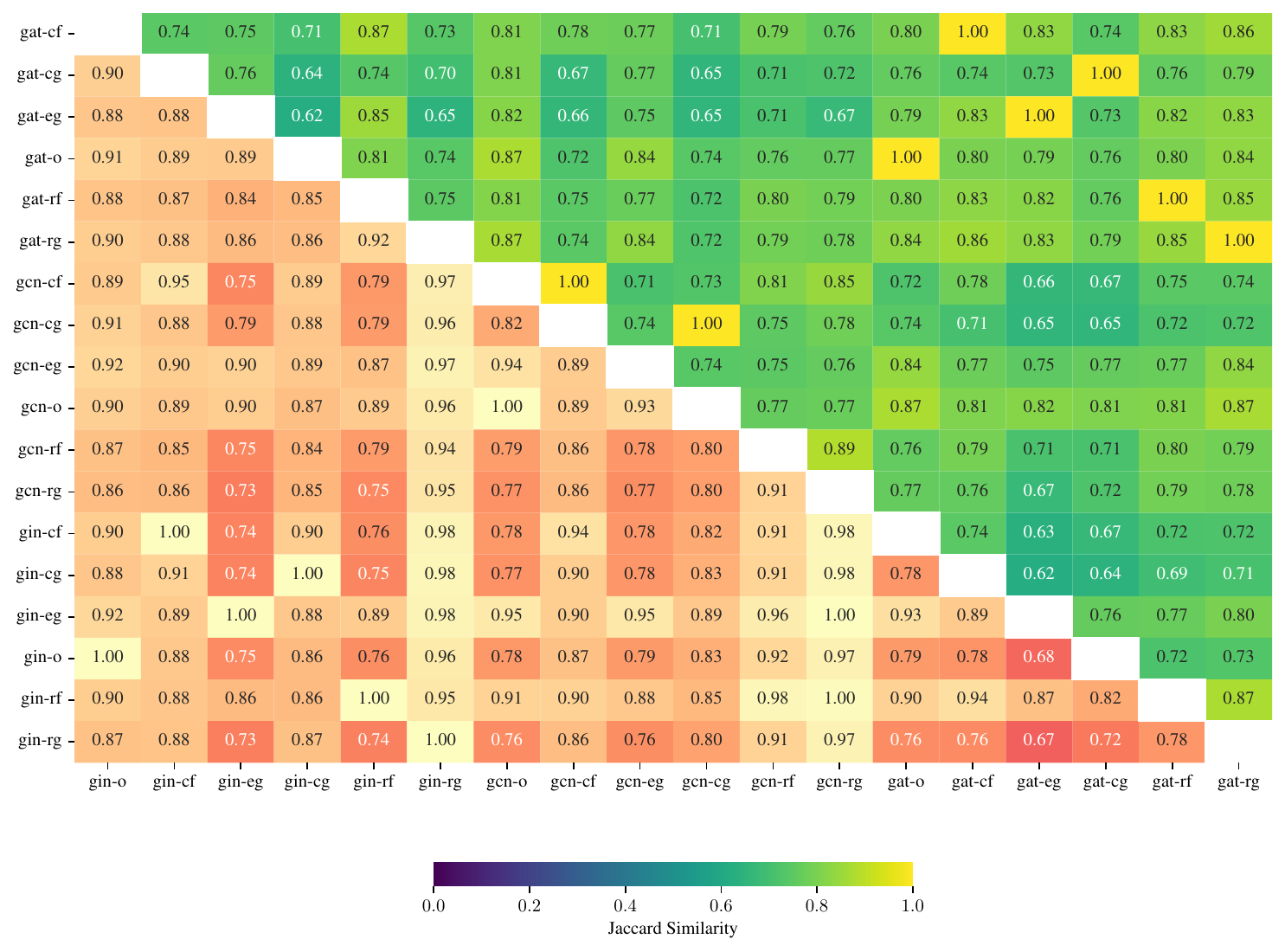}
		\caption{MUTAG}
		\label{fig:mutag_jaccard_asymmetric_heatmap}
	\end{subfigure}
	\hfill
	\begin{subfigure}{0.49\linewidth}
		\includegraphics[width=\linewidth]{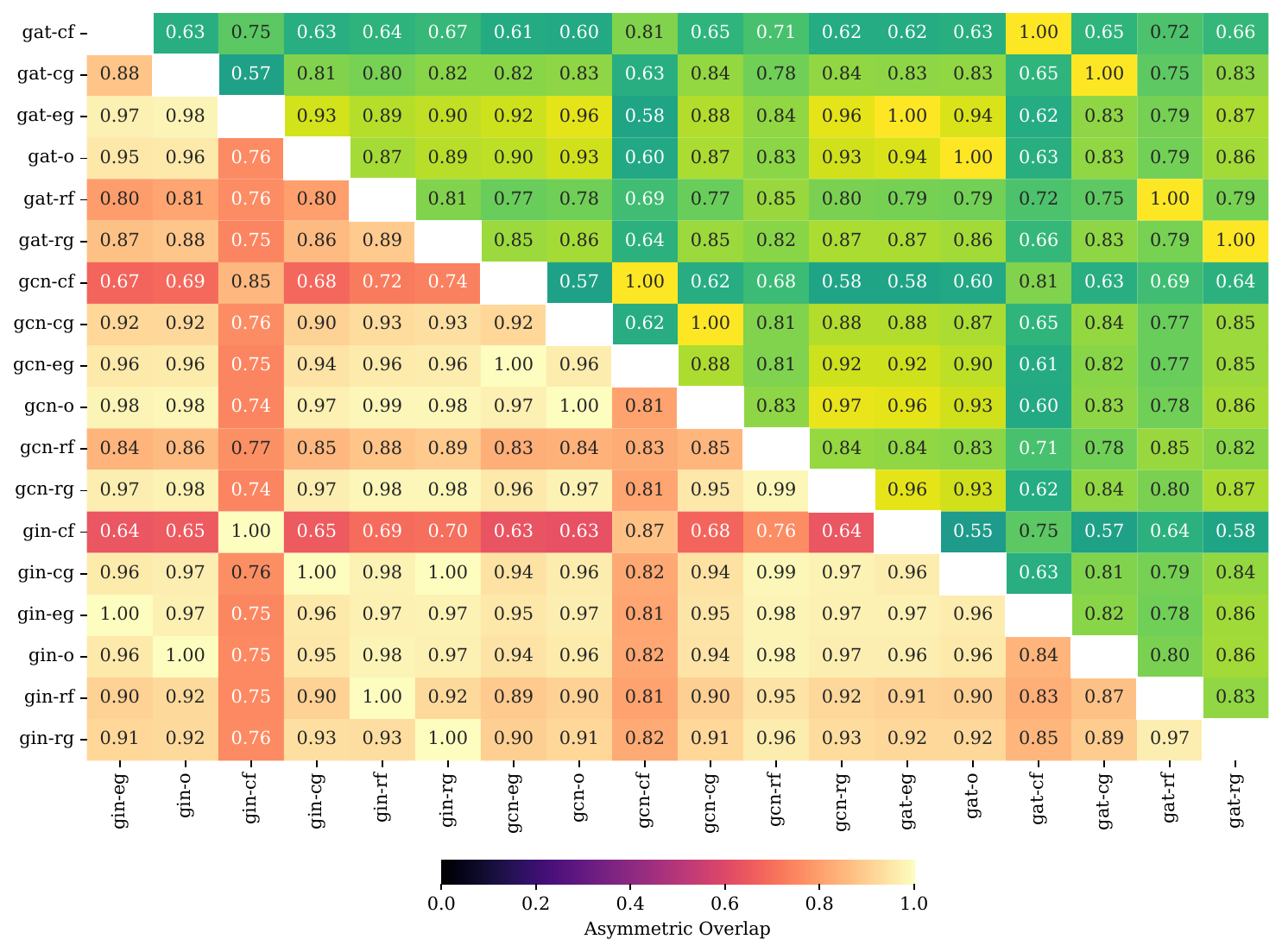}
		\caption{Proteins}
		\label{fig:proteins_jaccard_asymmetric_heatmap}
	\end{subfigure}
	\caption{\textbf{Graph-level agreement between (mode, architecture) pairs.} Jaccard similarity (upper triangular) and asymmetric overlap (lower triangular) quantify the similarity between sets of graphs correctly classified by different models on the MUTAG (a) and Proteins (b) datasets. Higher values indicate a larger shared set of correctly classified samples between $(\PerturbDataset(\TrainData), \Architecture)$ pairs.}
	\label{fig:graph-level-similarity}
\end{figure*}

\begin{table}[t]
	\centering \small
	\renewcommand{\arraystretch}{1.05} \setlength{\tabcolsep}{4pt}

\begin{tabular}{@{\hspace*{0.1em}}l@{\hspace*{0.3em}}l@{\hspace*{0.1em}}}
    \toprule
    \textbf{Aspect}                  & \textbf{Details} \\ 
    \midrule
\textbf{Global (PyTorch) Seeds}      
        & $\{$42, 7, 123, 56, 89$\}$ \\ 
    \textbf{Test/Train Split Seeds}      
        & $\{$67, 23, 77, 88, 54$\}$ \\ 
    \textbf{Transform Seeds}
        & Random Modes: $\{$34, 12, 99, 45, 10$\}$ \\
        & Fixed Modes: $\{$34$\}$ \\
    \textbf{Evaluation Strategy}     
        & Tuned model $(\PerturbDataset(D), \Architecture, \params)$ re-trained \\
		& on distinct CV splits and all unique \\
        & groupings of random seeds, then\\
		& evaluated on the test set of $\PerturbDataset(D)$. \\
    \bottomrule
\end{tabular}
 	\caption{\textbf{Granular evaluation setup.} For our extended experiments—including graph-level analyses, regression tasks, and new architectures—we introduce additional controls to account for randomness in performance separability: (1)~model initialization via the PyTorch global seed, (2)~train/test splitting, and (3)~perturbation generation (applicable to random-graph and random-features modes). To ensure consistency across datasets, we use the same three sets of five unique seeds.}
\label{tab:graph-level-overview}
\end{table}

\paragraph{Extending to regression datasets.}
While the primary analysis of the paper concerns graph \textit{classification} datasets, \framework is intrinsically task-agnostic. As a proof of concept, we also apply \framework to two common graph regression datasets, QM9 and ZINC-12k. As recommended by their authors, we use MAE as the performance metric for both datasets.

\textit{Dataset descriptions.}~QM9 \cite{qm9-2018} is a molecular dataset of $133\,885$ graphs. Its name originates from its focus on \textit{quantum mechanical} properties and the makeup of the dataset: stable, organic molecules with \textit{nine} or fewer heavy atoms. The PyG version has eleven node features, the first five being a one-hot encoding of the atom type, and the rest representing measures such as the hybridization state and the number of hydrogen atoms. Nineteen regression tasks seek to predict a variety molecular properties. We focus on task $0$ (as defined by PyG), regressing the dipole moment, which describes the spatial distribution of electrons.

The ZINC-12k dataset \cite{zinc2023}, which contains $12\,000$ molecules, is a subset of the larger ZINC dataset \cite{zinc2015}, which contains $250\,000$ molecules. In ZINC-12k, each node has only one feature, which encodes the type of atom.  The task is to regress constrained solubility, defined as the water-octanol partition coefficient ($logP$) minus the synthetic accessibility score ($SAS$) and the number of cycles with more than six atoms. Constrained solubility has implications in drug design, and thus has been used as the target for molecular graph generation models, as in \cite{zinc-gen-jin18a} and \cite{zinc-gen-kusner17a}.

\textit{Performance separability results.}~Using the fine-grained evaluation setup described in \cref{tab:graph-level-overview}, we evaluate the performance of the two regression datasets using the GCN architecture, adapting the procedure to minimize MAE instead of maximizing AUROC. We report these results in \cref{tab:regression-results}. Note that we have far fewer successful runs as for our the graph classification experiments in the main paper (see \cref{sec:challenges}). 

We thus opt to use bootstrapping, sampling $10\,000$ times with a confidence interval of $99\%$, yielding the results shown in \cref{tab:regression-separability}. As \framework was developed from first principles, we apply the same classification schema to the relevance of information in each mode and the overall dataset evaluation.

\begin{table}[t]
	\centering \small
	\renewcommand{\arraystretch}{1.05} \setlength{\tabcolsep}{6pt} \begin{tabular}{lcc}
    \toprule
    Mode & QM9 & ZINC-12k \\
    \midrule
    cg & $0.39 \pm 0.05$ & $1.31 \pm 0.02$ \\
    eg & $0.65 \pm 0.03$ & $0.98 \pm 0.03$ \\
    cf & $\mathbf{0.07 \pm 0.01}$ & $1.40 \pm 0.02$ \\
    o & $0.60 \pm 0.02$ & $\mathbf{0.80 \pm 0.03}$ \\
    rf & $1.44 \pm 0.02$ & $1.48 \pm 0.01$ \\
    rg & $0.93 \pm 0.01$ & $1.39 \pm 0.02$ \\
    \bottomrule
\end{tabular} 	\caption{\textbf{Performance results for regression datasets using the GCN architecture.} For the QM9 and ZINC-12k regression datasets, each entry is reported as $\mu_{\text{MAE}} \pm \sigma_{\text{MAE}}$.}
	\label{tab:regression-results}
\end{table}

\begin{table*}[!t]
	\small
	\centering 
	\renewcommand{\arraystretch}{1.05} \setlength{\tabcolsep}{6pt} \begin{tabular}{llllll}
    \toprule
    Dataset & MAE & Structure & Features & Evaluation \\
    \midrule
QM9 & cf $<$ cg $<$ o $<$ eg $<$ rg $<$ rf & uninformative & unformative & $- -$ \\
ZINC-12k & o $<$ eg $<$ cg $<$ rg/cf $<$ rf & informative & informative & $+ +$ \\
    \bottomrule
\end{tabular} 	\caption{\textbf{Performance separability for regression datasets following results in \cref{tab:regression-results}.} Separability is computed using bootstrapping, sampling $10,000$ times with a confidence interval of 99\%. The evaluation is based on the performance separability schema outlined in \cref{tab:performance-separability-ks}.}
	\label{tab:regression-separability}
\end{table*}

\textit{Interpretation of results.}~For QM9, performance-separability results suggest that neither the original mode nor its perturbations are particularly informative for the task. Notably, the original mode underperforms both $\cf$ and $\cg$, raising concerns about its suitability as a default configuration. Depending on the diversity of available modes, this may warrant realignment—or even deprecation—of QM9 in its current form. In contrast, ZINC-12k yields more encouraging results: The original mode consistently outperforms all perturbations, indicating that it captures task-relevant structure, and yielding a $++$ overall classification. Interestingly, graph perturbations tend to outperform feature perturbations, suggesting that node features play a more critical role in this task than the underlying graph structure. For both datasets, a more exhaustive training routine—as employed in the main paper—is necessary to draw definitive conclusions regarding their placement within the regression taxonomy.

\paragraph{Extending to transformer architectures.}
As recommended by our reviewers, we further demonstrate that \framework also generalizes nicely to transformer architectures such as the GPS Transformer, introduced by \citet{RampasekGDLWB22}. In contrast to GCN, GIN and GAT, GPS combines local message passing with global attention to overcome the expressivity and scalability limits of traditional GNNs. We compare the performance separability when considering 4 architectures across three datasets (NCI1, MUTAG, Proteins). 
Note that other transformer architectures, 
such as the Graphformer architecture proposed by \citet{yang2021graphformers} that we originally planned to include in our extended experiments, 
are tailored to scenarios with non-graph data, which lie beyond the scope of this work.

\textit{Performance-separability results.}
Accuracy and AUROC results for the GPS architecture are reported in \cref{tab:gps-results-accuracy} and \cref{tab:gps-results-auroc}, respectively, alongside newly generated results for GCN, GAT, and GIN—each run under the exact randomization schema outlined in \cref{tab:graph-level-overview}.

Our goal is to assess whether incorporating a transformer-based architecture, specifically GPS, meaningfully alters performance separability outcomes. To this end, we compare separability results that include GPS with those based solely on the core architectures (GCN, GAT, GIN), as shown in \cref{tab:gps-ranking-comparison}. As a reminder, performance separability is computed using the best-performing architecture for each mode.

To assess statistical significance, we apply bootstrapping with 10,000 resamples and report 99\% confidence intervals, as shown in \cref{tab:gps-ranking-comparison}. We then apply the \framework classification schema to evaluate the informativeness of each mode and determine the overall dataset classification.

\textit{Interpretation of results.}
We find that the inclusion of GPS has negligible impact on performance separability and dataset classification. This validates our decision to focus on the core architectures in the main paper, which are computationally lighter, yet still sufficient to support the conclusions drawn about dataset behavior.

\begin{table*}[!t]
	\small
	\centering 
	\renewcommand{\arraystretch}{1.05} \setlength{\tabcolsep}{4pt} \begin{tabular}{llrrrrrrrrrrrr}
\toprule
 &  & $\mu$ & $\sigma$ & $\mu$ & $\sigma$ & $\mu$ & $\sigma$ & $\mu$ & $\sigma$ & $\mu$ & $\sigma$ & $\mu$ & $\sigma$ \\
 Dataset &  & o & o & cg & cg & eg & eg & rg & rg & cf & cf & rf & rf \\
\midrule
\multirow[t]{4}{*}{MUTAG} & GAT & 0.882 & 0.100 & 0.782 & 0.163 & 0.877 & 0.123 & 0.867 & 0.104 & 0.920 & 0.050 & 0.658 & 0.143 \\
 & GCN & 0.885 & 0.097 & 0.827 & 0.140 & 0.862 & 0.097 & 0.851 & 0.091 & 0.903 & 0.062 & 0.598 & 0.136 \\
 & GIN & 0.926 & 0.072 & 0.887 & 0.162 & 0.844 & 0.107 & 0.913 & 0.058 & 0.914 & 0.054 & 0.846 & 0.095 \\
 & GPS & 0.727 & 0.245 & 0.483 & 0.399 & 0.415 & 0.386 & 0.476 & 0.398 & 0.719 & 0.330 & 0.849 & 0.180 \\\midrule
\multirow[t]{4}{*}{NCI1} & GAT & 0.552 & 0.140 & 0.672 & 0.056 & 0.667 & 0.026 & 0.655 & 0.070 & 0.738 & 0.017 & 0.683 & 0.033 \\
 & GCN & 0.794 & 0.020 & 0.736 & 0.015 & 0.646 & 0.027 & 0.685 & 0.032 & 0.745 & 0.014 & 0.648 & 0.038 \\
 & GIN & 0.844 & 0.017 & 0.740 & 0.046 & 0.628 & 0.041 & 0.693 & 0.036 & 0.731 & 0.019 & 0.653 & 0.032 \\
 & GPS & 0.661 & 0.097 & 0.626 & 0.018 & 0.677 & 0.064 & 0.677 & 0.058 & 0.675 & 0.035 & 0.671 & 0.047 \\\midrule
\multirow[t]{4}{*}{PROTEINS} & GAT & 0.785 & 0.053 & 0.759 & 0.048 & 0.791 & 0.051 & 0.769 & 0.057 & 0.769 & 0.030 & 0.733 & 0.040 \\
 & GCN & 0.787 & 0.058 & 0.787 & 0.047 & 0.777 & 0.059 & 0.786 & 0.045 & 0.762 & 0.032 & 0.750 & 0.039 \\
 & GIN & 0.788 & 0.045 & 0.769 & 0.035 & 0.780 & 0.050 & 0.769 & 0.036 & 0.731 & 0.051 & 0.770 & 0.042 \\
 & GPS & 0.738 & 0.086 & 0.608 & 0.171 & 0.754 & 0.040 & 0.602 & 0.163 & 0.715 & 0.102 & 0.766 & 0.034 \\
\cline{1-14}
\end{tabular} 	\caption{\textbf{AUROC results for GPS and core architectures.} For each architecture, we report each mode's mean AUROC and its standard deviation. We do this for a subset of our datasets, namely Proteins, NCI1, and MUTAG.}
	\label{tab:gps-results-auroc}
\end{table*}

\begin{table*}[!t]
	\small
	\centering 
	\renewcommand{\arraystretch}{1.05} \setlength{\tabcolsep}{4pt} \begin{tabular}{llrrrrrrrrrrrr}
\toprule
 &  & $\mu$ & $\sigma$ & $\mu$ & $\sigma$ & $\mu$ & $\sigma$ & $\mu$ & $\sigma$ & $\mu$ & $\sigma$ & $\mu$ & $\sigma$ \\
 Dataset &  & o & o & cg & cg & eg & eg & rg & rg & cf & cf & rf & rf \\
\midrule
\multirow[t]{4}{*}{MUTAG} & GAT & 0.736 & 0.102 & 0.701 & 0.099 & 0.654 & 0.106 & 0.677 & 0.079 & 0.736 & 0.104 & 0.647 & 0.104 \\
 & GCN & 0.744 & 0.099 & 0.783 & 0.107 & 0.736 & 0.106 & 0.789 & 0.080 & 0.834 & 0.077 & 0.631 & 0.095 \\
 & GIN & 0.853 & 0.087 & 0.826 & 0.137 & 0.701 & 0.125 & 0.845 & 0.077 & 0.839 & 0.067 & 0.682 & 0.082 \\
 & GPS & 0.570 & 0.176 & 0.485 & 0.185 & 0.465 & 0.179 & 0.485 & 0.180 & 0.590 & 0.159 & 0.648 & 0.105 \\\midrule
\multirow[t]{4}{*}{NCI1} & GAT & 0.550 & 0.055 & 0.578 & 0.043 & 0.499 & 0.030 & 0.565 & 0.033 & 0.680 & 0.018 & 0.496 & 0.030 \\
 & GCN & 0.717 & 0.025 & 0.669 & 0.018 & 0.538 & 0.021 & 0.640 & 0.027 & 0.684 & 0.015 & 0.499 & 0.029 \\
 & GIN & 0.769 & 0.020 & 0.690 & 0.026 & 0.541 & 0.022 & 0.647 & 0.031 & 0.675 & 0.024 & 0.541 & 0.041 \\
 & GPS & 0.492 & 0.034 & 0.528 & 0.016 & 0.489 & 0.038 & 0.497 & 0.039 & 0.610 & 0.033 & 0.494 & 0.028 \\\midrule
\multirow[t]{4}{*}{PROTEINS} & GAT & 0.612 & 0.052 & 0.631 & 0.055 & 0.601 & 0.039 & 0.608 & 0.049 & 0.722 & 0.038 & 0.608 & 0.034 \\
 & GCN & 0.593 & 0.033 & 0.627 & 0.069 & 0.600 & 0.046 & 0.593 & 0.029 & 0.725 & 0.036 & 0.610 & 0.028 \\
 & GIN & 0.609 & 0.033 & 0.602 & 0.043 & 0.602 & 0.038 & 0.607 & 0.050 & 0.705 & 0.039 & 0.606 & 0.028 \\
 & GPS & 0.582 & 0.049 & 0.547 & 0.083 & 0.590 & 0.028 & 0.532 & 0.068 & 0.669 & 0.109 & 0.591 & 0.027 \\
\cline{1-14}
\end{tabular} 	\caption{\textbf{Accuracy results for GPS and core architectures.} For each architecture, we report each mode's mean accuracy and its standard deviation. We do this for a subset of our datasets, namely Proteins, NCI1, and MUTAG.}
	\label{tab:gps-results-accuracy}
\end{table*}

\begin{table*}[!t]
	\small
	\centering 
	\renewcommand{\arraystretch}{1.05} \setlength{\tabcolsep}{4pt} \begin{tabular}{lllllll}
    \toprule
    Dataset & GPS Results & Accuracy & AUROC & Structure & Features & Evaluation \\
    \midrule
MUTAG & Present & o/rg/cf/cg $>$ eg $>$ rf & o/cf/rg/cg/eg/rf & uninformative & uninformative & $- -$ \\
 & Absent & o/rg/cf/cg $>$ eg $>$ rf & o/cf/rg/cg/eg $>$ rf & uninformative & uninformative & $- -$ \\\midrule
NCI1 & Present & o $>$ cg/cf $>$ rg $>$ eg/rf & o $>$ cf/cg $>$ rg $>$ rf/eg & informative & informative & $+ +$ \\
 & Absent & o $>$ cg/cf $>$ rg $>$ eg/rf & o $>$ cf/cg $>$ rg $>$ rf $>$ eg & informative & informative & $+ +$ \\\midrule
Proteins & Present & cf $>$ cg $>$ o/rf/rg/eg & eg/o/cg/rg $>$ rf/cf & uninformative & (un)informative & $- -$ \\
 & Absent & cf $>$ cg $>$ o/rf/rg/eg & eg/o/cg/rg $>$ rf/cf & uninformative & (un)informative & $- -$ \\
    \bottomrule
\end{tabular} 	\caption{\textbf{Comparison of performance-separability classification with and without GPS.} For Proteins, NCI1, and MUTAG, we compare performance separability results with and without including the GPS architecture. The evaluation is based on the performance-separability schema outlined in \cref{tab:performance-separability-ks}.}
	\label{tab:gps-ranking-comparison}
\end{table*}

\paragraph{Computational challenges.} \label{sec:challenges}

Rigorous evaluation of GNNs and their benchmark datasets introduces significant computational challenges. Despite access to substantial computing resources, a notable portion of model trainings failed to converge—particularly under the more granular and stringent experimental setups introduced in this work. For example, even in our original experimental configurations, GIN models on REDDIT-B could not be successfully tuned or trained, and REDDIT-M experiments under the \emph{complete-graph} and \emph{complete-features} perturbations (see \Cref{fig:performance-separability}) also failed due to memory limitations or convergence issues. Our extended evaluations for ZINC, QM9, GPS, and graph-level analyses faced similar constraints, resulting in smaller sample sizes for these experiments relative to those reported in the main results.

At the graph level, we achieved full coverage for MUTAG and 97\% coverage for Proteins (based on the grids established in \Cref{tab:graph-level-overview}), but other datasets were hindered by incomplete runs. These issues underscore the need for more adaptive configuration and tuning strategies to ensure stable model convergence across diverse datasets and architectures. Understanding and mitigating these failure modes remains an important direction for future work. Our results reflect both the potential and the practical limitations of large-scale, rigorous GNN evaluation pipelines such as \framework.

\subsection{Extended Mode Complementarity}

In our mode-complementarity computations, 
we need to distinguish fixed modes (original, empty, complete) from randomized modes (random, shuffled). 
To compute mode complementarity for the randomized modes, 
we use five different randomization seeds for each graph, resulting in the setup summarized in \cref{tab:complementarity-config}.

\begin{figure}[t]
	\centering
	\includegraphics[width=\linewidth]{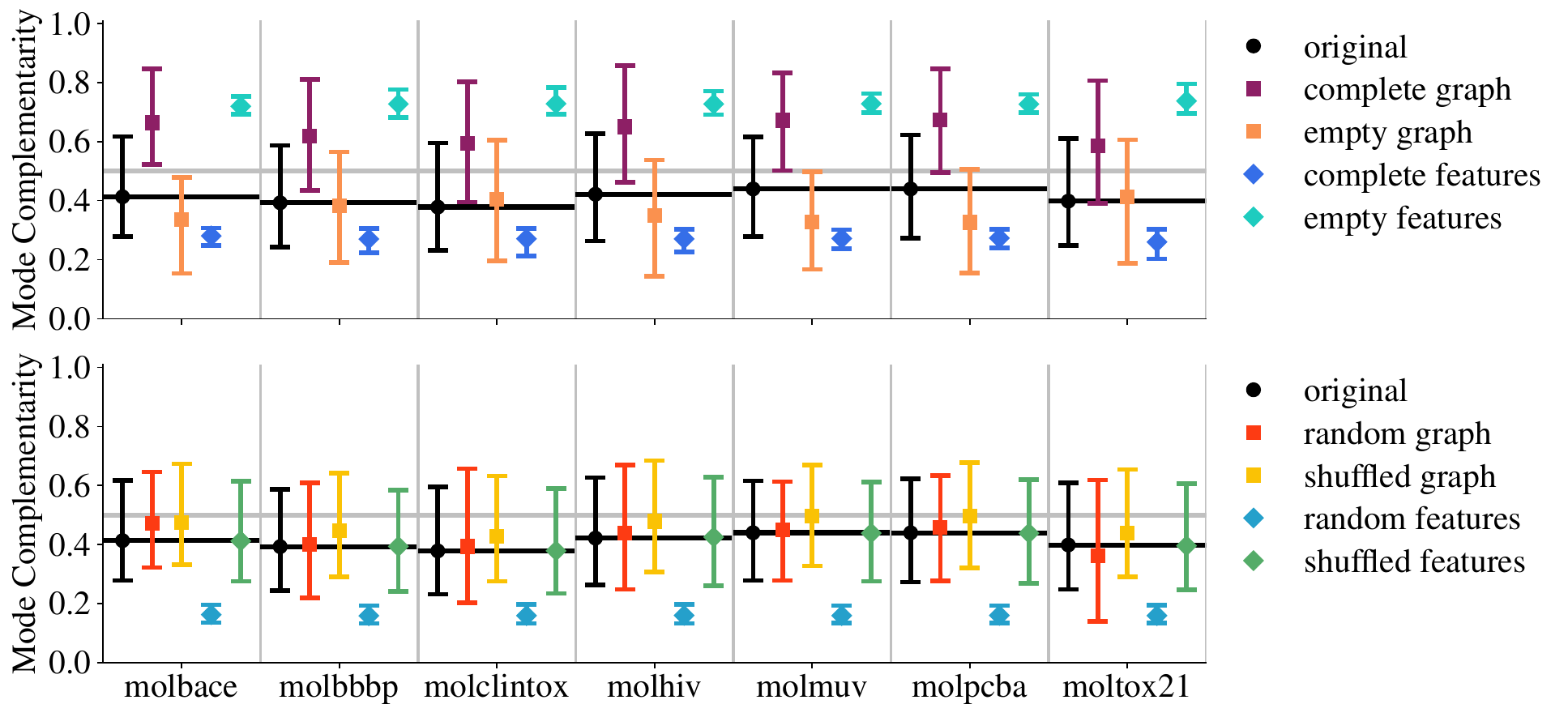}
	\caption{\textbf{Comparing \emph{levels of mode complementarity} across different versions of the same dataset.} We show the mean (dot) and $95$th percentile intervals (bars) of complementarity scores for the original version as well as $4$ deterministic perturbations (top) and $4$ randomized perturbations (bottom) of $7$ OGB datasets, computed with $t=1$ diffusion steps.
		Black horizontal lines indicate mean mode complementarities of the original dataset, and the silver horizontal line marks the $0.5$ threshold relevant for assessing mode diversity. 
		Note that $\complementarity_\text{eg} = 1 - \complementarity_\text{cg}$ and  $\complementarity_\text{ef} = 1 - \complementarity_\text{cf}$ by definition. 
		Contrasting with the variation apparent in \cref{fig:mode-complementarity}, 
		the OGB mol-$\ast$ datasets are remarkably similar in their mode-complementarity profiles. 
	}\label{fig:mode-complementarity-ogb}
\end{figure}

To study how perturbations impact the complementarity of a dataset, we refer to \Cref{fig:mode-complementarity} (see \cref{fig:mode-complementarity-ogb} for a supplementary visualization of mode complementarities computed for $7$ OGB datasets). 
In the top panel, we compare the original complementarity score using the $L_{1,1}$ norm, $\complementarity^{1,1}$, to the complementarity under fixed perturbations $ \complementarity^{1,1}_{\PerturbGraph}$, where $\PerturbGraph \in \{\cg,\eg,\cf,\ef \}$, with the randomized perturbations shown in the bottom panel. Note the symmetry introduced by \Cref{thm:perturbation-duality}, i.e., that $\complementarity^{1,1}_{e\ast} = 1 - \complementarity^{1,1}_{c\ast}$. 
These perturbations, as per \Cref{prop:perturbation-duality}, effectively measure the self-complementarity of both modes, thus providing insights into the diversity of either the graph structure or the features contained in a given dataset. In particular, extreme $\complementarity^{1,1}_{e\ast}$ scores imply that the metric space of the dual mode is uninteresting (i.e., it lacks any geometric diversity in the distance matrix). We see this phenomenon, for example, in datasets with high $\complementarity^{1,1}_{ef}$ such as IMDB-B and IMDB-M, both notably ego-networks. Similarly, we note datasets with extreme $\complementarity^{1,1}_{eg}$, including DD, whose node features are derived artificially from node degree. We formalize this notion of mode diversity in the following section.

\subsubsection{Mode Complementarity and Mode Diversity}

In \Cref{tab:mode-diversity}, we compute $\Delta^{p,q}_\ast$ for each graph in the dataset and then compute the mean $\mu$ and the standard deviation $\sigma$ of these scores. A low mean for a mode is indicative of low diversity across the dataset, meaning we see little variation in the geometric structure of the corresponding metric space. 
To arrive at our symbolic scoring for $\mu$, 
we divide the interval $[0,1]$ into five equal-width bins as $[0,0.2)$, $[0.2,0.4)$, $[0.4,0.6)$, $[0.6,0.8)$, $[0.8,1.0]$. 
We proceed similarly for $\sigma$, using smaller brackets due to its different scale: $[0.0,0.05)$, $[0.05,0.1)$, $[0.1,0.15)$, $[0.15,0.2)$, $[0.2,1.0)$. 
As with our symbolic performance-separability scoring, these categories primarily serve to convey our main message, 
but the numerical values (which do not exhibit discontinuities) should be preferred when conducting in-depth dataset evaluations. 

Simply put, datasets with (very) good diversity are candidates for new tasks that could leverage the diversity seen in these modes. In particular, we are interested in datasets that performed poorly in the performance separability evaluation (see \Cref{tab:performance-separability-ks}) but have high structural diversity. This type of dataset may have potential as a graph-learning benchmark---if remodeled or assigned a new task that better leverages the information in both modes. 
As depicted in the second row of our taxonomy table in \cref{experiments:taxonomy}, these datasets are AIDS, DD, MUTAG, Reddit-B, and Reddit-M.

\subsubsection{Mode Complementarity and Performance}

\begin{figure}[!t]
	\centering
	\includegraphics[width=\linewidth]{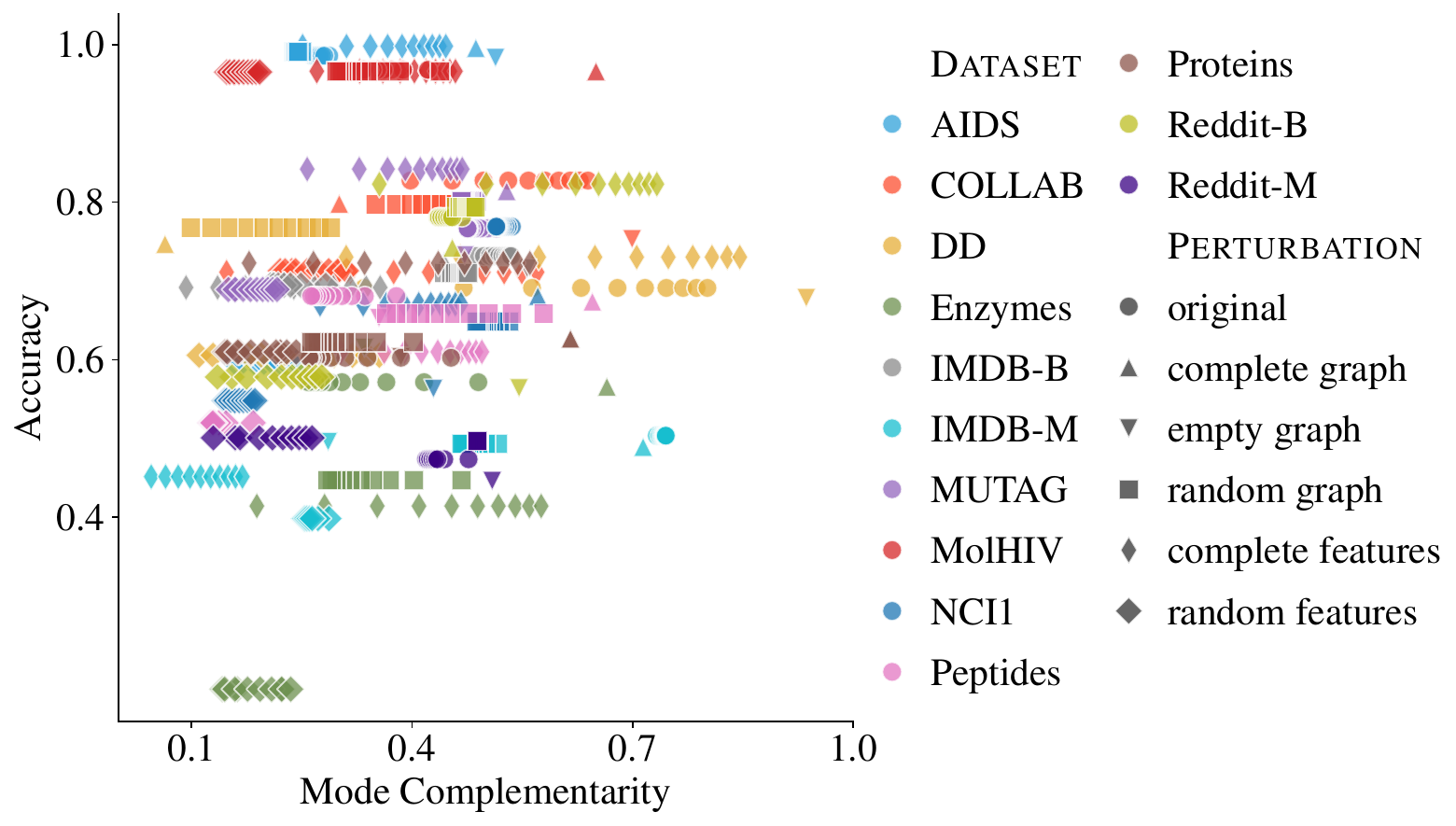}
	\caption{\textbf{Mode complementarity and performance: Accuracy as a performance measure.} 
		We show the mean accuracy (y) as a function of the mean mode complementarity (x), 
		for the original version and $5$ perturbations of $13$ graph-learning datasets, 
		based on our best-on-average models (as in \cref{fig:performance-separability}). 
		Each marker represents a (dataset, perturbation, $t$) tuple, where $t \in[10]$ is the number of diffusion steps used in the computation of diffusion distances on the graph. 
		Higher mean mode complementarity appears to be associated with higher accuracy, 
		and datasets differ in the range of their mode-complementarity shifts.
	}\label{fig:mode-complementarity-vs-performance-accuracy}
\end{figure}

Our intuition is that datasets with high complementarity (different and high information content across the two modes) should perform better. 
In line with this expectation, we observe a generally positive relationship between mode complementarity and performance when measured by both AUROC and accuracy, as visualized in \Cref{fig:mode-complementarity-vs-performance} and \Cref{fig:mode-complementarity-vs-performance-accuracy}, respectively. This correlation is further quantified in \Cref{tab:mode-complementarity-vs-performance-cor}, 
where we symbolically categorize the observed correlations by creating five equal-width bins in the interval $[-1.0,1.0]$, i.e., $[-1.0,-0.6)$, $[-0.6, -0.2)$, $[-0.2,0.2)$, $[0.2,0.6)$, $[0.6,1.0]$. 
Again, this categorization only serves to communicate our main message (here: that the correlations are mostly positive and substantively robust to different ways of measurement), 
but the numerical values should be consulted for gaining detailed insights.

\begin{table*}[t]
	\centering\small
	\begin{tabular}{lrrrrrrlllllll}
\toprule
 & \multicolumn{3}{r}{Accuracy Values} & \multicolumn{3}{r}{AUROC Values} & \multicolumn{3}{r}{Accuracy} & \multicolumn{3}{r}{AUROC} & Evaluation \\
 & $p$ & $s$ & $k$ & $p$ & $s$ & $k$ & $p$ & $s$ & $k$ & $p$ & $s$ & $k$ &  \\
Dataset &  &  &  &  &  &  &  &  &  &  &  &  &  \\
\midrule
AIDS & 0.51 & 0.33 & 0.23 & 0.44 & 0.29 & 0.19 & \texttt{+} & \texttt{+} & \texttt{+} & \texttt{+} & \texttt{+} & $\circ$ & \texttt{+} \\
COLLAB & 0.21 & 0.28 & 0.19 & 0.22 & 0.25 & 0.18 & \texttt{+} & \texttt{+} & $\circ$ & \texttt{+} & \texttt{+} & $\circ$ & \texttt{+} \\
DD & -0.17 & -0.31 & -0.20 & -0.27 & -0.24 & -0.16 & $\circ$ & \texttt{-} & $\circ$ & \texttt{-} & \texttt{-} & $\circ$ & \texttt{-} \\
Enzymes & 0.48 & 0.41 & 0.29 & 0.47 & 0.38 & 0.26 & \texttt{+} & \texttt{+} & \texttt{+} & \texttt{+} & \texttt{+} & \texttt{+} & \texttt{+} \\
IMDB-B & 0.33 & 0.28 & 0.19 & 0.57 & 0.50 & 0.34 & \texttt{+} & \texttt{+} & $\circ$ & \texttt{+} & \texttt{+} & \texttt{+} & \texttt{+} \\
IMDB-M & 0.50 & 0.59 & 0.40 & 0.58 & 0.67 & 0.46 & \texttt{+} & \texttt{+} & \texttt{+} & \texttt{+} & \texttt{++} & \texttt{+} & \texttt{+} \\
MUTAG & 0.32 & 0.29 & 0.20 & 0.11 & 0.12 & 0.08 & \texttt{+} & \texttt{+} & \texttt{+} & $\circ$ & $\circ$ & $\circ$ & \texttt{+} \\
MolHIV & 0.14 & 0.19 & 0.13 & 0.61 & 0.55 & 0.39 & $\circ$ & $\circ$ & $\circ$ & \texttt{++} & \texttt{+} & \texttt{+} & \texttt{+} \\
NCI1 & 0.66 & 0.69 & 0.48 & 0.47 & 0.59 & 0.40 & \texttt{++} & \texttt{++} & \texttt{+} & \texttt{+} & \texttt{+} & \texttt{+} & \texttt{++} \\
Peptides & 0.65 & 0.37 & 0.25 & 0.71 & 0.52 & 0.36 & \texttt{++} & \texttt{+} & \texttt{+} & \texttt{++} & \texttt{+} & \texttt{+} & \texttt{++} \\
Proteins & 0.21 & 0.20 & 0.12 & 0.09 & 0.10 & 0.07 & \texttt{+} & $\circ$ & $\circ$ & $\circ$ & $\circ$ & $\circ$ & $\circ$ \\
Reddit-B & 0.44 & 0.32 & 0.24 & 0.12 & 0.30 & 0.21 & \texttt{+} & \texttt{+} & \texttt{+} & $\circ$ & \texttt{+} & \texttt{+} & \texttt{+} \\
Reddit-M & -0.36 & -0.46 & -0.33 & -0.33 & -0.36 & -0.24 & \texttt{-} & \texttt{-} & \texttt{-} & \texttt{-} & \texttt{-} & \texttt{-} & \texttt{-} \\
\bottomrule
\end{tabular}
 	\caption{\textbf{Mode complementarity and performance: Correlation statistics.} 
		We show the Pearson correlation between mode complementarity and test accuracy resp. AUROC for $13$ graph-learning datasets, 
		taken over $5$ perturbations and $t\in[10]$ diffusion steps, 
		based on our best-on-average models (as in \cref{fig:performance-separability}). 
	}\label{tab:mode-complementarity-vs-performance-cor}
\end{table*}

Taking a closer look \Cref{fig:mode-complementarity-vs-performance}, we can observe how the relationship between mode complementarity and performance changes across mode perturbations (drawn as different shapes). Note that most datasets categorized as good graph-learning benchmarks (such as Peptides and NCI1) exhibit a stronger positive trend between complementarity and performance among their perturbations. We see an even stronger association with some datasets when the original mode (denoted by a circle) is not only the best performer but also has the highest complementarity.

For a given perturbed dataset, \Cref{fig:mode-complementarity-vs-performance} also shows the changes over different diffusion steps (i.e., the horizontal movement of points with the same color and shape). This has a variable effect on the perturbed $\complementarity^{1,1}_\ast$ score. This variation merits further investigation, but we can already note some initial interesting patterns. For example, the $\complementarity^{1,1}_{\rf}$ scores for MolHIV, Peptides, ENZYMES, MUTAG hardly change over diffusion, while   the $\complementarity^{1,1}_{\cf}$ score is more sensitive to the diffusion process. This may indicate that the metric spaces that arise from the graph modes are more similar to the metric spaces that arise from random features. This would suggest using $\cf$ (as an ``uninteresting'' metric comparison) to pick up more signal over a diffusion process that occurs in GNNs.

\section{Related Work}
\label{apx:related-work}

Extending the discussion of related work begun in the main text, 
there are three relevant related lines of work followed in the graph-learning community, namely, 
\begin{inparaenum}[(i)]
  \item \emph{data-centric and multiverse approaches in machine learning},
  \item \emph{graph-learning benchmark datasets}, and
  \item \emph{graph-learning evaluations}.
\end{inparaenum}
Overall, we find that \framework provides a unique perspective on the challenges discussed in these fields.

\paragraph{Data-Centric and Multiverse Approaches.}
This category comprises works that assume a data-centric perspective, potentially imbued with the \emph{multiverse} notion, i.e., the notion that \emph{any} data-analysis task necessitates an elaborate analysis of choices and ``non-choices,'' thus resulting not in a \emph{single} outcome but a \emph{multiverse} of outcomes.
This is a novel perspective, originally arising from psychology, gaining traction in general machine-learning applications~\citep{Simson2023, biderman2020, wayland2024mapping, Germani2023}, that serves to highlight the \emph{impact} of different decisions, such as data preprocessing or model selection, on the outcome.
Even more broadly, \citet{Mazumder23a} present a call for \emph{data-centric machine learning}, emphasizing the need for considering foremost the \emph{data}, including its quality and provenance, in the development cycle of machine-learning models.

A crucial aspect of any data-centric approach is the development of suitable measures or metrics for the comparison of graphs or their respective~(latent) representations.
We find several prior works here~\citep{zambon_graph_2020, lin_graph_2023, tang_robust_2023}, but by their design, such measures cannot focus \emph{beyond} the comparison of individual graphs, remaining instead \emph{intrinsic} with respect to a specific dataset.
Our \emph{mode complementarity} overcomes this restriction by adopting a metric-space perspective.
While some aspects of metric spaces based on graph structure have been studied~\citep{taha_normed_2023, sonthalia_tree_2020, chuang_tree_2022, Sanmartn2022}, the focus lies on \emph{embeddings} or \emph{robust shortest paths}, whereas our work is concerned with harnessing metric-space information to provide insights into the \emph{interplay} of graph structure and node features. 
Thus, our framework might also benefit from integrating \emph{metric space magnitude} \citep{limbeck2024metric}, 

\paragraph{Graph-Learning Benchmark Datasets.}
Several publications also present new benchmark datasets, driven either by the observation that existing datasets do not cover a sufficiently ``dense'' part of the graph-learning landscape~\citep{palowitch2022graphworld}, or aiming to present more challenging tasks that serve to explore the limitations of existing architectures~\citep{Dwivedi2022, Akbiyik2023}.
For example, in the context of \emph{node-classification tasks}, new datasets are proposed to assess the performance of GNNs in heterophilous~(or, more precisely, non-homophilous) regimes~\citep{lim2021large, luan2023when, mao2023demystifying, platonov2023critical, platonov2023characterizing, luan2022revisiting}, typically drawing upon graph-level measures from \emph{network science} \cite{newman2018networks,barabasi2016network}.

However, with \emph{mode complementarity}, we seek a score that 
\begin{inparaenum}[(1)]
	\item treats graph structure and node features as equal,
	\item works on graphs without node labels and does not make any assumptions about the spaces arising from edge connectivity and node features, and 
	\item specifically informs graph-level learning tasks such as \emph{graph classification}.
\end{inparaenum}
To the best of our knowledge, we are the first to propose a score fulfilling
these desiderata. 

\paragraph{Graph-Learning Evaluations.}
Previous work on \emph{evaluating} graph learning can be broadly categorized into papers that, like ours, criticize the status quo in terms of dataset usage, the data quality as such, or specific aspects of a given GNN architecture.
For the first category, we find works that criticize a lack of suitable baseline comparison partners~\citep{Cai18a}, issues with hyperparameter tuning and model selection~\citep{Tonshoff2023, Errica20a, Zhao2020}, or problems with data--model mismatch~\citep{Chen2019}.
The second category comprises works that highlight the use of unsuitable datasets~\citep{Li_2024, Bechler-Speicher23a}, a problem that is also of relevance to other areas in machine learning~\citep{Ding2018, fenza_data_2021}.
The third category, dealing with the shortcomings of existing architectures, is of particular relevance to practitioners, since it either inspires new research directions or provides practical guidance concerning which models to use in a specific application.

Among the different shortcomings, issues inherent to the message-passing paradigm are well-studied~\citep{alon_bottleneck_2021, di_giovanni_over-squashing_2023, yang_new_2022}, often leading to improved architectures~\citep{michel_path_2023, chen_sa-mlp_2022, han_structural_2023}, with a recent trend being the development of methods that obviate message passing~\citep{fan_structured_2020}.
Beyond our brief categorization, we also observe interest in general GNN ``explainability'' strategies~\citep{agarwal_evaluating_2023, wang_gnninterpreter_2024, bonabi_mobaraki_demonstration_2023, toyokuni_structural_2023, faber_when_2021, li_explainability_2022, rathee_bagel_2022, xie_task-agnostic_2022}.
However one has to bear in mind that such approaches are often tightly coupled to a \emph{specific} task and a \emph{specific} architecture, which, while valuable, cannot help in overcoming dataset deficiencies.

\end{document}